%%%%%%%% ICML 2023 EXAMPLE LATEX SUBMISSION FILE %%%%%%%%%%%%%%%%%

\documentclass{article}

% Recommended, but optional, packages for figures and better typesetting:
\usepackage{microtype}
\usepackage{graphicx}
\usepackage{subfigure}
\usepackage{booktabs} % for professional tables

% hyperref makes hyperlinks in the resulting PDF.
% If your build breaks (sometimes temporarily if a hyperlink spans a page)
% please comment out the following usepackage line and replace
% \usepackage{icml2023} with \usepackage[nohyperref]{icml2023} above.
\usepackage{hyperref}

% Attempt to make hyperref and algorithmic work together better:

% Use the following line for the initial blind version submitted for review:
%\usepackage{icml2023}

% If accepted, instead use the following line for the camera-ready submission:
\usepackage[accepted]{icml2023}

% For theorems and such
\usepackage{amsmath}
\usepackage{amssymb}
\usepackage{mathtools}
\usepackage{amsthm}

% if you use cleveref..
\usepackage[capitalize,noabbrev]{cleveref}

%%%%%%%%%%%%%%%%%%%%%%%%%%%%%%%%
% THEOREMS
%%%%%%%%%%%%%%%%%%%%%%%%%%%%%%%%
\theoremstyle{plain}
%\newtheorem{theorem}{Theorem}[section]
%\newtheorem{proposition}[theorem]{Proposition}
%\newtheorem{lemma}[theorem]{Lemma}
%\newtheorem{corollary}[theorem]{Corollary}
%\theoremstyle{definition}
%\newtheorem{definition}[theorem]{Definition}
%\newtheorem{assumption}[theorem]{Assumption}
%\theoremstyle{remark}
%\newtheorem{remark}[theorem]{Remark}

% Todonotes is useful during development; simply uncomment the next line
%    and comment out the line below the next line to turn off comments
%\usepackage[disable,textsize=tiny]{todonotes}
\usepackage[textsize=tiny]{todonotes}

% ADDED
\usepackage{makecell}
\newtheorem{theorem}{Theorem}
\newtheorem{cor}{Corollary}

\newtheorem{lemma}{Lemma}
\newtheorem{prop}{Proposition}
\newtheorem{ass}{Assumption}
\newtheorem{definition}[cor]{Definition}
\usepackage{multirow}
\newcommand{\Norm}[1]{\left\|#1\right\|}
\usepackage{paralist}

\def \O {\mathcal{O}}

\def \v {\mathbf{v}}

\def \x {\mathbf{x}}

\def \E {\mathrm{E}}

\def \x {\mathbf{x}}

\def \1 {\mathbf{1}}

\def \E {\mathbb{E}}

\def \u {\mathbf{u}}
\def \v {\mathbf{v}}

\def \x {\mathbf{x}}

\def \z {\mathbf{z}}

\def \P {\mathcal{P}}
\def \J {\mathcal{J}}
\def \LL {\mathcal{L}}

\def \O {\mathcal{O}}

\def \z {\mathbf{z}}

\def \u {\mathbf{u}}

\def \LL {\mathcal{L}}

\def \P {\mathcal{P}}

\def \E {\mathrm{E}}
\def \x {\mathbf{x}}

\def \D {\mathcal{D}}
\def \z {\mathbf{z}}
\def \u {\mathbf{u}}

\def \v {\mathbf{v}}

\def \P {\mathbb{P}}

\def \E {\mathbb{E}}

%%%%%%%%%%%%%%%%%%%%%%%%%%%%%%%%%%%%%%%%%%%%%%
% The \icmltitle you define below is probably too long as a header.
% Therefore, a short form for the running title is supplied here:
\icmltitlerunning{Learning Unnormalized Statistical Models via Compositional Optimization}

\begin{document}

\twocolumn[
\icmltitle{Learning Unnormalized Statistical Models via Compositional Optimization}

% It is OKAY to include author information, even for blind
% submissions: the style file will automatically remove it for you
% unless you've provided the [accepted] option to the icml2023
% package.

% List of affiliations: The first argument should be a (short)
% identifier you will use later to specify author affiliations
% Academic affiliations should list Department, University, City, Region, Country
% Industry affiliations should list Company, City, Region, Country

% You can specify symbols, otherwise they are numbered in order.
% Ideally, you should not use this facility. Affiliations will be numbered
% in order of appearance and this is the preferred way.
\icmlsetsymbol{equal}{*}

\begin{icmlauthorlist}
\icmlauthor{Wei Jiang}{nju}
\icmlauthor{Jiayu Qin}{sch}
\icmlauthor{Lingyu Wu}{nju}
\icmlauthor{Changyou Chen}{sch}
\icmlauthor{Tianbao Yang}{comp}
\icmlauthor{Lijun Zhang}{nju}
\end{icmlauthorlist}

\icmlaffiliation{nju}{National Key Laboratory for Novel Software Technology, Nanjing University, Nanjing, China}
\icmlaffiliation{comp}{Department of Computer Science and Engineering, Texas A\&M University, College Station, USA}
\icmlaffiliation{sch}{ Department of Computer Science and Engineering, University at Buffalo, New York, USA}

\icmlcorrespondingauthor{Changyou Chen}{changyou@buffalo.edu}
\icmlcorrespondingauthor{Tianbao Yang}{tianbao-yang@tamu.edu}
\icmlcorrespondingauthor{Lijun Zhang}{zhanglj@lamda.nju.edu.cn}

% You may provide any keywords that you
% find helpful for describing your paper; these are used to populate
% the "keywords" metadata in the PDF but will not be shown in the document
\icmlkeywords{noise-contrastive estimation, stochastic compositional optimization, unnormalized statistical models}

\vskip 0.3in
]

% this must go after the closing bracket ] following \twocolumn[ ...

% This command actually creates the footnote in the first column
% listing the affiliations and the copyright notice.
% The command takes one argument, which is text to display at the start of the footnote.
% The \icmlEqualContribution command is standard text for equal contribution.
% Remove it (just {}) if you do not need this facility.

\printAffiliationsAndNotice{}  % leave blank if no need to mention equal contribution
%\printAffiliationsAndNotice{\icmlEqualContribution} % otherwise use the standard text.

\begin{abstract}
Learning unnormalized statistical models (e.g., energy-based models) is computationally challenging due to the complexity of handling the partition function. To eschew this complexity,  noise-contrastive estimation~(NCE) has been proposed by formulating the objective as the logistic loss of the real data and the artificial noise. However, as found in previous works, NCE may perform poorly in many tasks due to its flat loss landscape and slow convergence. In this paper, we study {\it a direct approach for optimizing the negative
log-likelihood} of unnormalized models from the perspective of compositional optimization. To tackle the partition function,  a noise distribution is introduced such that the log partition function can be written as a compositional function whose inner function can be estimated with stochastic samples. Hence, the objective can be optimized by stochastic compositional optimization algorithms. Despite being a simple method, we demonstrate that it is more favorable than NCE by (1) establishing a fast convergence rate  and quantifying its dependence on the noise distribution through the variance of stochastic estimators; 
(2) developing better results for one-dimensional Gaussian mean estimation by showing our objective has a much favorable loss landscape and hence our method enjoys faster convergence;
(3) demonstrating better performance on multiple applications, including density estimation, out-of-distribution detection, and real image generation. 
\end{abstract}

\section{Introduction}
We investigate the problem of learning unnormalized statistical models. Suppose we observe a set of training samples $\D_n=\{\x_1, \ldots, \x_n\}$ from an unknown probability density function~(pdf) $p_{\text{data}}(\x)$ and  estimate this data pdf by% the model:
\begin{align}\label{eqn:1}
p(\x;\theta) = \frac{p_{0}(\x;\theta)}{\int p_{0}(\x;\theta) d\x},
\end{align}
where $p_{0}(\x;\theta)$ is defined as an unnormalized model, and $\theta$ denotes the parameter that will be learnt to best fit the data.
The term $\int p_{0}(\x;\theta) d\x$ in equation~(\ref{eqn:1}) is called partition function, which is used to ensure the final model is normalized, i.e., $\int p(\x;\theta) d\x = 1$. By introducing the partition function, we can use more flexible structures to represent  $p_{0}(\x;\theta)$. Specifically, if we set $p_{0}(\x;\theta) =  e^{f_0(\x;\theta)}$, the above formulation~(\ref{eqn:1}) reduces to the well-known energy-based model~(EBM)~\cite{LeCun2006ATO}:
\begin{align*}
p(\x;\theta) = \frac{e^{f_0(\x;\theta)}}{\int e^{f_0(\x;\theta)}d\x},
\end{align*}
which enjoys wide applications in machine learning~\cite{NEURIPS2019_378a063b,pmlr-v119-yu20g,Grathwohl2020Your}.

To find the best parameter $\theta$ for the given model $p_0(\x;\theta)$, we can directly maximize the log-likelihood on the observed training data, i.e., optimizing the objective $\LL(\theta)$, where
\begin{align}\label{obj:mle}
\LL(\theta) = - \frac{1}{n}\sum_{i=1}^n \left[ \log p_{0}(\x_i;\theta)\right] + \log \int p_{0}(\x;\theta) d\x.  
\end{align}
However, the challenge is that the log partition function $\log \int p_{0}(\x;\theta) d\x$ and its gradient are difficult to calculate exactly. To remedy this issue, prior works~\cite{Hinton2002, Tieleman2008TrainingRB,Nijkamp2019OnTA} resort to Markov chain Monte Carlo~(MCMC) sampling~\cite{MCMC}. Considering the gradient of loss $\LL(\theta)$, we have:
\begin{align*}
\nabla \LL(\theta) = -\frac{1}{n}\sum_{i=1}^n \left[\frac{\nabla p_{0}(\x_i;\theta)}{p_{0}(\x_i;\theta)} \right] + \E_{\x \sim p_{\theta}} \left[\frac{\nabla p_{0}(\x;\theta)}{p_{0}(\x;\theta)}\right],
\end{align*}
where $p_{\theta}$ denotes the pdf $p(\x,\theta)$. To compute the second term, previous literature applies a number of MCMC steps to sample from $p_{\theta}$, and then calculates the estimated gradient. However, MCMC sampling is slow and unstable during training~\cite{pmlr-v119-grathwohl20a, quiqi2020, geng2021bounds}, partly because approximate samples are obtained with only a finite number of steps.  As pointed out by \citet{NEURIPS2019_2bc8ae25} and \citet{pmlr-v119-grathwohl20a}, estimating with finite MCMC steps will produce biased estimations, leading to optimizing a different objective other than the original MLE loss.

 To eschew the complexity of estimating the gradient of log partition function, noise-contrastive estimation~(NCE)~\cite{JMLR:v13:gutmann12a} has been proposed, which transforms the original problem into a classification task. Specifically, NCE introduces a noise distribution $q(\x)$, and then optimizes the extended model parameter $\tau=(\theta, \alpha)$ by minimizing the following loss:
\begin{align*}
\J(\tau) = -\E_{\x \sim p_\text{data}}\log h(\x, \tau) -\E_{\x \sim q} \log (1 -h(\x, \tau)),
\end{align*}
where $h(\x, \tau) = 1/(1+e^{-( \log p_0(\x, \theta)  -\log 
 q(\x) - \alpha)})$, and the parameter $\alpha$ is used to estimate the log partition function, i.e., $\alpha = \log \int p_{0}(\x;\theta) d\x$. This new objective can be interpreted as the logistic loss to distinguish between the noise and the real data. In practice, the first term is estimated by $n$ observed training examples and the second term is evaluated by $m=\nu n$ noise samples where $\nu\geq 1$.  

However, as pointed out in many works~\cite{NIPS2014_5ca3e9b1, quiqi2020}, if the noise distribution $q$ is very different from the data distribution $p_\text{data}$, this classification problem would be too easy to solve and the learned model may fail to capture adequate information about the real data distribution. In particular, \citet{liu2022analyzing} demonstrate that when $q$ and $p_\text{data}$ are not close enough, the loss $\J(\tau)$ is extremely flat near the optimum, leading to the slow convergence rate of NCE method.

In this paper, we investigate an alternative approach for  directly optimizing the MLE objective by converting it to a stochastic compositional optimization (SCO) problem. To deal with the intractable partition function, we introduce a noise distribution $q(\x)$, and then convert the log partition function $\log \int p_{0}(\x;\theta) d\x$ into $\log \E_{\x \sim q} \left[ \frac{p_{0}(\x;\theta)}{q(\x)}\right]$. 
Then, the objective function~(\ref{obj:mle}) becomes:
\begin{align*}
 \LL(\theta) = & - \frac{1}{n}\sum_{i=1}^n \left[ \log p_{0}(\x;\theta) \right]+  \log \E_{\x \sim q} \left[ \frac{p_{0}(\x;\theta)}{q(\x)}  \right],
\end{align*}
which can be written as a two-level SCO problem. Since SCO has been studied extensively, state-of-the-art algorithms can be employed to solve the above problem. However, besides its simplicity,  a major question remains: \textit{What are the advantages of this approach compared with NCE and MCMC-based methods for learning unnormalized models?} Our main contributions are to demonstrate the following advantages by theoretical analysis and empirical studies: 
\begin{compactenum}
\item We prove that our single-loop algorithm converges asymptotically to an optimal solution under the Polyak-\L ojasiewicz~(PL) condition, and  establish a fast convergence rate in the order of $O(1/\epsilon)$ for finding an $\epsilon$-optimal solution. In contrast, NCE and MCMC-based approaches do not provide such guarantees.

\vspace*{0.05in}\item We prove through one-dimensional Gaussian mean estimation that the MLE objective function has a better loss landscape than the NCE objective, and establish a faster convergence rate of our algorithm than a state-of-the-art NCE-based approach~\cite{liu2022analyzing} for this task. 

\vspace*{0.05in}\item We illustrate the better performance of our method on different tasks, namely, density estimation, out-of-distribution detection,  and real image generation. For the last task, we show that the choice of the noise distribution has a significant impact on the quality of generated data in line with our theoretical analysis.  
\end{compactenum}

\section{Related Work}
This section reviews related work on noise contrastive estimation, and other methods for learning unnormalized models, as well as stochastic compositional optimization.
\subsection{Noise-contrastive Estimation}
Noise-contrastive estimation~(NCE) is first proposed by ~\citet{pmlr-v9-gutmann10a, JMLR:v13:gutmann12a}, and gains its popularity in machine learning applications quickly~\cite{NIPS2013_db2b4182,hjelm2018learning,pmlr-v119-henaff20a,Multiview,Kong2020A}. The basic idea is to introduce a noise distribution, and then distinguish it from the data distribution by a logistic loss. As pointed out in previous works~\cite{NIPS2014_5ca3e9b1, quiqi2020, liu2022analyzing}, the choice of the noise distribution is crucial to the success of NCE, and many methods have been proposed to tune the noise automatically. For example, \citet{bose-etal-2018-adversarial} utilize a learned adversarial distribution as the noise, and develop a method named Adversarial Contrastive Estimation. At the same time, Conditional NCE is introduced by~\citet{pmlr-v80-ceylan18a}, which  generates the noise samples with the help of the observed data. Afterward, \citet{quiqi2020} propose Flow Contrastive Estimation, using a flow model to learn the noise distribution by joint training. However, these methods usually introduce extra computation, and the adversarial training of the noise is slow and complex. Instead of designing more complicated noise, \citet{liu2022analyzing} recently propose a new objective named eNCE, which replaces the log loss in NCE as the exponential loss, and enjoys  better results for exponential families. However, eNCE still suffers from the ill-behaved loss landscape, which is extremely flat near the optimum.

\subsection{Other Methods for Learning Unnormalized Models}
Except NCE and its variants discussed above, there are also many other approaches to solve unnormalized models. To bypass the partition function, score matching~\cite{JMLR:v6:hyvarinen05a} optimizes the squared distance between the gradient of the log density of the model and that of the observed data, in which the partition function would not appear. %Specifically, score matching optimizes the squared distance between the gradient of the log density of the model and that of the observed data. 
However, the output dimension of score matching is the same as the input, and training such a model is expensive and unstable, especially for high-dimensional data~\cite{NEURIPS2019_3001ef25, NEURIPS2020_de6b1cf3}. Existing works usually require denoising~\cite{Vincent2011} or slicing~\cite{song2019} techniques to train score matching methods.   

Besides NCE and score matching,  optimizing MLE objective with Markov chain Monte Carlo (MCMC) sampling is also widely used. One well-known method is Contrastive Divergence~(CD) introduced by \citet{Hinton2002}, which employs MCMC sampling for fixed steps. To sample more efficiently, \citet{Tieleman2008TrainingRB} initializes the MCMC in each training step with the previous sampling results and names this method as persistent CD. Other variants of CD have also been proposed later, such as modified CD~\cite{8579052}, adversarial CD~\cite{HanNFHZW19}, etc. These techniques are also very popular in learning energy-based models~(EBM), which is a special case of unnormalized models. Specifically, \citet{add4} propose to learn EBM by using ConvNet as the energy function; then, \citet{add3} and \citet{add5} use a generator as a fast sampler to help EBM training. Note that the objective of the EBM becomes a modified contrastive divergence, and \citet{add7} and \citet{add6} are proposed as different variants of \citet{add5}, whose objectives are also a modified contrastive divergence. However, the main drawback of these methods is that MCMC sampling is usually time-consuming and unstable during training~\cite{pmlr-v119-grathwohl20a, quiqi2020, geng2021bounds}.

\subsection{Stochastic Compositional Optimization}
Stochastic Compositional Optimization~(SCO) has been investigated extensively in the literature. The objective of a two-level SCO is given by $\mathbb E_{\xi_1}[f_{\xi_1}(\mathbb E_{\xi_2}[g_{\xi_2}(\theta)])]$, where $\xi_1$ and $\xi_2$ are random variables. To solve this problem, \citet{wang2017stochastic} develop stochastic compositional gradient descent (SCGD), which achieves a complexity of $\mathcal{O}\left(\epsilon^{-7}\right)$,  $\mathcal{O}\left(\epsilon^{-3.5}\right)$, and $\mathcal{O}\left(\mu^{-14/4} \epsilon^{-5/4}\right)$ for non-convex, convex, and $\mu$-strongly convex functions, respectively. These complexities are further improved to $\mathcal{O}\left(\epsilon^{-4.5}\right)$, $\mathcal{O}\left( \epsilon^{-2}\right)$ and $\mathcal{O}\left(\epsilon^{-1}\right)$ in a subsequent work \citep{DBLP:journals/jmlr/WangLF17}.

To obtain a better rate, \citet{Ghadimi2020AST} propose a method named averaged stochastic approximation (NASA), which uses momentum-update %exponential moving average 
to estimate the inner function and the gradient, and attains a complexity of  $\mathcal{O}\left(\epsilon^{-4}\right)$ for non-convex objectives. 
Recently, with the popularity of variance reduction methods such as SARAH~\citep{arxiv.1703.00102}, SPIDER~\cite{Fang2018SPIDERNN} and STORM~\citep{cutkosky2019momentum}, 
this complexity is further improved to $\mathcal{O}\left(\epsilon^{-3}\right)$, by estimating the inner function and the gradient with variance-reduction techniques~\citep{Zhang2019ASC,chen2021solving,qi2021online}.%,

\section{The Proposed Method}
In this section, we propose to learn unnormalized models directly by Maximum likelihood Estimation via Compositional Optimization~(MECO). First, it is easy to show that, with a noise distribution $q(\x)$, the MLE objective function~(\ref{obj:mle}) is equivalent to:
\begin{align*}
 \LL(\theta) = & - \frac{1}{n}\sum_{i=1}^n \left[ \log p_{0}(\x_i;\theta) \right]+  \log \E_{\x \sim q} \left[ \frac{p_{0}(\x;\theta)}{q(\x)}  \right].
\end{align*}
However, optimizing this objective is nontrivial, because of the nested structure of the second term. Specifically, we can not acquire its unbiased estimation by sampling from $q(\x)$, since the expectation cannot be moved out of the log function, i.e., $\E_{\x \sim q} \left[ \log   \frac{p_{0}(\x;\theta)}{q(\x)}  \right] \neq \log \E_{\x \sim q} \left[ \frac{p_{0}(\x;\theta)}{q(\x)}  \right]$. Due to similar reasons, we also can not obtain an unbiased estimation of its gradient, which is the main obstacle to deriving an algorithm with convergence guarantees.

To solve this difficulty, we can treat the $ \log \E_{\x \sim q} \left[ \frac{p_{0}(\x;\theta)}{q(\x)}  \right]$ as a  compositional function, where $\log$ is the outer function and $\E_{\x \sim q} \left[ \frac{p_{0}(\x;\theta)}{q(\x)}  \right]$ is the inner function, which can be unbiased-estimated by sampling form $q(\x)$.
Inspired by NASA~\cite{Ghadimi2020AST} for solving stochastic compositional optimization~(SCO), we use variance reduced estimators to approximate the inner function and the gradient, so that the estimation errors can be reduced over time. Specifically, considering the gradient of the objective function w.r.t. parameter $\theta$, we have:
\begin{align*}
\nabla \LL(\theta) = & - \frac{1}{n}\sum_{i=1}^n \left[ \frac{\nabla p_{0}(\x;\theta)}{p_{0}(\x;\theta)}\right]+  \frac{\E_{\x \sim q} \left[  \frac{\nabla p_{0}(\x;\theta)}{q(\x)}  \right]}{\E_{\x \sim q} \left[ \frac{p_{0}(\x;\theta)}{q(\x)}  \right]}.
\end{align*}
To estimate this gradient, in each training step $t$, we first draw sample $\z_t$ from the training set $\{\x_1,\cdots,\x_n\}$, and sample $\widetilde{\z}_t$ from the noise distribution $q(\x)$. Then, we estimate $\E_{\x \sim q} \left[ \frac{p_{0}(\x;\theta)}{q(\x)}  \right]$ by a function value estimator $\u_t$ in the style of momentum update:
\begin{align}\label{u}
    \u_{t} = (1-\gamma_t) \u_{t-1} + \gamma_t  \frac{p_{0}(\widetilde{\z}_t;\theta_t)}{q(\widetilde{\z}_t)}.
\end{align}
When evaluating the gradient, we would use $\u_t$ to approximate the term $\E_{\x \sim q} \left[ \frac{p_{0}(\x;\theta)}{q(\x)}  \right]$ in the denominator. Similarly, we employ a gradient estimator $\v_t$ to track the overall gradient $\nabla \LL(\theta)$ by another momentum update:
\begin{align}
\begin{split} \label{v}
    \v_{t} = &(1-\beta_t) \v_{t-1} \\
    & + \beta_t \left(-\frac{\nabla p_{0}(\z_t;\theta_t)}{p_{0}(\z_t;\theta_t)}+  \frac{1}{\u_t} \frac{\nabla p_{0}(\widetilde{\z}_t;\theta_t)}{q(\widetilde{\z}_t)} \right).
\end{split}
\end{align}
Although the estimators $\u_t$ and $\v_t$ are still biased, it can be proved that the estimation error is reduced gradually. After obtaining the gradient estimator $\v_t$, we use it to update the parameter $\theta_t$ in the style of SGD. The whole algorithm is presented in Algorithm~\ref{alg:multi}. Note that in the first iteration, we can simply set the estimator $\u_1 =\frac{p_{0}(\widetilde{\z}_1;\theta_1)}{q(\widetilde{\z}_1)}$ and $\v_1 = -\frac{\nabla p_{0}(\z_1;\theta_1)}{p_{0}(\z_1;\theta_1)}+ \frac{1}{\u_1} \frac{\nabla p_{0}(\widetilde{\z}_1;\theta_1)}{q(\widetilde{\z}_1)}$. 
\begin{algorithm}[tb]
	\caption{MECO}
	\label{alg:multi}
	\begin{algorithmic}
	\STATE {\bfseries Input:} time step $T$, initial points $(\theta_1, \u_1, \v_1)$ \\
         \quad\quad\quad  sequence $\{ \eta_t,\gamma_t, \beta_t \}$
		\FOR{time step $t = 1$ {\bfseries to} $T$}
		\STATE Sampling $\z_t$ from $\{\x_1,\cdots,\x_n\}$ and $\widetilde{\z}_t$ from $q(\x)$
		\STATE Update estimator $\u_{t}$ according to equation~(\ref{u})
		\STATE Update estimator $\v_{t}$ according to equation~(\ref{v})
		\STATE Update the weight: $\theta_{t+1} = \theta_t - \eta_t \v_{t}$
		\ENDFOR
	\STATE Choose $\tau$ uniformly at random from $\{1, \ldots, T\}$
	\STATE Return $\theta_{\tau}$
	\end{algorithmic}
\end{algorithm}

\textbf{Difference from NASA:} The optimization method we used can be viewed as a modified version of NASA for SCO problems~\cite{Ghadimi2020AST}. Compared with the original NASA applied to constrained SCO, our method does not need to project the variable $\theta$ onto the feasible set and is thus simpler. Another {\bf big difference} from NASA lies in the improved rate we will derive for an objective satisfying the PL condition, which is missing in their original work. %Furthermore, our algorithm can be extended to support more complicated estimators, such as STORM~\cite{cutkosky2019momentum} used by \citet{qi2021online} to solve two-level SCO problems. 

\textbf{Difference from energy-based model~(EBM) training:} Derived from SCO, our algorithm has a key difference from the standard EBM training. By setting $p_0(\x;\theta) = e^{f_0(\x;\theta)}$, the unnormalized model converts to the EBM, and our gradient estimator $\v_t$ is written as:
\begin{align*}
(1-\beta_t)\v_{t-1} + \beta_t \left(-\nabla f(\z_t;\theta_t)+  \frac{e^{f (\widetilde{\z}_t;\theta_t)}}{  q(\widetilde{\z}_t)\u_t} \nabla f (\widetilde{\z}_t;\theta_t) \right).
\end{align*}
In contrast, EBM usually optimizes the objective function $\mathcal{L}^\prime (\theta) = - \mathbb{E}_{\z \sim p_\text{data}}\left[f(\z; \theta)\right] +  \mathbb{E}_{\widetilde{\z} \sim q}\left[f(\widetilde{\z}; \theta)\right]$, and its stochastic gradient is written as
$\nabla \mathcal{L}^\prime (\theta_t) = - \nabla f(\z_t; \theta_t) + \nabla f(\widetilde{\z}_t; \theta_t)$.
In this sense, our method can be viewed as introducing an adaptive weight $\frac{e^{f (\widetilde{\z}_t;\theta_t)}}{ q(\widetilde{\z}_t) \u_t} $  to the second term and applying SGD with momentum to update the model.

\section{Theoretical Analysis}
We first present the advantage of the MLE formulation, and then analyze the convergence rate of the proposed method, as well as its relationship with the noise distribution. Due to space limitations, all the proofs are deferred to the appendix.
\subsection{Advantages of the MLE Formulation}
Since we use MLE to optimize unnormalized models, our objective inherits several nice properties. Here, we analyze the behavior of the estimator $\widehat{\theta} = \arg \min_{\theta} \LL(\theta)$ for large sample sizes and assume that there exists an optimal solution $\theta^{*}$ such that $p(\x, {\theta^*})=p_{\text{data}}(\x)$. To compare different estimators, we introduce the definition of asymptotic relative efficiency~(ARE)~\cite{vaart_1998} below.

\begin{definition}
    For any two estimators $R$ and $S$ with
    \begin{align*}
        \sqrt{n}\left(R-\theta^{*}\right) \rightsquigarrow N\left(0, r^2\right), \
        \sqrt{n}\left(S-\theta^{*}\right) \rightsquigarrow N\left(0, s^2\right),
    \end{align*}
the ARE of $S$ to $R$ is defined by $ARE(S, R) =r^2/s^2$.
\end{definition}
\textbf{Remark:} From the definition, we know that $ARE(S, R) < 1$ indicates estimator $R$ is more efficient than estimator $S$.
\begin{theorem}\label{thm:1} According to the property of MLE~\cite{Wasserman}, the estimator $\widehat{\theta}$ enjoys the following guarantees:
\begin{compactenum}
    %\item $\LL(\theta)$ attains minimum when $p(\x;\theta) = p_\text{data}(\x)$.
    \item (Consistent) The estimator $\widehat{\theta}$ converges in probability to $\theta^{*}$, i.e., $\widehat{{\theta}} \stackrel{P}{\rightarrow} {\theta}^{*}$.
    \item (Asymptotically Normal) $\sqrt{n}\left(\widehat{\theta}-\theta^{*}\right) \rightsquigarrow N(0,{\widehat{\mathrm{se}}}^2)$, where $\widehat{\mathrm{se}}$ can be computed analytically. 
    \item (Asymptotically Optimal) 
    Denote that $\widetilde{\theta}$ is the output of any other estimator, then $A R E\left(\widetilde{\theta}, \widehat{\theta}\right) \leq 1$.
\end{compactenum}
\end{theorem}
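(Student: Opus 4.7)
The plan is to observe that $\LL(\theta)=-\frac{1}{n}\sum_{i=1}^n\log p(\x_i;\theta)$ is exactly the (negative) log-likelihood of the normalized family $p(\x;\theta)=p_0(\x;\theta)/Z(\theta)$ with $Z(\theta)=\int p_0(\x;\theta)d\x$, so $\widehat\theta$ is the textbook MLE and the three claims reduce to the classical consistency, asymptotic normality, and Cram\'er--Rao optimality theorems for maximum likelihood. I would impose the standard regularity conditions: identifiability, $\theta^*$ interior to the parameter space, $p_0(\x;\theta)$ twice continuously differentiable in $\theta$, enough domination so that differentiation under the integral in $Z(\theta)$ is valid, and a finite positive definite Fisher information $I(\theta^*)$.

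For consistency, by the strong law of large numbers, $-\LL(\theta)\to\E_{\x\sim p_\text{data}}[\log p(\x;\theta)]$ pointwise. Gibbs' inequality (nonnegativity of KL divergence), combined with the assumption $p(\cdot;\theta^*)=p_\text{data}$, shows this population objective is uniquely maximized at $\theta^*$. A standard M-estimation argument then gives uniform convergence on compact sets, well-separation of the argmax, and $\widehat\theta\xrightarrow{P}\theta^*$.

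For asymptotic normality, let $s(\x;\theta)=\nabla\log p(\x;\theta)$ denote the score. First-order optimality $\frac{1}{n}\sum_i s(\x_i;\widehat\theta)=0$ and a Taylor expansion around $\theta^*$ yield
\[
0 \;=\; \frac{1}{n}\sum_{i=1}^n s(\x_i;\theta^*) + H_n(\bar\theta)\,(\widehat\theta-\theta^*),
\]
for some $\bar\theta$ between $\widehat\theta$ and $\theta^*$, where $H_n$ is the empirical Hessian of $\log p$. The CLT (using $\E\,s(\x;\theta^*)=0$ and $\mathrm{Var}\,s=I(\theta^*)$) gives $\sqrt{n}\cdot\tfrac{1}{n}\sum_i s(\x_i;\theta^*)\rightsquigarrow N(0,I(\theta^*))$, and consistency together with the LLN on the Hessian gives $H_n(\bar\theta)\xrightarrow{P}-I(\theta^*)$. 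Slutsky's theorem then delivers $\sqrt{n}(\widehat\theta-\theta^*)\rightsquigarrow N(0,I(\theta^*)^{-1})$, which identifies $\widehat{\mathrm{se}}^2=I(\theta^*)^{-1}$; this is computable in closed form from the model via either the expected Hessian or the outer product of scores.

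For optimality, I would invoke the H\'ajek--Le Cam convolution theorem (equivalently, the asymptotic Cram\'er--Rao bound): any regular estimator $\widetilde\theta$ with $\sqrt{n}(\widetilde\theta-\theta^*)\rightsquigarrow N(0,s^2)$ must satisfy $s^2\ge I(\theta^*)^{-1}=\widehat{\mathrm{se}}^2$, hence $ARE(\widetilde\theta,\widehat\theta)=\widehat{\mathrm{se}}^2/s^2\le 1$. The main obstacle is not the high-level logic, which is entirely classical, but verifying the regularity conditions for the specific parametrization $p(\x;\theta)=p_0(\x;\theta)/Z(\theta)$: one needs $Z(\theta)$ twice differentiable with derivatives obtainable by exchanging $\nabla$ and $\int$, and $I(\theta^*)$ finite and positive definite. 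These should be stated explicitly as mild assumptions on the unnormalized model $p_0$.
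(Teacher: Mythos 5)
Your proposal is correct and follows essentially the same route as the paper: the paper's appendix likewise proves consistency via the reduction to $M_n(\theta)=\frac{1}{n}\sum_i\log\bigl(p(\x_i;\theta^*)/p(\x_i;\theta)\bigr)$, its law-of-large-numbers limit being the KL divergence $D(\theta^*,\theta)$, and the standard M-estimation argument under the same two conditions you name (uniform convergence and well-separation of the minimizer), while delegating asymptotic normality and optimality to the classical MLE theory in Wasserman. Your write-up merely fills in the cited Taylor-expansion/CLT and Cram\'er--Rao steps explicitly and is slightly more careful in flagging the regularity conditions on $Z(\theta)$ and $I(\theta^*)$ that the paper leaves implicit.
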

\textbf{Remark:}  The last property implies that the MLE objective has the smallest variance, indicating it is better than optimizing other objectives, e.g., the  NCE objective. 

\subsection{The Convergence of the Proposed Method}
Then, we analyze the convergence of  Algorithm~\ref{alg:multi}. First, we define the sample complexity to measure the convergence rate, which is widely used in stochastic optimization. 
\begin{definition}
The sample complexity is the number of samples needed to find a point satisfying $\E \left[\Norm{\nabla \LL(\theta)}\right] \leq \epsilon$ ($\epsilon$-stationary), or $\E \left[ \LL(\theta)-\inf_{\theta} \LL(\theta)\right] \leq \epsilon$ ($\epsilon$-optimal).
\end{definition}
For notation simplicity, we denote $g(\theta) = \E_{\x \sim q} \left[ \frac{p_{0}(\x;\theta)}{q(\x)} \right]$, $h(\theta) = - \frac{1}{n}\sum_{i=1}^n \left[ \log p_{0}(\x_i;\theta) \right]$, $f(\cdot)=\log(\cdot)$,  estimator $g(\theta; \widetilde{\z}) = \frac{p_{0}(\widetilde{\z};\theta)}{q(\widetilde{\z})}$ and $h(\theta; \z) = -\log p_{0}(\z;\theta)$, where $\widetilde{\z}$ and $\z$ are samples drawn from $q$ and $\D_n=\{\x_1,\cdots,\x_n\}$. 

Next, we make the following assumptions, which are commonly used in SCO problems~\citep{wang2016accelerating,Zhang2019ASC,Zhang2021MultiLevelCS,pmlr-v162-wang22ak,ICML:2022:Jiang}.
\begin{ass}\label{asm:1}  We assume that (i) function $f(\cdot)$, $g(\cdot)$ are  Lipchitz continuous and smooth with respect to their inputs, $h(\cdot)$ is a smooth function; (ii) $\LL$ is lower bounded by $\LL^{*}$.
\end{ass}
\begin{ass}\label{asm:2} There exist $\sigma_g, \zeta_g, \zeta_h$  such that 
\begin{align*}
\mathbb{E}_{\widetilde{\z}\sim q(\widetilde{\z})}\left[\left\|g(\theta; \widetilde{\z})-g(\mathbf{\theta})\right\|^{2}\right] \leq \sigma_g^{2},  \\
\mathbb{E}_{\widetilde{\z}\sim q(\widetilde{\z})}\left[\left\|\nabla g(\theta; \widetilde{\z}) - \nabla g(\mathbf{\theta})\right\|^{2}\right] \leq \zeta_g^{2}, \\
\mathbb{E}_{\z\sim\D_n}\left[\left\|\nabla {h}(\theta; \z)-\nabla h(\theta)\right\|^{2}\right] \leq \zeta_h^{2}.    
\end{align*}
\end{ass}
\textbf{Remark:} In Assumption~\ref{asm:1}, $f(\cdot)$ is Lipchitz and smooth in terms of its input when $p_0(\widetilde{\z};\theta)/q(\widetilde{\z}) \geq c$, where $c$ is a positive constant. Assumption~\ref{asm:2} assumes that the variances of estimating $g(\theta)$, $\nabla g(\theta)$ and $h(\theta)$ by sampling from the corresponding distributions are bounded.

We first derive an asymptotic convergence result, showing that Algorithm~\ref{alg:multi} can converge to a stationary point of $\LL(\theta)$. 
\begin{theorem} \label{thm:4}
Assume that the sequence of stepsizes satisfies
%\begin{align*}
 $\sum_{t=1}^{\infty} \eta_t = + \infty, \sum_{t=1}^{\infty} \eta_t^2 <  \infty$.
%\end{align*}
Then with probability 1,  accumulation point $(\theta^{\prime}$,$\u^{\prime}$,$\v^{\prime})$ of the sequence $\{\theta_t$,$\u_t$,$\v_t\}$ generated by Algorithm~\ref{alg:multi} satisfies: %the following conditions:
\begin{align*}
    \nabla \LL(\theta^{\prime}) = 0, \quad u^{\prime} = g(\theta^{\prime}), \quad \ \v^{\prime} = \nabla \LL(\theta^{\prime}).
\end{align*}
\end{theorem}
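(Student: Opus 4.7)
The plan is to build a Lyapunov function combining the objective value with the two tracking errors, establish an almost-supermartingale inequality, apply the Robbins--Siegmund theorem to obtain almost sure summability, and then pass to accumulation points by continuity. This is the standard route for NASA-style asymptotic analyses in SCO.

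First I would write out the recursions for the tracking errors. For the inner estimator, by the update rule~(\ref{u}),
\begin{align*}
\u_t - g(\theta_t) = (1-\gamma_t)(\u_{t-1} - g(\theta_{t-1})) + (1-\gamma_t)(g(\theta_{t-1})-g(\theta_t)) + \gamma_t(g(\theta_t;\widetilde\z_t) - g(\theta_t)),
\end{align*}
and an analogous identity holds for $\v_t - \nabla\LL(\theta_t)$ using~(\ref{v}). Taking conditional expectations and squaring, the stochastic term contributes a variance bounded by $\gamma_t^2\sigma_g^2$ (respectively involving $\zeta_g^2,\zeta_h^2$), while the drift term is controlled by Lipschitz smoothness of $g$ and $\nabla\LL$ combined with $\|\theta_t-\theta_{t-1}\| = \eta_{t-1}\|\v_{t-1}\|$, so it is $O(\eta_{t-1}^2\|\v_{t-1}\|^2)$. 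A technical step here is using the remark after Assumption~\ref{asm:1}: since $f=\log$ is Lipschitz only where its argument is bounded below, I would first verify (or assume inductively) that $\u_t \geq c > 0$ stays bounded away from zero so that $1/\u_t$ is well defined and Lipschitz in $\u_t$; this is what makes $\v_t$ a valid gradient surrogate.

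Next I would introduce the Lyapunov function
\begin{align*}
\Phi_t = \LL(\theta_t) - \LL^* + A\,\|\u_t - g(\theta_t)\|^2 + B\,\|\v_t - \nabla\LL(\theta_t)\|^2,
\end{align*}
for constants $A,B>0$ chosen large enough to absorb cross terms. Using the descent lemma $\LL(\theta_{t+1})\le \LL(\theta_t) - \eta_t\langle\nabla\LL(\theta_t),\v_t\rangle + \tfrac{L}{2}\eta_t^2\|\v_t\|^2$ together with the identity $-\langle\nabla\LL(\theta_t),\v_t\rangle = -\|\nabla\LL(\theta_t)\|^2 + \langle\nabla\LL(\theta_t),\nabla\LL(\theta_t)-\v_t\rangle$, Young's inequality turns the mismatch term into $\|\v_t-\nabla\LL(\theta_t)\|^2$, which is exactly what the $B$-term tracks. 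Choosing $\gamma_t, \beta_t$ proportional to $\eta_t$ (the standard NASA scaling) and plugging in the tracking recursions yields, conditional on $\F_t$,
\begin{align*}
\E[\Phi_{t+1}\mid\F_t] \le \Phi_t - c\,\eta_t\,\|\nabla\LL(\theta_t)\|^2 - c\,\eta_t\bigl(\|\u_t-g(\theta_t)\|^2 + \|\v_t-\nabla\LL(\theta_t)\|^2\bigr) + C\,\eta_t^2,
\end{align*}
for some $c,C>0$. The Robbins--Siegmund supermartingale theorem then gives, with probability one, that $\Phi_t$ converges to a finite limit and that $\sum_t \eta_t\bigl(\|\nabla\LL(\theta_t)\|^2 + \|\u_t - g(\theta_t)\|^2 + \|\v_t-\nabla\LL(\theta_t)\|^2\bigr) < \infty$.

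Finally, I would convert this into the accumulation-point statement. Because $\sum_t \eta_t=\infty$, the summability forces $\liminf_t$ of each of the three quantities to be zero; to get the full $\lim=0$ along any convergent subsequence I would exploit the Lipschitz continuity of $\nabla\LL$, $g$ and of the tracking recursions, showing that $\|\nabla\LL(\theta_t)\|$, $\|\u_t-g(\theta_t)\|$ and $\|\v_t-\nabla\LL(\theta_t)\|$ cannot oscillate (a standard ``gap argument'': if they exceeded $\epsilon$ infinitely often they would have to cross $\epsilon/2$ infinitely often, contradicting the weighted summability together with $\eta_t\to 0$). Hence along any subsequence $(\theta_{t_k},\u_{t_k},\v_{t_k})\to(\theta',\u',\v')$, continuity of $g$ and $\nabla\LL$ yields $\u' = g(\theta')$, $\v' = \nabla\LL(\theta')$, and $\nabla\LL(\theta') = 0$, which is the claim. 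The main obstacle I anticipate is the $1/\u_t$ factor inside $\v_t$: the whole argument hinges on uniformly lower-bounding $\u_t$ away from zero so that the effective Lipschitz/smoothness constants entering the Lyapunov analysis remain finite; I would handle this by combining Assumption~\ref{asm:1}'s condition $p_0/q \ge c$ with the convex combination structure of the momentum update~(\ref{u}), which preserves lower bounds inductively.
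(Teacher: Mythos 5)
Your proposal is correct and follows essentially the same route as the paper's own proof: the same tracking-error recursions for $\u_t$ and $\v_t$, the same descent lemma for $\LL$, a weighted combination of the three inequalities with $\gamma_t,\beta_t\propto\eta_t$, and the step-size conditions $\sum_t\eta_t=\infty$, $\sum_t\eta_t^2<\infty$ to force the weighted sums to stay finite. If anything you are more explicit than the paper at the two places it glosses over --- the paper simply cites the NASA analysis for the passage from weighted summability to the accumulation-point conclusion (where you invoke Robbins--Siegmund and a gap argument), and it buries the requirement that $1/\u_t$ stay bounded in the remark after Assumption~\ref{asm:1} (where you note the convex-combination structure of the $\u_t$ update preserves the lower bound).
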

Next, we present the non-asymptotic convergence result. 
\begin{theorem} \label{thm:2}
For non-convex function $\LL(\cdot)$, by setting $\eta_t = \mathcal{O}(\epsilon^2)$, $\beta_t = \gamma_t = \mathcal{O}(\epsilon^2)$, Algorithm~\ref{alg:multi} finds an $\epsilon$-stationary point in $T = \mathcal{O}\left(\max\left\{\frac{1}{\epsilon^2},\frac{(\sigma_g^2+\zeta_g^2+\zeta_h^2)}{\epsilon^{4}}\right\}\right)$  iterations.
\end{theorem}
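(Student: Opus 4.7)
The plan is to prove Theorem~\ref{thm:2} via the standard Lyapunov-function analysis used for single-loop stochastic compositional algorithms in the NASA family, adapted to the unconstrained unnormalized-model setting here. The analysis will simultaneously track (i) the objective value $\LL(\theta_t)$, (ii) the tracking error $\|\u_t - g(\theta_t)\|^2$ of the inner-function estimator, and (iii) the tracking error $\|\v_t - \nabla \LL(\theta_t)\|^2$ of the gradient estimator, and combine them in a single potential.

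First, I would use the smoothness of $\LL$ (which follows from Assumption~\ref{asm:1} together with $p_0(\widetilde{\z};\theta)/q(\widetilde{\z})\geq c>0$, ensuring the outer $\log$ is Lipschitz and smooth on the relevant range) to get the descent-type inequality
\[
\LL(\theta_{t+1}) \leq \LL(\theta_t) - \eta_t \langle \nabla \LL(\theta_t), \v_t\rangle + \tfrac{L \eta_t^2}{2}\|\v_t\|^2.
\]
Splitting $\langle \nabla \LL(\theta_t),\v_t\rangle = \|\nabla \LL(\theta_t)\|^2 + \langle \nabla \LL(\theta_t),\v_t-\nabla\LL(\theta_t)\rangle$ and applying Young's inequality yields a bound on $\eta_t\|\nabla\LL(\theta_t)\|^2$ in terms of the one-step objective decrease, the gradient tracking error, and $\eta_t^2\|\v_t\|^2$.

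Next, for the inner-function estimator $\u_t$ in~\eqref{u}, I would subtract $g(\theta_{t+1})$ from both sides, add and subtract $g(\theta_t)$, use Assumption~\ref{asm:2} for the one-sample variance of $g(\theta;\widetilde{\z})$, and apply the smoothness of $g$ to convert the difference $g(\theta_{t+1})-g(\theta_t)$ into a quantity controlled by $\|\theta_{t+1}-\theta_t\|^2 = \eta_t^2\|\v_t\|^2$. This produces a recursion of the form
\[
\E\|\u_{t+1}-g(\theta_{t+1})\|^2 \leq (1-\gamma_t)\E\|\u_t-g(\theta_t)\|^2 + C_1 \gamma_t^2\sigma_g^2 + \tfrac{C_2\eta_t^2}{\gamma_t}\E\|\v_t\|^2 .
\]
A parallel argument for $\v_t$ in~\eqref{v} yields a recursion bounded by $(1-\beta_t)\E\|\v_t-\nabla\LL(\theta_t)\|^2 + C_3\beta_t^2(\zeta_g^2+\zeta_h^2) + C_4\beta_t^2\E\|\u_t-g(\theta_t)\|^2 + C_5\eta_t^2/\beta_t \cdot \E\|\v_t\|^2$; the coupling term arises because the gradient estimator uses $1/\u_t$ rather than $1/g(\theta_t)$, and a mean-value argument together with the lower bound $\u_t\geq c>0$ converts $\|1/\u_t - 1/g(\theta_t)\|^2$ into a constant multiple of $\|\u_t-g(\theta_t)\|^2$.

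I would then combine the three inequalities into a Lyapunov function
\[
\Phi_t = \LL(\theta_t)-\LL^* + a\|\u_t-g(\theta_t)\|^2 + b\|\v_t - \nabla \LL(\theta_t)\|^2,
\]
choosing $a,b$ so that the cross terms cancel and one obtains a recursion of the form $\E[\Phi_{t+1}-\Phi_t] \leq -c_0\eta_t\E\|\nabla\LL(\theta_t)\|^2 + c_1\gamma_t^2\sigma_g^2 + c_2\beta_t^2(\zeta_g^2+\zeta_h^2)$. Telescoping from $t=1$ to $T$, dividing by $\sum_t \eta_t$, and plugging in $\eta_t=\gamma_t=\beta_t=\Theta(\epsilon^2)$ makes the dominant error $(\sigma_g^2+\zeta_g^2+\zeta_h^2)/(T\epsilon^2)$, so $T=\mathcal{O}(\max\{\epsilon^{-2},(\sigma_g^2+\zeta_g^2+\zeta_h^2)\epsilon^{-4}\})$ suffices. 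The main technical obstacle is the coupled recursion between $\u_t$ and $\v_t$: the gradient-error bound depends on the function-error bound through the $1/\u_t$ factor, so the constants $a,b$ must be chosen carefully so that the Lyapunov inequality is actually contractive; this also requires verifying a uniform positive lower bound on $\u_t$ (inherited from the lower bound on $p_0/q$ and the convex combination structure of the momentum update) so that the mean-value step is legitimate.
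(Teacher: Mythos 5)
Your proposal follows essentially the same route as the paper: the same two momentum-error recursions for $\|\u_t-g(\theta_t)\|^2$ and $\|\v_t-\nabla\LL(\theta_t)\|^2$ (including the coupling term from $\nabla f(\u_t)$ versus $\nabla f(g(\theta_t))$), the same descent lemma, a weighted combination of the three inequalities (the paper sums each recursion and combines; you package it as a Lyapunov function $\Phi_t$, which is exactly what the paper does explicitly in the PL case), and the same constant step sizes $\eta=\beta=\gamma=\Theta(\epsilon^2)$ yielding $T=\mathcal{O}\bigl(\max\{\epsilon^{-2},(\sigma_g^2+\zeta_g^2+\zeta_h^2)\epsilon^{-4}\}\bigr)$. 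Your explicit remark that one must verify a uniform lower bound $\u_t\geq c$ so that the $1/\u_t$ (i.e., $\nabla f$) step is legitimate is a point the paper leaves implicit in Assumption~\ref{asm:1}, but it does not change the argument.
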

\textbf{Remark:} The analysis of above two theorems closely follows~\citet{Ghadimi2020AST}. 
%We can also replace the SGD update with  adaptive (Adam-style) learning rates and obtain similar guarantees using the techniques proposed by~\citet{guo2021stochastic}.  %The complexity is in the order of $\O(\epsilon^{-4})$, which is standard for the momentum-updated method~\cite{Ghadimi2020AST}. 

{\bf The main contribution of our analysis} is to show that by properly setting the hyper-parameters, Algorithm~\ref{alg:multi} can enjoy a faster convergence rate when the objective satisfies the Polyak-Łojasiewicz~(PL) condition~\citep{Karimi2016LinearCO}. We give the definition of the PL condition below.
\begin{definition}
$\LL(\theta)$ satisfies the $\mu$-PL condition if there exists $\mu > 0$ such that $2 \mu\left(\LL(\theta)-\LL^{*}\right) \leq\|\nabla \LL(\theta)\|^{2}$.
\end{definition}
{\bf Remark:} {PL condition has been shown to be satisfied for deep learning under over-parameterized networks by many prior works~\cite{pmlr-v97-allen-zhu19a,du2018gradient}. %In the next section, we also show this condition holds for the one-dimensional Gaussian mean estimation problem.  
Under the PL condition, a stationary point $\theta'$ becomes a global optimal solution of objective $\LL$. Thus, our algorithm asymptotically converges to an optimal solution $\widehat\theta=\arg\min\LL(\theta)$ according to Theorem~\ref{thm:4}. Next, we show the improved complexity under the PL condition as stated below.} 

%To obtain an improved complexity, we introduce the lemma below, which establishes a recursion of a merit function $\Gamma_t$. 
%\begin{lemma}\label{lem:2} Denote $\Gamma_t = \LL(\theta_t) - \LL^{*} + A \Norm{\v_t - \nabla \LL(\theta_t)}^2 + 36C^2L^2A \Norm{\u_t - g(\theta_t)}^2$, where $A= \min\{ 1, 1/\mu\}$ and L is a constant. By setting  $\gamma_{t+1} =\beta_{t+1} = \mathcal{O}\left(\max\{1,\mu\}\eta_t\right)$, 
%\begin{align*}
%    \Gamma_{t+1} \leq (1-\mu\eta_t) \Gamma_{t} +  \max\{1,\mu\} L_1 (\sigma_g^2 + \zeta_g^2 + \zeta_h^2)  \eta_t^{2}.
%\end{align*}
%\end{lemma}
%\textbf{Remark:} According to above lemma, by setting $1-\mu\eta_t =  \eta_t^2 / \eta_{t-1}^2$, we have:
%\begin{align*}
%    \Gamma_{T+1} &\leq \Gamma_1 {\eta_T^2}/{\eta_0^2}  +  \max\{1,\mu\} L_1 (\sigma_g^2 + \zeta_g^2 + \zeta_h^2) \eta_T^2 T \\
%    &\leq \frac{4\Gamma_1}{\eta_0^2\mu^2 T^2} + \max\{1,\mu\} L_1 (\sigma_g^2 + \zeta_g^2 + \zeta_h^2) \frac{4 }{ \mu^2 T} ,
%\end{align*}
%which leads to an improved complexity as stated below. 
\begin{theorem} \label{thm:3}
When the objective satisfies $\mu$-PL condition, by setting  $\gamma_{t+1} =\beta_{t+1} = \mathcal{O}\left(\max\{1,\mu\}\eta_t\right)$ and $1-\mu\eta_t =  \eta_t^2 / \eta_{t-1}^2$,  Algorithm~\ref{alg:multi} can find an $\epsilon$-optimal solution in $T= \mathcal{O}\left( \max\left\{\frac{1}{\mu \sqrt{\epsilon}}, \frac{(\sigma_g^2 +\zeta_g^2 + \zeta_h^2)}{\mu \epsilon}, \frac{(\sigma_g^2 + \zeta_g^2 + \zeta_h^2)}{\mu^2 \epsilon} \right\}\right)$ iterations.
\end{theorem}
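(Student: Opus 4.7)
The plan is a Lyapunov argument. I would define a potential of the form
\begin{align*}
\Phi_t = \E[\LL(\theta_t) - \LL^{*}] + \lambda_u \E\|\u_t - g(\theta_t)\|^2 + \lambda_v \E\|\v_t - \nabla \LL(\theta_t)\|^2
\end{align*}
for constants $\lambda_u,\lambda_v>0$ to be chosen, and aim for a one-step contraction
\begin{align*}
\Phi_{t+1} \leq (1-\mu\eta_t)\Phi_t + C\eta_t^2 (\sigma_g^2 + \zeta_g^2 + \zeta_h^2)
\end{align*}
that telescopes cleanly once the step-size identity $1-\mu\eta_t = \eta_t^2/\eta_{t-1}^2$ is invoked.

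First I would set up three sub-recursions. The descent lemma for $\LL$ (smooth by Assumption~\ref{asm:1}) combined with the PL condition gives
\begin{align*}
\E[\LL(\theta_{t+1}) - \LL^{*}] \leq (1 - \mu\eta_t + \mathcal{O}(\eta_t^2))\,\E[\LL(\theta_t)-\LL^{*}] + \mathcal{O}(\eta_t)\,\E\|\v_t - \nabla\LL(\theta_t)\|^2.
\end{align*}
Expanding $\u_{t+1} - g(\theta_{t+1})$ by adding and subtracting $g(\theta_t)$, using independence of $\widetilde{\z}_{t+1}$ from the history, and applying Young's inequality produces
\begin{align*}
\E\|\u_{t+1} - g(\theta_{t+1})\|^2 \leq (1-\gamma_{t+1})\,\E\|\u_t - g(\theta_t)\|^2 + \mathcal{O}(\gamma_{t+1}^2 \sigma_g^2) + \mathcal{O}(\eta_t^2/\gamma_{t+1})\,\E\|\v_t\|^2.
\end{align*}
The analogous expansion for $\v_{t+1}$ needs the Lipschitz bound $|1/\u_t - 1/g(\theta_t)| \leq C\|\u_t - g(\theta_t)\|$, which is available because Assumption~\ref{asm:1} keeps $p_0/q$, and hence $\u_t$, bounded below. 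It yields
\begin{align*}
\E\|\v_{t+1} - \nabla\LL(\theta_{t+1})\|^2 \leq (1-\beta_{t+1})\E\|\v_t - \nabla\LL(\theta_t)\|^2 + \mathcal{O}(\beta_{t+1}^2(\zeta_g^2+\zeta_h^2)) + \mathcal{O}(\eta_t^2/\beta_{t+1})\E\|\v_t\|^2 + \mathcal{O}(\beta_{t+1})\E\|\u_t - g(\theta_t)\|^2.
\end{align*}
The last term is exactly the $1/\u_t$ coupling and is the reason both tracking errors must appear in $\Phi_t$.

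Under the prescribed choice $\gamma_{t+1},\beta_{t+1} = \Theta(\max\{1,\mu\}\eta_t)$, each tracking factor dominates $1-\mu\eta_t$. The stray $\mathcal{O}(\eta_t^2/\gamma_{t+1})\E\|\v_t\|^2 = \mathcal{O}(\eta_t/\mu)\E\|\v_t\|^2$ would be absorbed via $\E\|\v_t\|^2 \leq 2\E\|\v_t - \nabla\LL(\theta_t)\|^2 + 4L\,\E[\LL(\theta_t)-\LL^{*}]$, reducing everything to an $\mathcal{O}(\eta_t^2)$ additive noise. Tuning $\lambda_u$ large enough to dominate the cross term $\mathcal{O}(\beta_{t+1})\E\|\u_t - g(\theta_t)\|^2$ in the $\v$-recursion, then tuning $\lambda_v$ large enough to dominate the $\mathcal{O}(\eta_t)$ prefactor in the descent step, closes the Lyapunov into the target contraction. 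Dividing $\Phi_{t+1} \leq (1-\mu\eta_t)\Phi_t + C\eta_t^2$ by $\eta_t^2$ and substituting the step-size identity gives $\Phi_{t+1}/\eta_t^2 \leq \Phi_t/\eta_{t-1}^2 + C$, which telescopes to $\Phi_{T+1} \leq \eta_T^2 \Phi_1/\eta_0^2 + C T \eta_T^2$. Verifying that $1-\mu\eta_t = \eta_t^2/\eta_{t-1}^2$ admits a solution with $\eta_T = \Theta(1/(\mu T))$ then converts this into $\Phi_{T+1} = \mathcal{O}(1/(\mu^2 T^2)) + \mathcal{O}(C/(\mu^2 T))$, and demanding each piece $\leq \epsilon$ recovers the three branches in the claimed complexity.

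The main obstacle I expect is the joint tuning of $\lambda_u$ and $\lambda_v$ in the presence of the $1/\u_t$ coupling: the $\v$-error recursion contains a $\|\u_t - g(\theta_t)\|^2$ term and the $\u$-error residual contains an $\E\|\v_t\|^2$ term, so the constants have to be chosen so this mutual feedback closes without amplifying the noise prefactor. A secondary technical point is verifying that the implicit step-size rule $1-\mu\eta_t = \eta_t^2/\eta_{t-1}^2$ admits a $\Theta(1/(\mu t))$ solution for which $\gamma_t,\beta_t$ stay in $(0,1)$ throughout, since the improved PL rate depends on this precise alignment between the function-value contraction and the momentum contractions.
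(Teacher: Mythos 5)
Your overall architecture is exactly the paper's: the same Lyapunov potential $\LL(\theta_t)-\LL^{*}+A\|\v_t-\nabla\LL(\theta_t)\|^2+B\|\u_t-g(\theta_t)\|^2$, the same three sub-recursions with the same coupling structure, the same weighting of $B$ against the $\beta_{t+1}\|\u_t-g(\theta_t)\|^2$ cross term, and the same division by $\eta_t^2$ plus the identity $1-\mu\eta_t=\eta_t^2/\eta_{t-1}^2$ to telescope into $\Phi_{T+1}\leq \eta_T^2\Phi_1/\eta_0^2+CT\eta_T^2$ with $\eta_T=\Theta(1/(\mu T))$. The case split hidden in $\max\{1,\mu\}$ corresponds to the paper's choice $A=1$ versus $A=1/\mu$.

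There is, however, one step that fails as written: your absorption of the stray $\mathcal{O}(\eta_t^2/\gamma_{t+1})\E\|\v_t\|^2$ terms via $\E\|\v_t\|^2\leq 2\E\|\v_t-\nabla\LL(\theta_t)\|^2+4L\,\E[\LL(\theta_t)-\LL^{*}]$. This does not reduce "everything to an $\mathcal{O}(\eta_t^2)$ additive noise"; it produces a \emph{multiplicative} term $+\mathcal{O}(L\eta_t)\,\E[\LL(\theta_t)-\LL^{*}]$, and since the PL constant always satisfies $\mu\leq L$, this cannot be dominated by the $-\mu\eta_t\,\E[\LL(\theta_t)-\LL^{*}]$ contraction. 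Forcing dominance by enlarging the constant in $\gamma_{t+1},\beta_{t+1}$ requires $\gamma_{t+1}=\Omega(L\eta_t/\mu)$ when $\mu<1$, which contradicts the stated choice $\gamma_{t+1}=\mathcal{O}(\max\{1,\mu\}\eta_t)$ and inflates the variance term $\beta_{t+1}^2\sigma_g^2$, degrading the final rate by an extra factor of $1/\mu^2$. The paper avoids this entirely by \emph{retaining} the negative term $-\tfrac{\eta_t}{4}\|\v_t\|^2$ from the descent lemma (its Lemma on the $L_0$-smooth update) inside the potential recursion and using it to cancel the positive $\mathcal{O}(\eta_t^2/\beta_{t+1})\|\v_t\|^2$ contributions from both tracking-error recursions directly; this is precisely what forces the condition $\beta_{t+1}\gtrsim A(L_0^2+C^4L^2)\eta_t$ and is the reason the prescribed parameter scaling suffices. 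Your descent-lemma display has already discarded that negative term, so you would need to restore it for the argument to close. The rest of the proposal, including the telescoping and the verification that the implicit step-size rule yields $\eta_T\leq 2/(\mu T)$ with $\beta_t,\gamma_t$ staying admissible, is correct.
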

%\textbf{Remark:} The complexity for $\mu$-PL condition is on the order of $\mathcal{O}(\mu^{-2}\epsilon^{-1})$ and 
%This result can also be used to obtain the $\mathcal{O}(\epsilon^{-3})$ complexity for convex functions and $\mathcal{O}(\lambda^{-2}\epsilon^{-1})$ complexity for $\lambda$-strongly convex objectives.

\textbf{Remark:} We emphasize that the above convergence for a single-loop algorithm is novel in SCO. Existing  methods for SCO usually have to employ two-loop stagewise methods to obtain similar results under the PL condition~\cite{Zhang2019ASC,qi2021stochastic,ICML:2022:Jiang}. 

\begin{figure*}[ht]
\vskip 0.1in
\begin{center}
\centerline{\subfigure[The NCE objective]{\includegraphics[width=0.32\textwidth]{./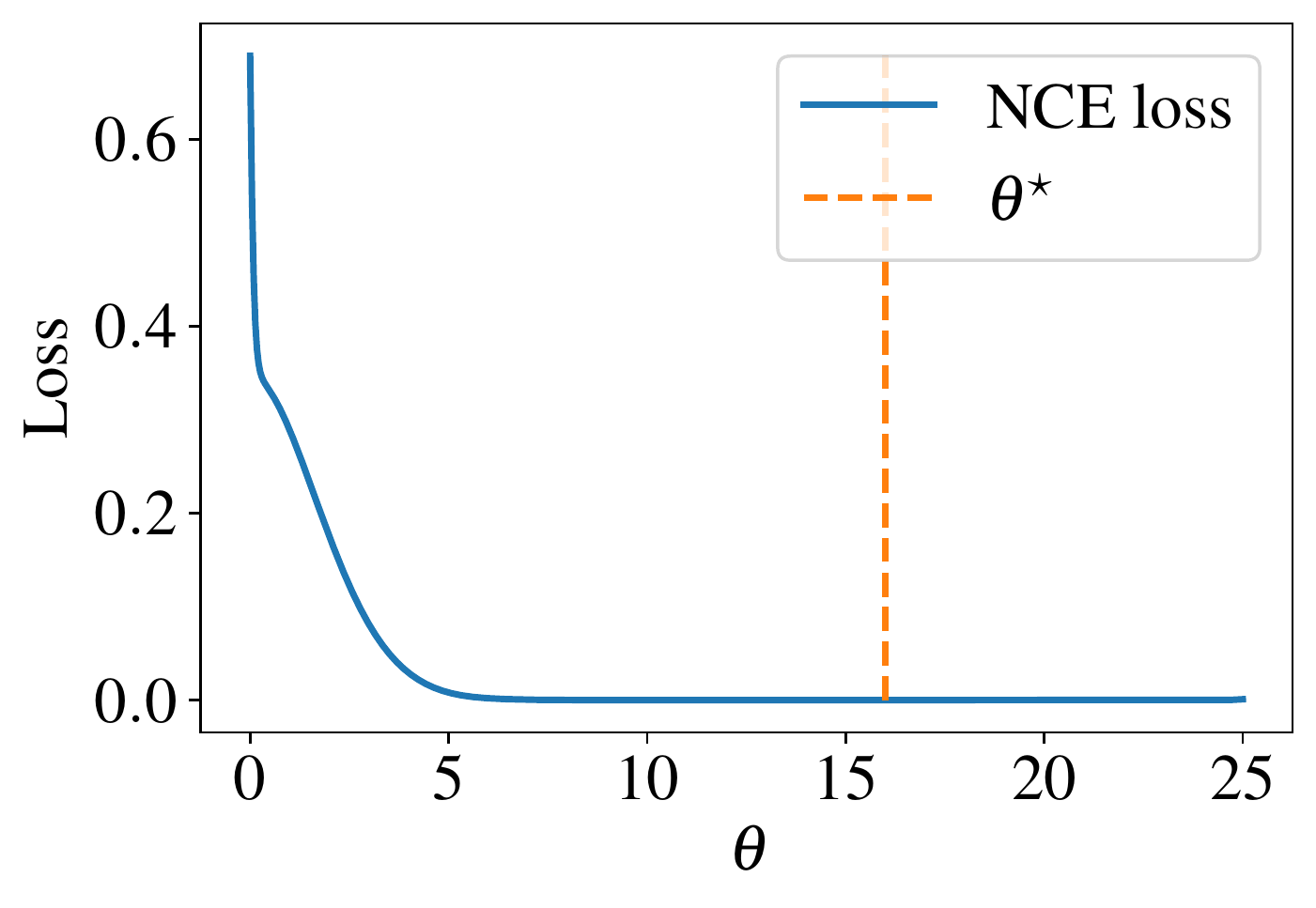}}
\subfigure[The eNCE objective]{\includegraphics[width=0.32\textwidth]{./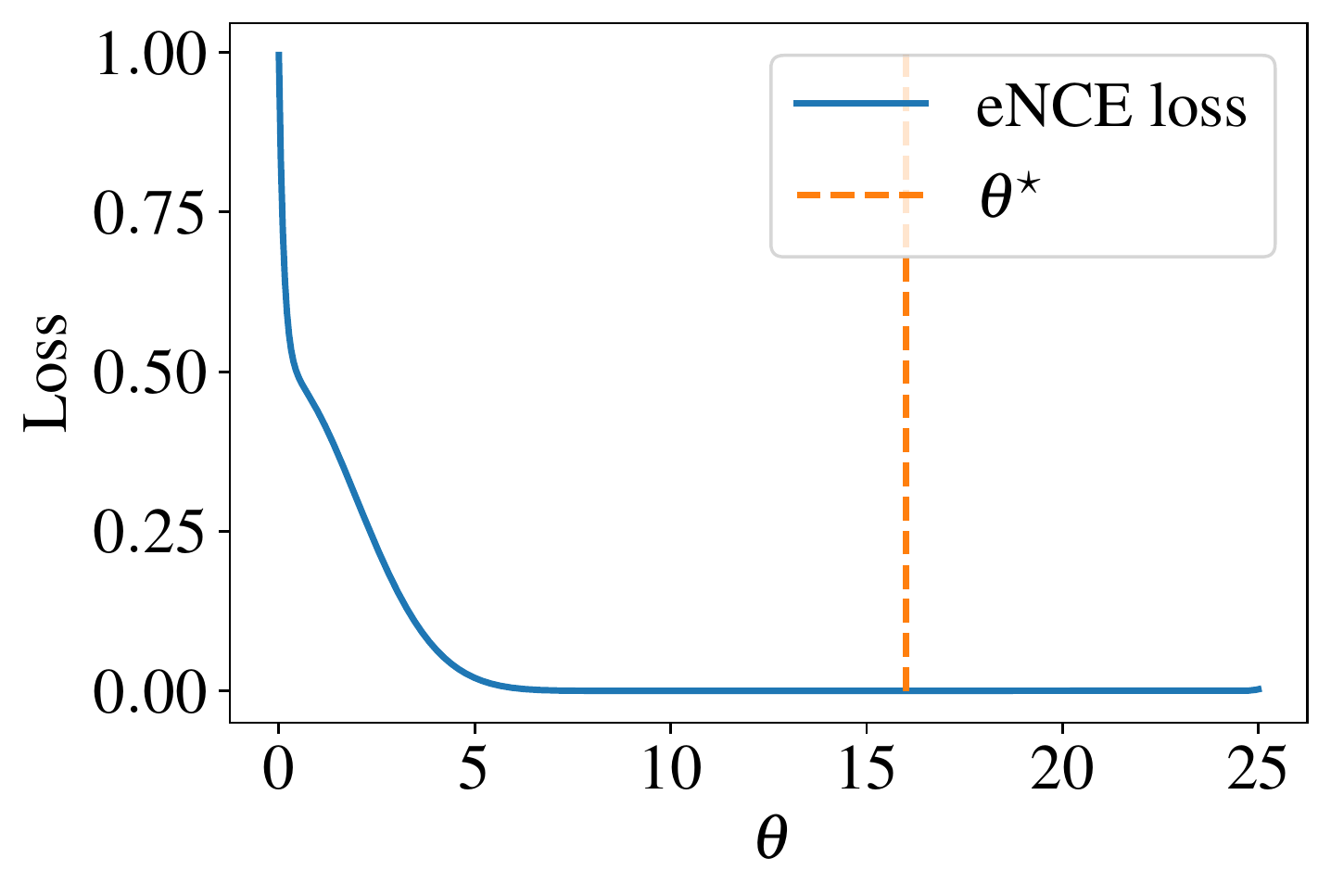}}
\subfigure[The MLE objective (ours)]{\includegraphics[width=0.32\textwidth]{./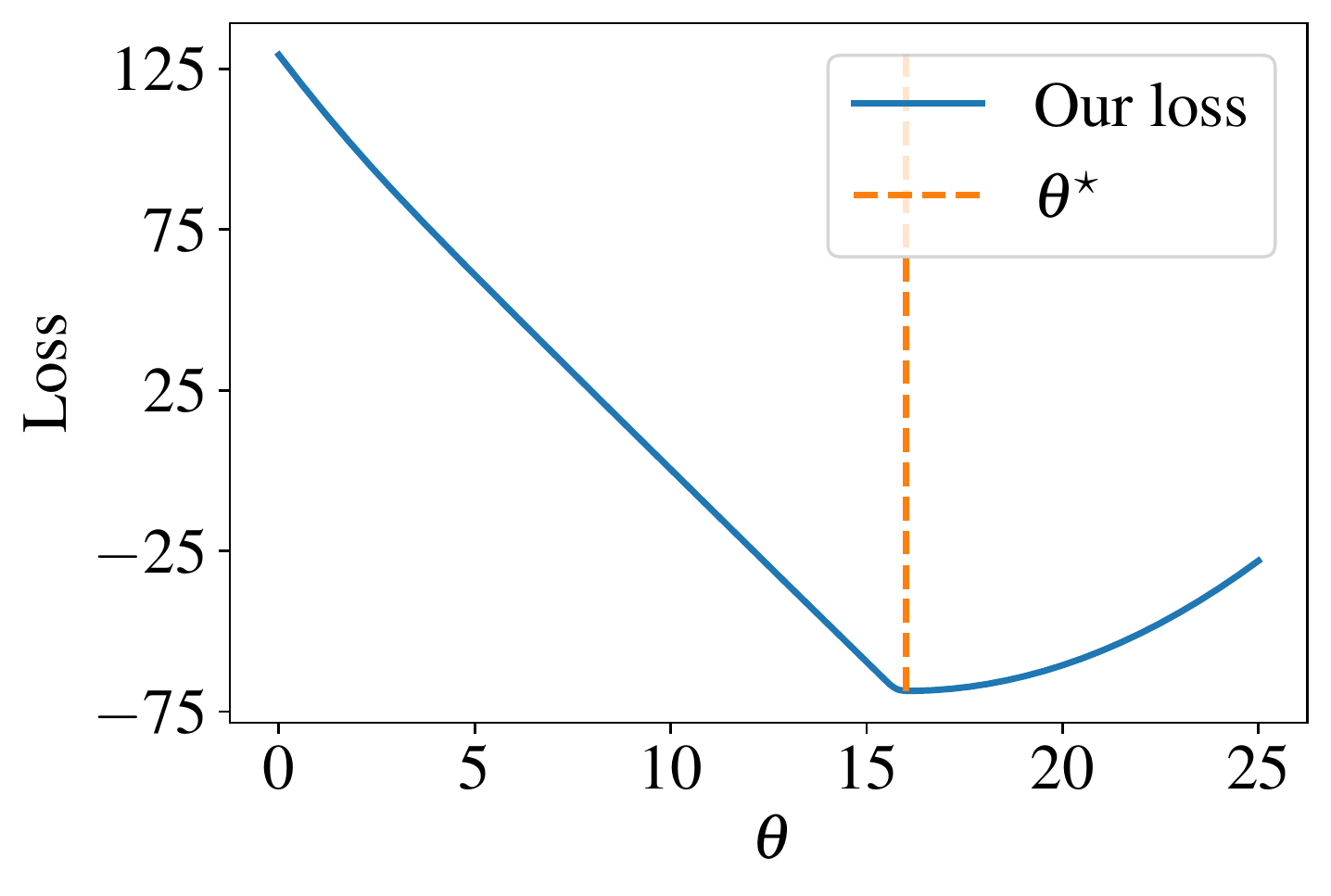}}}
\vskip -0.1in
\caption{The loss landscape of three objectives. Note that the optimal parameter $\theta^{*} = 16$.}
\label{fig:loss}
\end{center}
\vskip -0.3in
\end{figure*}

\subsection{Choosing the Noise Distribution} 
Similar to NCE, our approach also relies on a noise  distribution $q(\widetilde{\z})$. The difference is that the noise distribution only affects the convergence rate of our method, not the optimal solution. Theorem~\ref{thm:4} shows that our algorithm will eventually converge to a stationary point or a global optimal solution (under PL condition) of the MLE objective, as long as $q(\widetilde{\z})$ is positive whenever $p_0(\widetilde{\z};\theta)$ is positive, no matter what the noise distribution is. On the other hand, the noise distribution would affect the objective function of NCE, and it is pointed out that if the noise distribution is too different from data distribution, the classification problem becomes too easy, which  prevents the model from learning much about the data structure~\cite{JMLR:v13:gutmann12a}. For our method, the impact of $q(\widetilde{\z})$  on the convergence rate is through the variance of $g(\theta; \widetilde{\z})$ and $\nabla {g}(\theta; \widetilde{\z})$.    From our theoretical results (Theorems~\ref{thm:2} and~\ref{thm:3}), we can see that if the variance is zero, then the noise distribution has no impact on the convergence rate. %This would not happen unless we set $q(\x)=p(\x; \theta)$ as shown in the following lemma. 
We show the condition to satisfy the zero variance below.
\begin{lemma}\label{lem:3}
    If $q(\widetilde{\z}) = p(\widetilde{\z};\theta)$, then  $\sigma_g^2=0$ and $\zeta_g^2=0$.
\end{lemma}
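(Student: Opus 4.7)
The plan is to substitute the hypothesis $q(\widetilde{\z}) = p(\widetilde{\z};\theta) = p_0(\widetilde{\z};\theta)/Z(\theta)$, where $Z(\theta) := \int p_0(\x;\theta)\,d\x$ is the partition function, directly into the definition $g(\theta;\widetilde{\z}) = p_0(\widetilde{\z};\theta)/q(\widetilde{\z})$. The key observation is the importance-weight collapse
\[
g(\theta;\widetilde{\z}) \;=\; \frac{p_0(\widetilde{\z};\theta)}{p_0(\widetilde{\z};\theta)/Z(\theta)} \;=\; Z(\theta),
\]
so the per-sample estimator is in fact a deterministic function of $\theta$ that does not depend on $\widetilde{\z}$. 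Taking expectation under $\widetilde{\z}\sim q$ then gives $g(\theta) = \E_{\widetilde{\z}\sim q}[Z(\theta)] = Z(\theta)$ as well, so the difference $g(\theta;\widetilde{\z})-g(\theta)$ vanishes identically and Assumption~\ref{asm:2} holds with $\sigma_g^2 = 0$.

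For the gradient bound, I would differentiate the identity $g(\theta;\widetilde{\z}) = Z(\theta)$ with respect to $\theta$. Since the right-hand side is independent of $\widetilde{\z}$, this yields
\[
\nabla g(\theta;\widetilde{\z}) \;=\; \nabla Z(\theta) \;=\; \nabla g(\theta)
\]
for every $\widetilde{\z}$, giving $\zeta_g^2 = 0$. The only subtlety, and the main thing I would be careful to spell out, is the bookkeeping of the $\theta$-dependence of $q$ when forming $\nabla g(\theta;\widetilde{\z})$: under the hypothesis $q$ is tied to $\theta$, so the derivative must account for the implicit $\theta$-dependence of the $q(\widetilde{\z})$ in the denominator. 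The cleanest route is to observe that after the substitution, $g(\theta;\widetilde{\z}) = Z(\theta)$ is a closed form whose $\theta$-gradient is unambiguously $\nabla Z(\theta)$, independent of $\widetilde{\z}$; equivalently, a direct chain-rule/log-derivative computation reduces $\nabla_\theta[p_0(\widetilde{\z};\theta)/q(\widetilde{\z})]$ to $\nabla Z(\theta)$ once $q(\widetilde{\z}) = p_0(\widetilde{\z};\theta)/Z(\theta)$ is used, by cancellation with the score term from $\nabla\log q(\widetilde{\z})$. I do not expect any technical obstacle beyond this clarification, since both claims are ultimately consequences of the same fact: at $q = p(\cdot;\theta)$ the importance weight is the constant $Z(\theta)$, and derivatives of constants (in $\widetilde{\z}$) have zero variance.
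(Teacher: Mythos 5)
Your argument for $\sigma_g^2=0$ is correct and is essentially the paper's argument read from the other end: the paper writes the variance as $\int \bigl(p_0(\widetilde{\z};\theta)-\mu q(\widetilde{\z})\bigr)^{2}/q(\widetilde{\z})\,d\widetilde{\z}$ and observes the numerator vanishes when $q=p_0/\mu=p(\cdot;\theta)$, whereas you note directly that the importance weight collapses to the constant $Z(\theta)$; these are the same cancellation. The substantive difference is $\zeta_g^2$: the paper's proof stops after the variance computation and never addresses the gradient estimator at all, while you supply an argument and correctly isolate the one genuine subtlety, namely whether $\nabla g(\theta;\widetilde{\z})$ differentiates through the $\theta$-dependence of $q$. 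Note, however, that the algorithm (the update for $\v_t$) uses $\nabla g(\theta;\widetilde{\z})=\nabla p_0(\widetilde{\z};\theta)/q(\widetilde{\z})$ with $q$ held fixed; under that convention, substituting $q=p_0(\cdot;\theta)/Z(\theta)$ gives $\nabla g(\theta;\widetilde{\z})=Z(\theta)\,\nabla\log p_0(\widetilde{\z};\theta)$, which still depends on $\widetilde{\z}$, so $\zeta_g^2$ equals $Z(\theta)^2$ times the variance of the score under $p_\theta$ --- generically nonzero (in the paper's own 1-d Gaussian example it equals $Z(\theta)^2$). The claim $\zeta_g^2=0$ therefore holds only under your ``total derivative'' reading, which sits uneasily with the paper's remark that the convergence analysis fails if $q$ depends on $\theta$. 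In short: your proof of the first claim matches the paper's; for the second claim yours is the only proof on offer, and the chain-rule bookkeeping you flagged as a clarification is in fact the point on which the statement stands or falls.
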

\vspace*{-0.1in}
\textbf{Remark:} However, the above choice is impractical as sampling from $p(\widetilde{\z};\theta)$ is not easy, and the dependence of $q(\widetilde{\z})$ on $\theta$ would also make our convergence analysis fail.  In practice, we can only hope $q(\widetilde{\z})$ is close to $p(\widetilde{\z}; \theta)$. 

Also, to ensure the partition function $\int p_{0}(\widetilde{\z};\theta) d\widetilde{\z}$ can be written as $\E_{\widetilde{\z} \sim q} \left[ \frac{p_{0}(\widetilde{\z};\theta)}{q(\widetilde{\z})}\right]$, we should guarantee $q(\widetilde{\z}) > 0$ whenever $p_{0}(\widetilde{\z};\theta) > 0$. This is also required in NCE, which is not difficult to satisfy, since many continuous probability distributions can ensure their probability density functions are always positive, e.g., Gaussian, Laplace and  Cauchy distributions. 
Thus, our analysis suggests the following:
\begin{compactenum}
\item Choose $q(\widetilde{\z})$ that can be easily sampled and computed. 
\item We should ensure $q(\widetilde{\z}) > 0$ whenever $p_{0}(\widetilde{\z};\theta) > 0$.
\item The noise distribution should be similar to $p(\widetilde{\z};\theta)$ or the data distribution $p_\text{data}(\widetilde{\z})$. 
\end{compactenum}
To satisfy the last two properties, we can sample the real data and add some noise to the data. In particular,
to sample $\widetilde{\z} \sim q(\widetilde{\z})$, we first sample $\x^{\prime} \sim \D_n$, $\z^{\prime} \sim \mathcal{N}(\mathbf{\mu}, \Sigma)$, and then set $\widetilde{\z} = \x^{\prime} + \z^{\prime} $. Since $\D_n = \{\x_1,\cdots, \x_n\}$ and denote the probability density function of $\mathcal{N}(\cdot; \mathbf{\mu}, \Sigma)$ by $\kappa(\cdot)$, the noise distribution would be $q(\widetilde{\z}) = \frac{1}{n} \sum_{i=1}^{n} \kappa(\x-\x_i)$, which can be approximated within the mini-batch in practice. 
To ensure that the noise is similar to the data, we can also fit the parameter $\mu$ and $\Sigma$ by some training data, which can be easily done by using the existing numpy package via numpy.mean() and numpy.cov(). In our empirical study, this is helpful for high-dimensional problems, i.e., image generation on MNIST data set in Section~\ref{6.3}. For simple problems, such as density estimation and out-of-distribution detection, we can simply set the noise as a multivariate Gaussian distribution, which is very fast and easy to sample from. Since more complex noise would introduce more computations, it is a trade-off in practice.

Finally, we would like to emphasize that the impact of the noise distribution can be alleviated by increasing the mini-batch size for estimating $g(\theta)$ and $\nabla g(\theta)$. If the mini-batch size of noisy samples is $B$, then the variance $\sigma_g^2$ and $\zeta_g^2$ will be scaled by $B$. Hence, the larger the batch size, the less impact of the noise distribution on the convergence rate.

\section{Case Study: Gaussian Mean Estimation}
As pointed out by \citet{liu2022analyzing}, the reason that NCE may fail to learn a good parameter is due to its flat loss landscape. To verify this claim, they use one-dimensional Gaussian mean estimation as an example and show the slow convergence of NCE for this simple task. To demonstrate the effectiveness of our method, we use the same task to show the behavior of our loss and the proposed optimization method. Following their setup, the data and noise distributions are Gaussian distributions with mean $\theta^{*}$ and $\theta_q$ separately, and the variances are both fixed as 1, i.e.,
\begin{align*}
p_\text{data}(x) = \frac{1}{\sqrt{2\pi}}e^{-\frac{(x-\theta^{*})^2}{2}}, \quad
q(x) = \frac{1}{\sqrt{2\pi}}e^{-\frac{(x-\theta_{q})^2}{2}}.
\end{align*}
Then, assume that the model is another Gaussian distribution with mean $\theta$ and variance 1, which is equivalent to setting $p_0(x;\theta)=e^{\theta x - \frac{1}{2}x^2}$. The goal is to learn the parameter $\theta$, or $\tau(\theta)=(\theta, \frac{\theta^2}{2}+\log \sqrt{2\pi})$ for NCE.
First, we can see the flatness of NCE loss via the proposition below.
\begin{prop}\label{prop:1} (Proposition~4.2 in \citet{liu2022analyzing})\\
 %For $1$-d Gaussian mean estimation discussed above, 
 Denote that $R=\lvert \theta^{*} - \theta_q \rvert$ and $\J(\cdot)$ is the NCE loss. Then, we have $\J(\tau) - \J(\tau^{*}) \leq R \exp{(-R^2/8)} \Norm{\tau - \tau^{*}}^2$, where $\tau^{*} = \tau(\theta^{*})$ is the optimal solution.
\end{prop}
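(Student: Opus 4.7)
Since Proposition~\ref{prop:1} is cited verbatim from \citet{liu2022analyzing}, the plan is to reconstruct its proof at a sketch level, emphasizing where the factor $R\exp(-R^2/8)$ originates. The starting observation is that the NCE loss $\J(\tau)$ is convex in $\tau=(\theta,\alpha)$ (it is the population logistic loss of an affine-in-$\tau$ score $\Delta(x,\tau)=\log p_0(x;\theta)-\log q(x)-\alpha$), and that $\tau^{*}$ is a minimizer because the corresponding classifier $h(x,\tau^{*})=p_\text{data}(x)/(p_\text{data}(x)+q(x))$ is the Bayes-optimal discriminator, so $\nabla\J(\tau^{*})=0$. This lets me upper-bound $\J(\tau)-\J(\tau^{*})$ by a second-order Taylor remainder, namely by $\tfrac{1}{2}\|\tau-\tau^{*}\|^{2}$ times a supremum of $\|\nabla^{2}\J\|_{\mathrm{op}}$ along the segment joining $\tau^{*}$ and $\tau$.

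Next I would compute the Hessian. Because $\Delta$ is affine in $\tau$ in this parametrization, $\nabla_\tau^{2}\Delta\equiv 0$, and the standard calculation for logistic-type losses gives
\begin{equation*}
\nabla^{2}\J(\tau)=\int\!\bigl(p_\text{data}(x)+q(x)\bigr)\,\sigma(\Delta)\bigl(1-\sigma(\Delta)\bigr)\,\nabla\Delta\,\nabla\Delta^{\top}\,dx,
\end{equation*}
with $\nabla_\tau\Delta(x,\tau)=(x,-1)^{\top}$. At $\tau^{*}$ this collapses to $\int\frac{p_\text{data}(x)q(x)}{p_\text{data}(x)+q(x)}(x,-1)(x,-1)^{\top}dx$, and the same kind of bound can be produced along the segment by the elementary inequality $\sigma(1-\sigma)\le \frac{p_\text{data}\,q}{(p_\text{data}+q)^{2}}$ after using convexity of the map $\tau\mapsto h(x,\tau)$ appropriately, or by directly working with the pointwise estimate $\tfrac{p_\text{data} q}{p_\text{data}+q}\le \sqrt{p_\text{data}\,q}$.

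The Gaussian structure then produces the two factors in the bound: a short algebraic computation (completing the square in $(x-\theta^{*})^{2}+(x-\theta_q)^{2}$) yields
\begin{equation*}
\sqrt{p_\text{data}(x)q(x)}=\exp\!\bigl(-R^{2}/8\bigr)\cdot \mathcal N\!\bigl(x;\,(\theta^{*}+\theta_q)/2,\,1\bigr),
\end{equation*}
so integrating against $(x,-1)(x,-1)^{\top}$ gives a $2\times 2$ matrix whose operator norm is $\exp(-R^{2}/8)$ times a polynomial expression in $R$ (and the translation $m=(\theta^{*}+\theta_q)/2$, which can be absorbed by shifting $x$). Plugging back into the Taylor bound produces the claimed inequality.

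The main obstacle, and the only place requiring genuine care, will be pinning down the precise prefactor $R$ (as opposed to, say, $1+R^{2}$) that multiplies $\exp(-R^{2}/8)$. Using the crude bound $\sigma(1-\sigma)\le 1/4$ for the Hessian along the segment is too lossy and destroys the exponential decay entirely; using the tighter $\sqrt{p_\text{data}q}$ estimate at $\tau^{*}$ is sharp but only gives the bound locally. Bridging these two regimes---so that the sharp estimate extends uniformly along the segment from $\tau^{*}$ to $\tau$ while keeping both the exponential decay and the linear factor $R$---is the delicate step, and I would handle it by exploiting the one-dimensional, affine-in-$\tau$ structure of $\Delta$ together with a monotonicity argument in $\sigma(1-\sigma)$ along the path, essentially reducing to the Hessian evaluation at $\tau^{*}$ up to constants.
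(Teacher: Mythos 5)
First, note that the paper does not actually prove this proposition: it is imported verbatim as Proposition~4.2 of \citet{liu2022analyzing}, and no proof of it appears in the appendix, so there is no in-paper argument to compare yours against. Judged on its own terms, your sketch gets the setup right (the score $\Delta$ is affine in $\tau$, the gradient vanishes at $\tau^*$, the Hessian is $\nabla^2\J(\tau)=\int(p_\text{data}+q)\,\sigma(\Delta)(1-\sigma(\Delta))\,\nabla\Delta\,\nabla\Delta^\top dx$, and $\sqrt{p_\text{data}(x)q(x)}=e^{-R^2/8}\,\mathcal N(x;(\theta^*+\theta_q)/2,1)$), but the step you yourself flag as ``delicate'' is a genuine gap, and the specific fix you propose cannot work. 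The inequality $\sigma(\Delta)(1-\sigma(\Delta))\le p_\text{data}q/(p_\text{data}+q)^2$ is an identity at $\tau^*$ and false elsewhere: at $\tau'=\tau(\theta_q)$ one has $\Delta(x,\tau')\equiv 0$, hence $\sigma(1-\sigma)\equiv 1/4$ pointwise, while $p_\text{data}q/(p_\text{data}+q)^2$ is exponentially small away from the two modes; thus $\nabla^2\J(\tau')=\tfrac14\int(p_\text{data}+q)(x,-1)(x,-1)^\top dx$ has entries of order $1+(\theta^*)^2+\theta_q^2$ with no $e^{-R^2/8}$ decay. No monotonicity argument in $\sigma(1-\sigma)$ along the segment can therefore reduce the intermediate-point Hessian to its value at $\tau^*$.

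In fact, a direct evaluation shows that the route through a uniform segment-Hessian bound cannot yield the inequality globally: at $\tau=\tau(\theta_q)$ the model coincides with the noise, so $\J(\tau)=2\log 2$, while $\J(\tau^*)\to 0$ as $R\to\infty$; the left-hand side stays of order one, whereas $R e^{-R^2/8}\Norm{\tau-\tau^*}^2 = Re^{-R^2/8}\cdot R^2\bigl(1+(\theta^*+\theta_q)^2/4\bigr)\to 0$. The flatness statement is thus necessarily local (consistent with the paper's remark that the gap is small ``when $\tau$ approaches $\tau^*$''), and a correct proof must either restrict $\Norm{\tau-\tau^*}$ to a region on which the Hessian along the segment is genuinely controlled by $\sqrt{p_\text{data}q}$, or bound the excess loss pointwise in $x$ rather than through a single operator-norm estimate. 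As written, your argument establishes the exponentially small curvature only at $\tau^*$ itself, i.e.\ the leading Taylor coefficient, not the stated inequality.
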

\textbf{Remark:} If $\theta^{*}$ and $\theta_{q}$ are not close enough, the difference between $J(\tau)$ and $J(\tau^{*})$ will be extremely small when $\tau$ approaches $\tau^{*}$, implying a flat landscape. 

However, it is not a problem for the MLE objective $\LL(\cdot)$ that we optimize, which is shown in the following proposition.
%First, we write the MLE objective function as:
%\begin{align*}
%\LL(\theta) = \E_{\x\sim p_\text{data}}\left[-\theta x + \frac{1}{2}x^2\right] + \log \int e^{\theta x - \frac{1}{2}x^2}d\x,
%\end{align*}
\begin{prop}\label{prop:2}
    For $1$-d Gaussian mean estimation, the MLE objective satisfies that $\LL(\theta) - \LL(\theta^{*}) = \frac{1}{2} \Norm{\theta - \theta^{*}}^2$.
\end{prop}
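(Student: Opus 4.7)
The plan is a direct calculation: make the Gaussian structure explicit, compute the partition function in closed form, take the population expectation over the data, and simplify. Since everything is Gaussian with unit variance, every integral is elementary and I expect no serious obstacle — the only subtlety is interpreting $\LL(\theta)$ in the population sense (i.e., replacing $\frac{1}{n}\sum_{i=1}^n \log p_0(\x_i;\theta)$ by $\E_{x\sim p_\text{data}}[\log p_0(x;\theta)]$), which matches the way Proposition~\ref{prop:1} is phrased for the NCE loss.

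First I would unfold $\log p_0(x;\theta) = \theta x - \tfrac{1}{2}x^2$ and compute the partition function by completing the square: $\theta x - \tfrac{1}{2}x^2 = -\tfrac{1}{2}(x-\theta)^2 + \tfrac{1}{2}\theta^2$, which gives $\int p_0(x;\theta)\,dx = \sqrt{2\pi}\,e^{\theta^2/2}$, and hence $\log\int p_0(x;\theta)\,dx = \tfrac{1}{2}\theta^2 + \log\sqrt{2\pi}$. This is the only nontrivial ingredient and it is standard.

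Next I would take the expectation under $p_\text{data} = \mathcal{N}(\theta^{*},1)$, using $\E[x]=\theta^{*}$ and $\E[x^2] = 1+{\theta^{*}}^2$, to obtain
\begin{equation*}
\LL(\theta) = -\theta\theta^{*} + \tfrac{1}{2}(1 + {\theta^{*}}^2) + \tfrac{1}{2}\theta^2 + \log\sqrt{2\pi}.
\end{equation*}
Evaluating at $\theta=\theta^{*}$ gives $\LL(\theta^{*}) = \tfrac{1}{2} + \log\sqrt{2\pi}$, so subtracting yields
\begin{equation*}
\LL(\theta) - \LL(\theta^{*}) = \tfrac{1}{2}\theta^2 - \theta\theta^{*} + \tfrac{1}{2}{\theta^{*}}^2 = \tfrac{1}{2}(\theta-\theta^{*})^2,
\end{equation*}
which is exactly the claim. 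As a sanity check, this also recovers $\nabla\LL(\theta) = \theta - \theta^{*}$ and shows $\LL$ is $1$-strongly convex, so the PL constant is $\mu=1$, giving a concrete instance where the fast rate of Theorem~\ref{thm:3} applies and sharply contrasting with the exponentially flat bound in Proposition~\ref{prop:1}.
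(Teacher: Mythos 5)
Your proof is correct and follows essentially the same route as the paper's: compute the log partition function by completing the square to get $\log\sqrt{2\pi}+\tfrac{\theta^2}{2}$, take the population expectation of $-\theta x+\tfrac{1}{2}x^2$, and subtract. In fact your final expression $\tfrac{1}{2}(\theta-\theta^*)^2$ is the right one; the paper's last displayed equality drops the factor $\tfrac{1}{2}$ (a typo, since its own intermediate expression $(\theta^*-\theta)\theta^*+\tfrac{\theta^2-(\theta^*)^2}{2}$ indeed equals $\tfrac{1}{2}(\theta-\theta^*)^2$, matching the proposition as stated).
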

\textbf{Remark:} Compared with NCE, the MLE objective does not have the extremely small factor $R \exp{(-R^2/8)}$, and thus has a much better loss landscape near the optimum.  

To show this difference more vividly, we plot the loss landscape of the NCE objective and the MLE objective, as well as the newly proposed eNCE objective~\cite{liu2022analyzing}. Following the same setup as \citet{liu2022analyzing}, we set $\theta_q = 0$ and $\theta^{*} =16$. The results are shown in Figure~\ref{fig:loss}. As can be seen, both the NCE and eNCE objectives are very flat near the optimal solution, while the MLE objective is very sharp.

Besides, we provide the convergence rate of our method for this task, as stated below.
\begin{prop} \label{prop:3}
For 1-$d$ Gaussian mean estimation, Algorithm~\ref{alg:multi} ensures $\LL(\theta)-\LL^* \leq \epsilon$ after $T = \O(\epsilon^{-1})$ iterations.
\end{prop}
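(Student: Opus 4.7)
The plan is to deduce Proposition~\ref{prop:3} from Theorem~\ref{thm:3} by identifying the PL constant as $\mu=1$ and then bounding the three variance constants $\sigma_g^2$, $\zeta_g^2$, $\zeta_h^2$ in the 1-d Gaussian mean-estimation setup. From Proposition~\ref{prop:2} we already have $\LL(\theta)-\LL(\theta^*)=\tfrac12(\theta-\theta^*)^2$; differentiating gives $\|\nabla\LL(\theta)\|^2=(\theta-\theta^*)^2=2(\LL(\theta)-\LL^*)$, so $\LL$ satisfies the $\mu$-PL condition with $\mu=1$ (and is in fact 1-strongly convex).

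Next I would verify Assumptions~\ref{asm:1} and~\ref{asm:2} explicitly. A direct Gaussian computation gives $g(\theta)=\int e^{\theta x-x^2/2}\,dx=\sqrt{2\pi}\,e^{\theta^2/2}\geq\sqrt{2\pi}$, so the outer $f=\log$ is Lipschitz and smooth at every $g(\theta)$ with constants independent of $\theta$. The functions $h(\theta;z)=-\theta z+z^2/2$ and $g(\theta;\widetilde z)=\sqrt{2\pi}\,e^{(\theta-\theta_q)\widetilde z+\theta_q^2/2}$ are smooth, and $\LL$ is bounded below. For the three variances, the moment generating function of $\N(\theta_q,1)$ yields closed forms such as
\[
\sigma_g^2 \;=\; 2\pi\,e^{\theta^2}\bigl(e^{(\theta-\theta_q)^2}-1\bigr),
\]
an analogous expression for $\zeta_g^2$ obtained from $\E_{\widetilde z\sim q}[\widetilde z^{\,2} e^{2(\theta-\theta_q)\widetilde z}]$, and $\zeta_h^2=\mathrm{Var}_{z\sim p_\text{data}}[z]=1$.

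Plugging $\mu=1$ and these (bounded) variances into Theorem~\ref{thm:3} then gives $T=\O\!\bigl(\max\{\epsilon^{-1/2},(\sigma_g^2+\zeta_g^2+\zeta_h^2)\,\epsilon^{-1}\}\bigr)=\O(1/\epsilon)$, which is the claim.

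The hard part is precisely that $\sigma_g^2$ and $\zeta_g^2$ grow like $e^{\theta^2}$, so one must argue that the iterate $\theta_t$ stays in a compact neighborhood of $\theta^*$ throughout the run; otherwise the variance constants are not uniformly $\O(1)$. The cleanest workaround is to restrict optimization to a compact interval $[\theta^*-D,\theta^*+D]$ containing the initialization: 1-strong convexity keeps the PL structure intact under projection and makes every variance constant depend only on $D$, $\theta^*$, and $\theta_q$. An alternative is to exploit the 1-strong convexity together with the variance-reduced momentum updates in equations~\eqref{u} and~\eqref{v} to show $\E[(\theta_t-\theta^*)^2]=\O(1)$ along the whole trajectory, leaving the $\O(1/\epsilon)$ rate unchanged.
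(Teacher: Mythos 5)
Your core argument is the same as the paper's: establish that the objective satisfies the PL condition with $\mu=1$ and then invoke Theorem~\ref{thm:3}. The paper does this by computing the Hessian directly, $\nabla^2\LL(\theta)=\mathrm{Var}_{p_\theta}(x)=1$, i.e.\ $1$-strong convexity; you instead differentiate the identity from Proposition~\ref{prop:2}, which yields the same conclusion, so the two derivations are interchangeable. Where you genuinely go beyond the paper is in checking Assumptions~\ref{asm:1} and~\ref{asm:2}: the paper's proof is a one-line appeal to Theorem~\ref{thm:3} and never verifies that the variance constants are finite, whereas your closed form $\sigma_g^2=2\pi e^{\theta^2}\bigl(e^{(\theta-\theta_q)^2}-1\bigr)$ shows they are \emph{not} uniformly bounded over $\theta\in\R$ (and, relatedly, the single-sample ratio $p_0(\widetilde z;\theta)/q(\widetilde z)=\sqrt{2\pi}\,e^{(\theta-\theta_q)\widetilde z+\theta_q^2/2}$ is not bounded away from zero, so the Lipschitz/smoothness of $f=\log$ at the estimator $\u_t$ also requires care). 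This is a real gap in the paper's own argument, and your proposed remedies --- projecting onto a compact interval $[\theta^*-D,\theta^*+D]$ so that all constants depend only on $D,\theta^*,\theta_q$, or proving $\E[(\theta_t-\theta^*)^2]=\O(1)$ along the trajectory via strong convexity --- are the natural ways to close it; either would need to be carried out in detail for a fully rigorous proof, but the $\O(1/\epsilon)$ rate is unaffected.
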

\textbf{Remark:} Although \citet{liu2022analyzing} can ensure that $\Norm{\tau-\tau^{*}}\leq \epsilon$ after $\O(\epsilon^{-2})$ iterations, by using normalized gradient descent~(NGD) for the NCE or  eNCE objective, their assumptions are very strong. They assume the algorithm can obtain the \textbf{exact} gradient, which is impossible in practice. In contrast, our analysis only relies on stochastic gradients. Also note that NCE optimized with standard gradient descent requires an exponential number of steps to find a reasonable parameter, according to \citet{liu2022analyzing}.%Theorem 4.1 of 

Finally, we conduct experiments to compare different methods.  We choose mean square error~(MSE) $\Norm{\theta - \theta^{*}}^2$ as the criterion and compare our method with NCE trained by SGD and NGD, MCMC training~\cite{NEURIPS2019_378a063b}, and the eNCE objective trained by NGD. The results are presented in Figure~\ref{fig:1}, which demonstrate that our method converges more quickly than other methods, and NCE is the slowest due to its flat loss landscape. Since MLE can be calculated for Gaussian distribution directly, we also include a curve for MLE as a reference, which is very close to the curve of our method after a few steps. Note that we run each method for 100 steps, and we report the running time of each method in Table~\ref{time}, It can be seen that the running time of NCE and our method are very similar, while the MCMC method is much slower. As a result, it is fair to compare the convergence between NCE and our method.

\begin{table}[t]
\caption{Running time of each method.}
\begin{center}
%\resizebox{0.75\columnwidth}{!}{%
\begin{tabular}{@{}ccccccc@{}}
\toprule
  \text{NCE} &
  \text{NCE~(NGD)}&
  \text{eNCE~(NGD)} &
  \text{MCMC}&
  \text{Ours} \\ \midrule
  \text{176s} &
  \text{181s} &
  \text{169s} &
  \text{1757s} &
  \text{168s}  \\ \bottomrule
\end{tabular}%
%}    
\end{center}
\label{time}
\end{table}

\begin{figure}[t]
  	\centering
\includegraphics[width=0.85\columnwidth]{./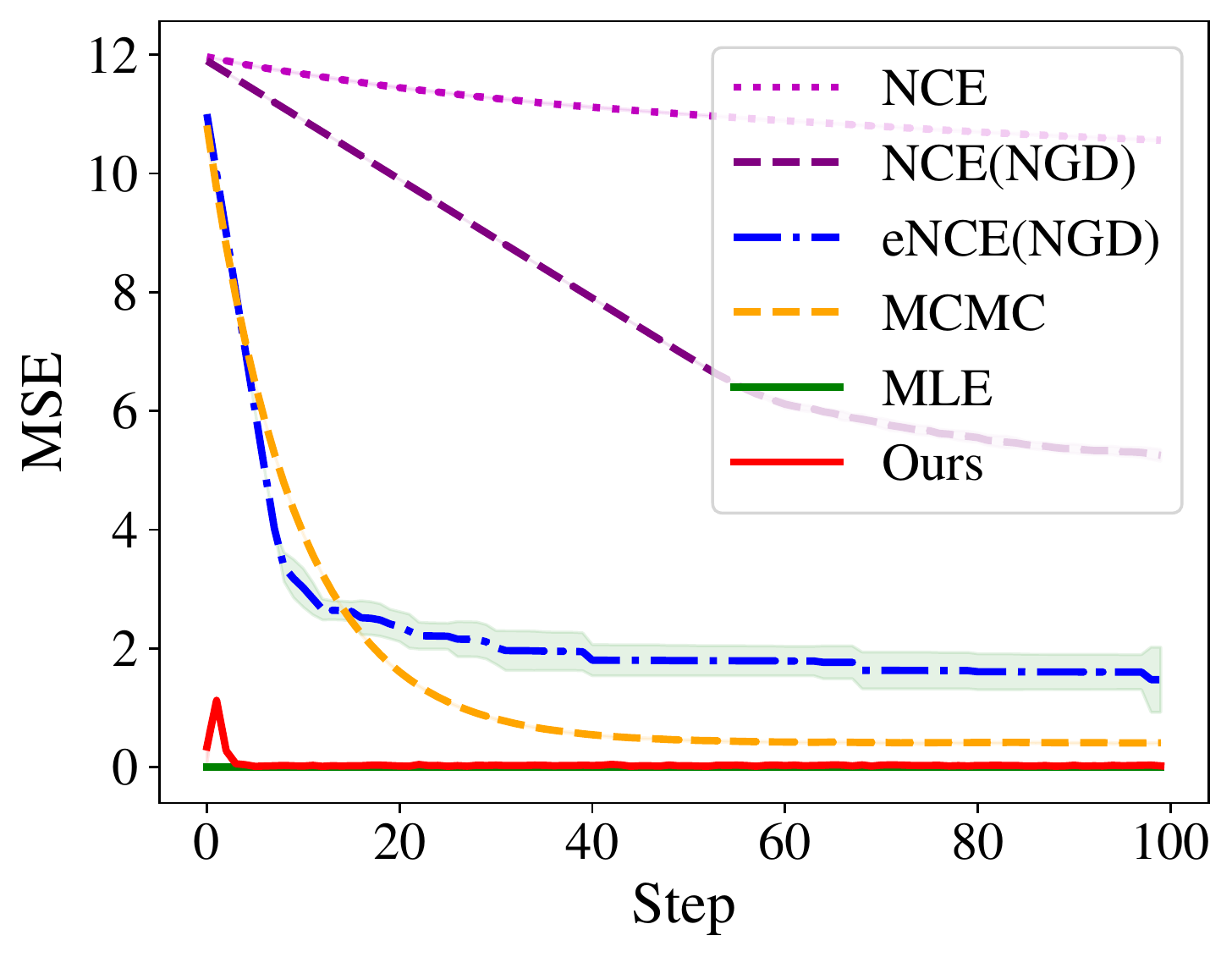}
 	\vskip -0.15in
	\caption{Results for 1-$d$ Gaussian mean estimation.}
 \vskip -0.1in
	\label{fig:1}
\end{figure}

\begin{figure*}[!ht]
\vskip 0.1in
  	\centering
\includegraphics[width=0.75\textwidth]{./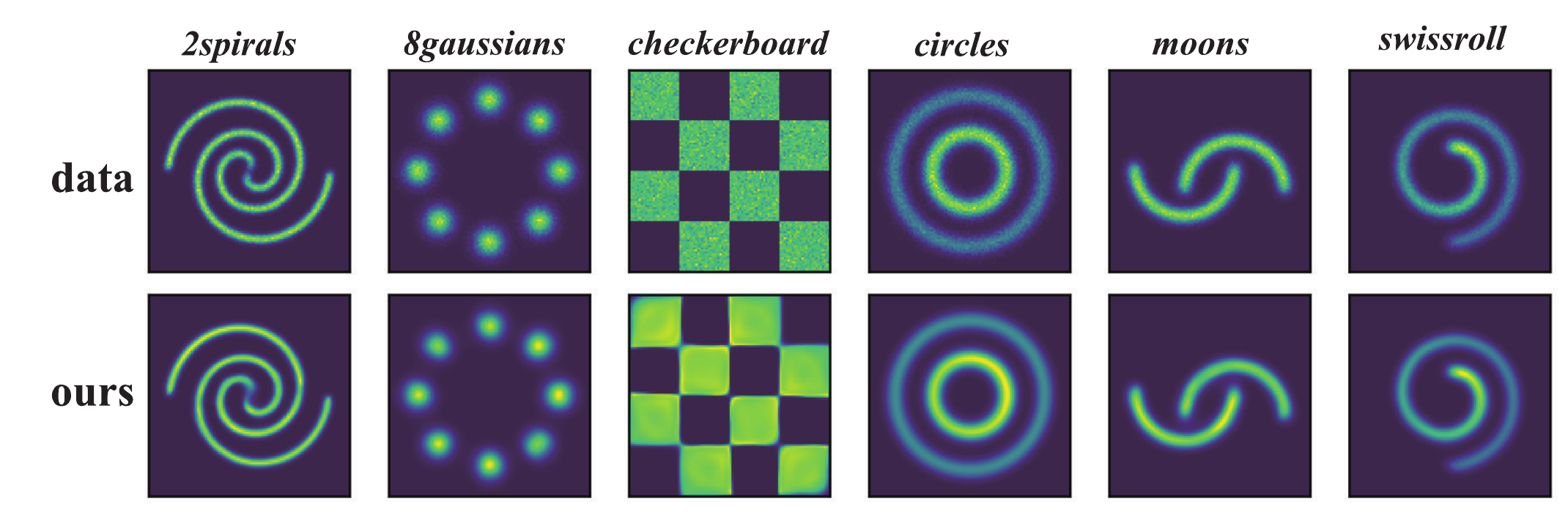}
 	\vskip -0.1in
	\caption{Visualization of density on synthetic datasets.}
	\label{fig:2}
	\vskip -0.1in
\end{figure*}

\begin{table*}[th]
\centering
\caption{Results on synthetic data in terms of MMD~(lower is better).}
\vskip 0.1in
\resizebox{0.95\textwidth}{!}{%
\begin{tabular}{@{}cclcccc@{}}
\toprule
\text{Method} &
  \textit{2spirals} &
  \textit{8gaussians} &
  \textit{checkerboard} &
  \textit{circles} &
  \textit{moons} &
  \textit{swissroll} \\ \midrule
\text{NCE}      & 3.253 ± 0.284 & 0.153 ± 0.095 & 1.956 ± 0.469 & 1.223 ± 0.154 & 5.178 ± 0.341 & 2.715 ± 0.249 \\ \midrule
\text{NCE~(NGD)}  & 3.445 ± 0.287 & 0.177 ± 0.102 & 1.963 ± 0.488 & 1.270 ± 0.292 & 5.010 ± 0.349 & 2.585 ± 0.201 \\ \midrule
\text{eNCE~(NGD)} & 3.328 ± 0.332 & 0.257 ± 0.144 & 1.810 ± 0.218 & 1.183± 0.188 & 4.728 ± 0.399 & 2.975 ± 0.398 \\ \midrule
\text{MCMC} & 3.060\ ±\ 0.780 & 0.150\ ±\ 0.035 & 1.654\ ±\ 0.217 & 1.154\ ±\ 0.294 & 4.722\ ±\ 0.633 & 2.764\ ±\ 0.670 \\ \midrule 
\text{Score Matching} &  3.268\ ±\ 0.846 & 0.250\ ±\ 0.076 & 2.167\ ±\ 0.703 & 1.302\ ±\ 0.272 & 4.826\ ±\ 0.153 & 2.660\ ±\ 0.513\\ \midrule
\text{Contrastive Divergence} &  3.245\ ±\ 0.426 & 0.182\ ±\ 0.085 & 1.987\ ±\ 0.470 & 1.161\ ±\ 0.410 & 4.716\ ±\ 0.658 & 2.623\ ±\ 0.203\\ \midrule 
\textbf{Ours} &
  \textbf{3.040 ± 0.199} &
  \textbf{0.132 ± 0.098} &
  \textbf{1.645 ± 0.251} &
  \textbf{1.075 ± 0.169} &
  \textbf{4.673 ± 0.519} &
  \textbf{2.566 ± 0.274} \\ \bottomrule
\end{tabular}%
}
\vskip -0.1in
\label{tab:my-table}
\end{table*}

\begin{table*}[!th]
\centering
\caption{Results on synthetic data in terms of FID~(lower is better).}
\vskip 0.1in
\resizebox{0.95\textwidth}{!}{%
\begin{tabular}{@{}cclcccc@{}}
\toprule
\text{Method} &
  \textit{2spirals} &
  \textit{8gaussians} &
  \textit{checkerboard} &
  \textit{circles} &
  \textit{moons} &
  \textit{swissroll} \\ \midrule
\text{NCE}      & 0.103 ± 0.038 & 0.118 ± 0.026 & 0.178 ± 0.026 & 0.096 ± 0.018 & 0.114± 0.020 & 0.181 ± 0.041 \\ \midrule
\text{NCE~(NGD)}  & 0.078 ± 0.024 & 0.143 ± 0.048 & 0.169 ± 0.023 & 0.103 ± 0.024 & 0.099 ± 0.029 & 0.171 ± 0.023 \\ \midrule
\text{eNCE~(NGD)} & 0.083 ± 0.040 & 0.128 ± 0.033 & 0.112 ± 0.029 & 0.085 ± 0.045& 0.096 ± 0.015 & 0.212 ± 0.040 \\ \midrule
\text{MCMC} & 0.072\ ±\ 0.052 & 0.115\ ±\ 0.038 & 0.125\ ±\ 0.036 & 0.075\ ±\ 0.042 & 0.109\ ±\ 0.042 & 0.174\ ±\ 0.028 \\ \midrule 
\text{Score Matching} &  0.109\ ±\ 0.044 & 0.178\ ±\ 0.086 & 0.166\ ±\ 0.069 & 0.099\ ±\ 0.026 & 0.117\ ±\ 0.024 & 0.178\ ±\ 0.062\\ \midrule
\text{Contrastive Divergence} &  0.089\ ±\ 0.037 & 0.142\ ±\ 0.051 & 0.121\ ±\ 0.029 & 0.105\ ±\ 0.039& 0.110\ ±\ 0.017 & 0.180\ ±\ 0.025\\ \midrule 
\textbf{Ours} &
  \textbf{0.061 ± 0.032} &
  \textbf{0.104 ± 0.025} &
  \textbf{0.111 ± 0.024} &
  \textbf{0.065 ± 0.030} &
  \textbf{0.094 ± 0.131} &
   \textbf{0.163 ± 0.039} \\ \bottomrule
\end{tabular}%
}
\vskip -0.1in
\label{2tab:my-table}
\end{table*}

\section{Experiments}
In this section, we conduct experiments on three different tasks, and compare our method with NCE, MCMC training,  NCE and eNCE trained by NGD, etc. For our method, we set the parameter $\gamma = 0.1$ and $\beta=0.9$. For MCMC training, the number of sampling steps is searched from the set $\{20, 50, 100\}$ and we use Langevin dynamics~\cite{Langevin} as the sampling approach. For all tasks, we tune the learning rates from $\{1e{-}1, 1e{-}2, 1e{-}3, 1e{-}4\}$ and pick the best one. In this section, all methods are trained with the same training time. Experiments in Section~\ref{6.1} and \ref{6.2} are conducted on a personal laptop, and the training time for each method is around 10 minutes and 72 minutes, respectively. Experiments on MNIST in Section~\ref{6.3} are trained on four NVIDIA Tesla V100 GPUs, and the training time is around 2.8 hours.

\subsection{Density Estimation on Synthetic Data}\label{6.1}
First, we focus on density estimation on synthetic data. Following the experimental setup of the previous literature~\cite{ pmlr-v119-grathwohl20a, Gradient-Guided2022}, we sample a set of 2D data points as the training set, according to some data distribution $p_\text{data}(\x)$, visualized in the top of Figure~\ref{fig:2}. 
Then we train unnormalized models $p_0(\x;\theta)=e^{f_0(\x;\theta)}$ to learn this distribution, where $f_0(\x;\theta)$ is the multi-layer perceptrons (MLPs) with 3 hidden layers and 300 units per layer. In the experiment, We choose the SGD~(or SGD-style) optimizer. For NCE, eNCE and our method, the noise distribution is selected as a multivariate Gaussian distribution, whose mean and variance are fitted on the training set. 

To quantify the performance of different methods, we adopt the maximum mean discrepancy~(MMD)~\cite{MMD} as the criterion. The MMD metric is widely used to compare different distributions, and a lower MMD indicates that the two distributions are more similar. We sample 10000 points from the data distribution and the learned model, and the computed MMD metric is shown in Table~\ref{tab:my-table}. As can be seen, our method enjoys the lowest MMD among all methods for all six cases, indicating the superiority of the proposed method. Also, we report the Frechet Inception Distance~(FID) score of each method, which is another widely-used measure in comparing distributions. The results are presented in Table~\ref{2tab:my-table}, and our method enjoys better FID scores than other algorithms~(smaller is better). Besides, we compare the estimated density and the ground-truth in Figure~\ref{fig:2}, showing that our method learns the density accurately in most cases. Finally, We compare different methods with the Adam optimizer and investigate the behavior of our algorithm with different values of $\gamma$ and $\beta$ in the appendix. 

\begin{table*}[ht]
\caption{OOD detection results (AUROC$\uparrow$, AUPRC$\uparrow$ and FRP80$\downarrow$) for models trained on CIFAR-10.}
\centering
\resizebox{0.9\textwidth}{!}{%
\begin{tabular}{|c|cl|}
\hline
\text{Dataset}       & \multicolumn{2}{l|}{\qquad \qquad \qquad \qquad \qquad \qquad \ 
 AUROC↑ \qquad \qquad \qquad \ \ AUPRC↑ \qquad \qquad \qquad \ \ \ \ FRP80↓} \\ \hline
                                & NCE       & \ \ 0.6468\ ±\ 0.0122  \qquad \quad \ \ \ 0.5455\ ±\ 0.0053  \qquad \quad \ \ \ 0.4929\ ±\ 0.0301           \\
                                & NCE~(NGD)   & \ \ 0.6748\ ±\ 0.0187 \qquad \quad \ \ \  0.6265\ ±\ 0.1565 \qquad \quad \ \  \ 0.5392\ ±\ 0.0271           \\
                                & eNCE~(NGD)  &  \ \ 0.7451\ ±\ 0.0296 \qquad \quad \ \ \ 0.7052\ ±\ 0.0310\qquad \quad \ \  \ \ 0.4052\ ±\ 0.0586                                \\
                                & MCMC       & \ \ 0.7077\ ±\ 0.0154  \qquad \quad \ \ \ 0.6787\ ±\ 0.0119  \qquad \quad \ \ \ 0.5137\ ±\ 0.0253           \\
                                 & Scoring Matching       & \ \ 0.5329\ ±\ 0.0024  \qquad \quad \ \ \ 0.5304\ ±\ 0.0029  \qquad \quad \ \ \ 0.7706\ ±\ 0.0047           \\
                                  & Contrastive Divergence       & \ \ 0.7735\ ±\ 0.0161  \qquad \quad \ \ \ 0.7371\ ±\ 0.0106  \qquad \quad \ \ \ 0.3733\ ±\ 0.0382           \\
\multirow{-7}{*}{\text{CIFAR10-Interp}} & \textbf{Ours} &  \ \ \textbf{0.8019\ ±\ 0.0323} \qquad \quad \ \ \  \textbf{0.7679\ ±\ 0.0402} \qquad \quad \ \ \  \textbf{0.3185\ ±\ 0.0621} \\ \hline
                                & NCE       & \ \ 0.6425\ ±\ 0.0107 \qquad \quad \ \ \  0.3436\ ±\ 0.0052 \qquad \quad \ \  \ 0.5592\ ±\ 0.0286           \\
                                & NCE~(NGD)   & \ \ 0.6593\ ±\ 0.0035 \qquad \quad \ \ \  0.4361\ ±\ 0.0083 \qquad \quad \ \  \ 0.7284\ ±\ 0.0040           \\
                                & eNCE~(NGD)  &  \ \ 0.6612\ ±\ 0.0075 \qquad \quad \ \ \ 0.4633\ ±\ 0.0109\qquad \quad \ \  \ \ 0.6910\ ±\ 0.0134                                   \\
                                & MCMC       & \ \ 0.5359\ ±\ 0.0078  \qquad \quad \ \ \ 0.2674\ ±\ 0.0048 \qquad \quad \ \ \ 0.6327\ ±\ 0.0037           \\
                                 & Scoring Matching       & \ \ 0.5601\ ±\ 0.0068  \qquad \quad \ \ \ 0.3237\ ±\ 0.0010  \qquad \quad \ \ \ 0.8346\ ±\ 0.0060           \\
                                  & Contrastive Divergence       & \ \ 0.6103\ ±\ 0.0040  \qquad \quad \ \ \ 0.3057\ ±\ 0.0018  \qquad \quad \ \ \ 0.3758\ ±\ 0.0025           \\
\multirow{-7}{*}{\text{SVHN}} & \textbf{Ours} &  \ \ \textbf{0.7843\ ±\ 0.0057} \qquad \quad \ \ \  \textbf{0.5134\ ±\ 0.0076} \qquad \quad \  \ \ \textbf{0.3255\ ±\ 0.2081} \\ \hline
                                & NCE       & \ \ 0.5323\ ±\ 0.0199\qquad \quad \ \ \  \ 0.5129\ ±\ 0.0095 \qquad \quad \ \ \ 0.7122\ ±\ 0.0216             \\
                                & NCE~(NGD)   &\ \ 0.5513\ ±\ 0.0218\qquad \quad \ \ \  \ 0.5458\ ±\ 0.0160\qquad \quad \ \  \ \ 0.7832\ ±\ 0.0306             \\
                                & eNCE~(NGD)  & \ \ 0.5034\ ±\ 0.0069 \qquad \quad \ \ \ 0.5248\ ±\ 0.0158\qquad \quad \ \  \ \ 0.8263\ ±\ 0.0095                                \\
                                & MCMC       & \ \ 0.5669\ ±\ 0.0059  \qquad \quad \ \ \ 0.5604\ ±\ 0.0138  \qquad \quad \ \ \ 0.7217\ ±\ 0.0052           \\
                                 & Scoring Matching       & \ \ 0.5732\ ±\ 0.0014  \qquad \quad \ \ \ 0.5343\ ±\ 0.0070  \qquad \quad \ \ \ 0.6943\ ±\ 0.0059           \\
                                  & Contrastive Divergence       & \ \ 0.5276\ ±\ 0.0098  \qquad \quad \ \ \ 0.5136\ ±\ 0.0120  \qquad \quad \ \ \ 0.6642\ ±\ 0.0111           \\
\multirow{-7}{*}{\text{CIFAR-100}}      & \textbf{Ours} &  \ \ \textbf{0.6044\ ±\ 0.0049} \qquad \quad \ \  \ \textbf{0.6262\ ±\ 0.0079} \qquad \quad \  \ \ \textbf{0.5795\ ±\ 0.0113} \\ \hline
                                & NCE       &\ \ 0.5356\ ±\ 0.0006 \qquad \quad \ \ \ 0.4995\ ±\ 0.0005 \qquad \quad \ \ \ 0.6876\ ±\ 0.0028             \\
                                & NCE~(NGD)   &\ \ 0.6049\ ±\ 0.0189  \qquad \quad \ \ \ 0.5817\ ±\ 0.0083\qquad \quad \ \ \ \   0.7119\ ±\ 0.0408          \\
                                & eNCE~(NGD)  &  \ \ 0.5470\ ±\ 0.0039 \qquad \quad \ \ \ \textbf{0.6840\ ±\ 0.0042}\qquad \quad \ \  \ \ 0.9450\ ±\ 0.0028                                \\
                                & MCMC       & \ \ 0.5359\ ±\ 0.0153  \qquad \quad \ \ \ 0.5107\ ±\ 0.0157  \qquad \quad \ \ \ 0.7436\ ±\ 0.0131           \\
                                 & Scoring Matching       & \ \ 0.5419\ ±\ 0.0057  \qquad \quad \ \ \ 0.5221\ ±\ 0.0030  \qquad \quad \ \ \ 0.7372\ ±\ 0.0097           \\
                                  & Contrastive Divergence       & \ \ 0.5044\ ±\ 0.0054  \qquad \quad \ \ \ 0.4980\ ±\ 0.0060  \qquad \quad \ \ \ 0.6248\ ±\ 0.0022           \\
\multirow{-7}{*}{\text{LSUN-C}} &\textbf{Ours} &  \ \ \textbf{0.6944\ ±\ 0.0061} \qquad \quad \ \  \ {0.6267\ ±\ 0.0046} \qquad \quad \ \ \  \textbf{0.5198\ ±\ 0.0126} \\ \hline
\end{tabular}%
}
\vskip -0.1in
\label{tab:4}
\end{table*}

\subsection{Out-of-distribution Detection} \label{6.2}
Then, we experiment on the out-of-distribution~(OOD) detection, since OOD detection performance is an important measure of the density estimation quality. For this task, we choose CIFAR-10~\cite{Krizhevsky2009Cifar10} as the in-distribution data. We use the energy-based model as our unnormalized model, by setting $p_0(\x;\theta) = e^{f_0(\x;\theta)}$, where $f_0(\x;\theta)$ is a 40-layer WideResNet~\cite{WideRes}. The noise distribution for NCE, eNCE and our method is selected as the multivariate Gaussian distribution, and we use Adam to optimize.  For the OOD test dataset, we use four common benchmarks: CIFAR-10 Interp, SVHN~\cite{Netzer2011SVHN}, CIFAR-100~\cite{Krizhevsky2009Cifar10}, and LSUN~\cite{Yu2015}. We decide whether a test sample $\x$ is anomalous or not by computing the  density $p_0(\x;\theta)$, where a higher value indicates the test sample is more likely to be a normal sample. 

To measure the performance of different methods, we follow previous works~\cite{NEURIPS2019_1e795968, pmlr-v139-havtorn21a,geng2021bounds} to choose three metrics to compare:  (1) area under the receiver operating characteristic curve (AUROC$\uparrow$); (2) area under the precision-recall curve (AUPRC$\uparrow$); and (3) false positive rate at 80\% true positive rate~(FPR80$\downarrow$), where the arrow indicates the direction of improvement of the metrics. These three metrics are commonly used for evaluating OOD detection methods, and the results are reported in Table~\ref{tab:4}. As can be seen, our method performs the best under three different criteria in most cases, implying the effectiveness of the proposed method.

\begin{table*}[ht]
\caption{Average negative log-likelihood~(smaller is better).}
\begin{center}
\resizebox{0.85\textwidth}{!}{%
\begin{tabular}{@{}ccccccc@{}}
\toprule
  \text{NCE} &
  \text{NCE~(NGD)}&
  \text{eNCE~(NGD)} &
  \text{MCMC-OT}&
  \text{CF-EBM} &
  \text{CoopNets} &
  \text{Ours}\\ \midrule
  \text{1.457 ± 0.003} &
  \text{1.618 ± 0.010} &
  \text{1.600 ± 0.044} &
  \text{1.441 ± 0.012} &
  \text{1.864 ± 0.004} &
  \text{1.591 ± 0.022} &
  \textbf{1.394 ± 0.007} \\ \bottomrule
\end{tabular}%
}    
\end{center}
\vskip -0.15in
\label{tab:2}
\end{table*}

\subsection{Learning on Real Image Dataset}\label{6.3}
%\paragraph{Training EBM on MNIST}
Finally, we test our method on two real image datasets, MNIST~\cite{LeCun1998MNIST} and CIFAR-10~\cite{Krizhevsky2009Cifar10}. We describe the setup and the results of MNIST in this subsection, and results of CIFAR-10 can be found in Appendix~\ref{a1}.
For MNIST task, following the same setup in TRE~\cite{Rhodes2020}, the model takes the form of $p_0(\x;\theta) = e^{f_0(\x;\theta)}$, where $f_0(\x;\theta)$ is the 18-layer ResNet~\cite{Resnet18}. For NCE, eNCE and our method, we try three different noise distributions: 1) the empirical distribution $\D_n = \{\x_1,\cdots,\x_n \}$; 2)  multivariate Gaussian distribution, with mean and covariance fitted by the training data; 3) mixture of distribution $\D_n$ and a  multivariate Gaussian distribution. We find the last one is the best choice for all methods. We use Adam to optimize NCE, and generate new samples via MCMC sampling after training the model. The generated samples of our method with different noises are shown in the Appendix~\ref{a1}, and the result with the third noise distribution is presented in Figure~\ref{fig:MNIST}. 
 
We also evaluate the learned model via estimated average negative log-likelihood (bits per dimension) on the
testing sets in Table~\ref{tab:2}, which is computed by the annealed importance sampling~(AIS)~\cite{pmlr-v97-durkan19a}. We also compare our method with some other methods, including MCMC-OT~\cite{add1}, CF-EBM~\cite{add2}, and CoopNets~\cite{add3}. As can be seen, our method attains a better likelihood than other methods, indicating the proposed method performs better in this task. 

\begin{figure}[ht]
\vskip -0.2in
\begin{center}
\includegraphics[width=0.85\columnwidth]{./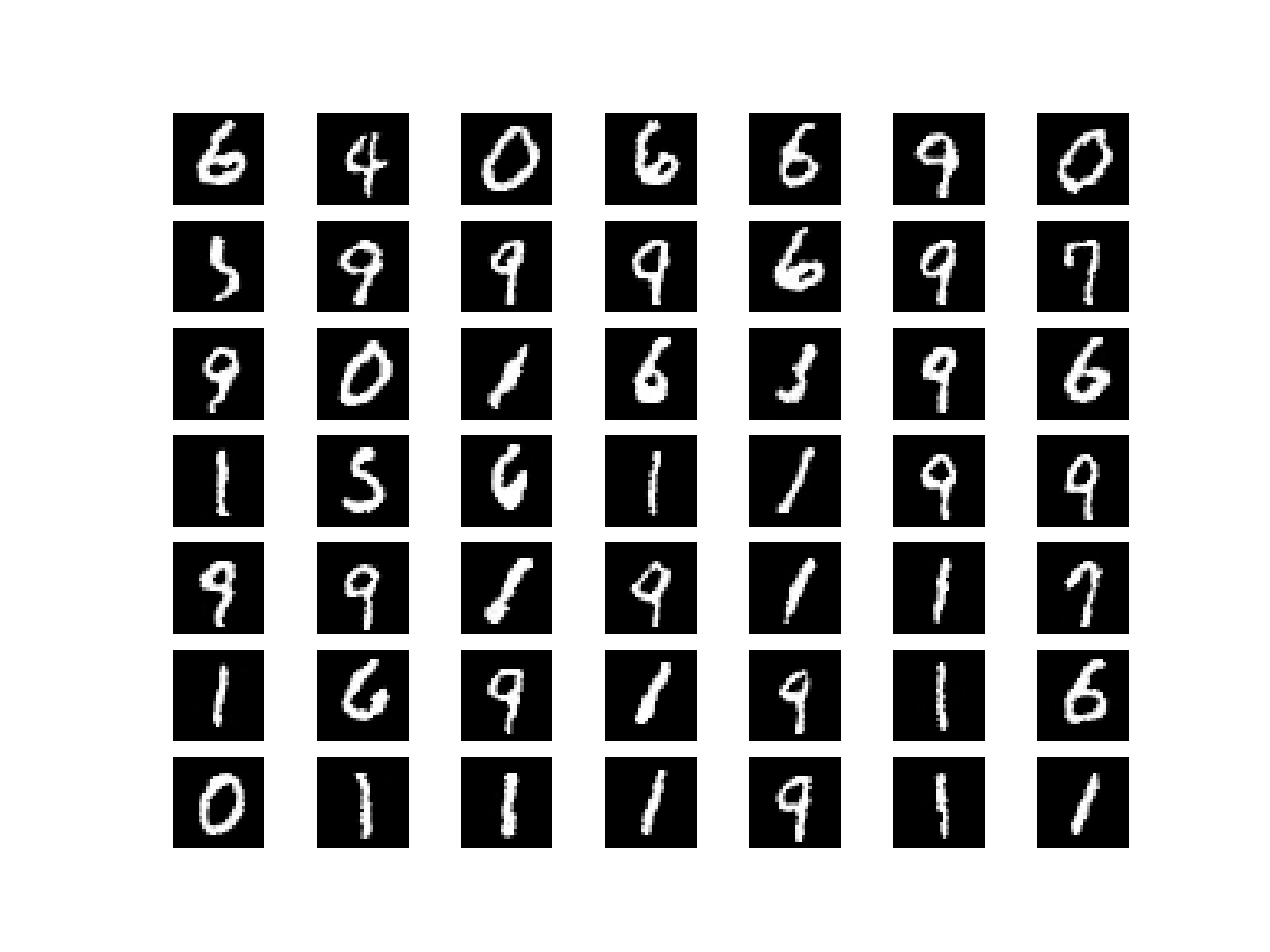}
\vskip -0.25in
\caption{Generated digits using our model via MCMC sampling.}
\label{fig:MNIST}
\end{center}
\vspace{-0.15in}
\end{figure}

\section{Conclusion}
In this paper, we investigate the problem of learning unnormalized models by maximum likelihood estimation. By introducing a noise distribution, we cast the problem as compositional optimization and utilize a stochastic algorithm to solve it. We provide the convergence rate of our method and analyze its relationship with the noise distribution. Besides, we use one-dimensional Gaussian mean estimation as an example to show the better loss landscape of our loss compared with the NCE loss, and the fast convergence of the proposed method. Finally, experiments on practical problems demonstrate the effectiveness of the proposed method.  

\section{Acknowledgments}
W. Jiang, L. Wu and L. Zhang are partially supported by the National Key R\&D Program of China (2021ZD0112802), NSFC (62122037, 61921006), and the Fundamental Research Funds for the Central Universities (2023300246).

\newpage
\bibliography{ref}

\begin{thebibliography}{73}
\providecommand{\natexlab}[1]{#1}
\providecommand{\url}[1]{\texttt{#1}}
\expandafter\ifx\csname urlstyle\endcsname\relax
  \providecommand{\doi}[1]{doi: #1}\else
  \providecommand{\doi}{doi: \begingroup \urlstyle{rm}\Url}\fi

\bibitem[Allen-Zhu et~al.(2019)Allen-Zhu, Li, and Song]{pmlr-v97-allen-zhu19a}
Allen-Zhu, Z., Li, Y., and Song, Z.
\newblock A convergence theory for deep learning via over-parameterization.
\newblock In \emph{Proceedings of the 36th International Conference on Machine
  Learning}, pp.\  242--252, 2019.

\bibitem[An et~al.(2021)An, Xie, and Li]{add1}
An, D., Xie, J., and Li, P.
\newblock Learning deep latent variable models by short-run mcmc inference with
  optimal transport correction.
\newblock In \emph{IEEE/CVF Conference on Computer Vision and Pattern
  Recognition}, pp.\  15410--15419, 2021.

\bibitem[Bose et~al.(2018)Bose, Ling, and Cao]{bose-etal-2018-adversarial}
Bose, A.~J., Ling, H., and Cao, Y.
\newblock Adversarial contrastive estimation.
\newblock In \emph{Proceedings of the 56th Annual Meeting of the Association
  for Computational Linguistics}, pp.\  1021--1032, 2018.

\bibitem[Ceylan \& Gutmann(2018)Ceylan and Gutmann]{pmlr-v80-ceylan18a}
Ceylan, C. and Gutmann, M.~U.
\newblock Conditional noise-contrastive estimation of unnormalised models.
\newblock In \emph{Proceedings of the 35th International Conference on Machine
  Learning}, pp.\  726--734, 2018.

\bibitem[Chen et~al.(2021)Chen, Sun, and Yin]{chen2021solving}
Chen, T., Sun, Y., and Yin, W.
\newblock Solving stochastic compositional optimization is nearly as easy as
  solving stochastic optimization.
\newblock \emph{IEEE Transactions on Signal Processing}, 69:\penalty0
  4937--4948, 2021.

\bibitem[Christian P.~Robert(2004)]{MCMC}
Christian P.~Robert, G.~C.
\newblock \emph{Monte Carlo Statistical Methods}.
\newblock Springer, 2004.

\bibitem[Cutkosky \& Orabona(2019)Cutkosky and Orabona]{cutkosky2019momentum}
Cutkosky, A. and Orabona, F.
\newblock Momentum-based variance reduction in non-convex {SGD}.
\newblock In \emph{Advances in Neural Information Processing Systems 32}, pp.\
  15210--15219, 2019.

\bibitem[Du et~al.(2019)Du, Zhai, Poczos, and Singh]{du2018gradient}
Du, S.~S., Zhai, X., Poczos, B., and Singh, A.
\newblock Gradient descent provably optimizes over-parameterized neural
  networks.
\newblock In \emph{International Conference on Learning Representations}, 2019.

\bibitem[Du \& Mordatch(2019)Du and Mordatch]{NEURIPS2019_378a063b}
Du, Y. and Mordatch, I.
\newblock Implicit generation and modeling with energy based models.
\newblock In \emph{Advances in Neural Information Processing Systems 32}, pp.\
  3603--3613, 2019.

\bibitem[Fang et~al.(2018)Fang, Li, Lin, and Zhang]{Fang2018SPIDERNN}
Fang, C., Li, C.~J., Lin, Z., and Zhang, T.
\newblock Spider: Near-optimal non-convex optimization via stochastic path
  integrated differential estimator.
\newblock \emph{ArXiv e-prints}, arXiv:1807.01695, 2018.

\bibitem[Gao et~al.(2018)Gao, Lu, Zhou, Zhu, and Wu]{8579052}
Gao, R., Lu, Y., Zhou, J., Zhu, S.-C., and Wu, Y.~N.
\newblock Learning generative convnets via multi-grid modeling and sampling.
\newblock In \emph{Proceedings of the IEEE/CVF Conference on Computer Vision
  and Pattern Recognition}, pp.\  9155--9164, 2018.

\bibitem[Geng et~al.(2021)Geng, Wang, Gao, Frellsen, and
  Hauberg]{geng2021bounds}
Geng, C., Wang, J., Gao, Z., Frellsen, J., and Hauberg, S.
\newblock Bounds all around: training energy-based models with bidirectional
  bounds.
\newblock In \emph{Advances in Neural Information Processing Systems 34}, pp.\
  19808--19821, 2021.

\bibitem[Ghadimi et~al.(2020)Ghadimi, Ruszczynski, and Wang]{Ghadimi2020AST}
Ghadimi, S., Ruszczynski, A., and Wang, M.
\newblock A single timescale stochastic approximation method for nested
  stochastic optimization.
\newblock \emph{SIAM Journal on Optimization}, 30\penalty0 (1):\penalty0
  960--979, 2020.

\bibitem[Goodfellow et~al.(2014)Goodfellow, Pouget-Abadie, Mirza, Xu,
  Warde-Farley, Ozair, Courville, and Bengio]{NIPS2014_5ca3e9b1}
Goodfellow, I., Pouget-Abadie, J., Mirza, M., Xu, B., Warde-Farley, D., Ozair,
  S., Courville, A., and Bengio, Y.
\newblock Generative adversarial nets.
\newblock In \emph{Advances in Neural Information Processing Systems 27}, pp.\
  2672--2680, 2014.

\bibitem[Grathwohl(2020)]{jem_code}
Grathwohl, W.
\newblock Joint energy models.
\newblock \url{https://github.com/wgrathwohl/JEM}, 2020.

\bibitem[Grathwohl et~al.(2020{\natexlab{a}})Grathwohl, Wang, Jacobsen,
  Duvenaud, Norouzi, and Swersky]{Grathwohl2020Your}
Grathwohl, W., Wang, K.-C., Jacobsen, J.-H., Duvenaud, D., Norouzi, M., and
  Swersky, K.
\newblock Your classifier is secretly an energy based model and you should
  treat it like one.
\newblock In \emph{International Conference on Learning Representations},
  2020{\natexlab{a}}.

\bibitem[Grathwohl et~al.(2020{\natexlab{b}})Grathwohl, Wang, Jacobsen,
  Duvenaud, and Zemel]{pmlr-v119-grathwohl20a}
Grathwohl, W., Wang, K.-C., Jacobsen, J.-H., Duvenaud, D., and Zemel, R.
\newblock Learning the stein discrepancy for training and evaluating
  energy-based models without sampling.
\newblock In \emph{Proceedings of the 37th International Conference on Machine
  Learning}, pp.\  3732--3747, 2020{\natexlab{b}}.

\bibitem[Gretton et~al.(2012)Gretton, Borgwardt, Rasch, Sch{{\"o}}lkopf, and
  Smola]{MMD}
Gretton, A., Borgwardt, K.~M., Rasch, M.~J., Sch{{\"o}}lkopf, B., and Smola, A.
\newblock A kernel two-sample test.
\newblock \emph{Journal of Machine Learning Research}, 13\penalty0
  (25):\penalty0 723--773, 2012.

\bibitem[Guo et~al.(2021)Guo, Xu, Yin, Jin, and Yang]{guo2021stochastic}
Guo, Z., Xu, Y., Yin, W., Jin, R., and Yang, T.
\newblock On stochastic moving-average estimators for non-convex optimization.
\newblock \emph{ArXiv e-prints}, arXiv:2104.14840, 2021.

\bibitem[Gutmann \& Hyv{{\"a}}rinen(2010)Gutmann and
  Hyv{{\"a}}rinen]{pmlr-v9-gutmann10a}
Gutmann, M.~U. and Hyv{{\"a}}rinen, A.
\newblock Noise-contrastive estimation: A new estimation principle for
  unnormalized statistical models.
\newblock In \emph{Proceedings of the 13th International Conference on
  Artificial Intelligence and Statistics}, pp.\  297--304, 2010.

\bibitem[Gutmann \& Hyv{{\"a}}rinen(2012)Gutmann and
  Hyv{{\"a}}rinen]{JMLR:v13:gutmann12a}
Gutmann, M.~U. and Hyv{{\"a}}rinen, A.
\newblock Noise-contrastive estimation of unnormalized statistical models, with
  applications to natural image statistics.
\newblock \emph{Journal of Machine Learning Research}, 13\penalty0
  (11):\penalty0 307--361, 2012.

\bibitem[Han et~al.(2019)Han, Nijkamp, Fang, Hill, Zhu, and Wu]{HanNFHZW19}
Han, T., Nijkamp, E., Fang, X., Hill, M., Zhu, S., and Wu, Y.~N.
\newblock Divergence triangle for joint training of generator model,
  energy-based model, and inferential model.
\newblock In \emph{Proceedings of the IEEE/CVF Conference on Computer Vision
  and Pattern Recognition}, pp.\  8670--8679, 2019.

\bibitem[Havtorn et~al.(2021)Havtorn, Frellsen, Hauberg, and
  Maal{\o}e]{pmlr-v139-havtorn21a}
Havtorn, J.~D., Frellsen, J., Hauberg, S., and Maal{\o}e, L.
\newblock Hierarchical {VAEs} know what they don’t know.
\newblock In \emph{Proceedings of the 38th International Conference on Machine
  Learning}, pp.\  4117--4128, 2021.

\bibitem[He et~al.(2016)He, Zhang, Ren, and Sun]{Resnet18}
He, K., Zhang, X., Ren, S., and Sun, J.
\newblock Deep residual learning for image recognition.
\newblock In \emph{Proceedings of the IEEE/CVF Conference on Computer Vision
  and Pattern Recognition}, pp.\  770--778, 2016.

\bibitem[Henaff(2020)]{pmlr-v119-henaff20a}
Henaff, O.
\newblock Data-efficient image recognition with contrastive predictive coding.
\newblock In \emph{Proceedings of the 37th International Conference on Machine
  Learning}, pp.\  4182--4192, 2020.

\bibitem[Hinton(2002)]{Hinton2002}
Hinton, G.~E.
\newblock Training products of experts by minimizing contrastive divergence.
\newblock \emph{Neural computation}, 14\penalty0 (8):\penalty0 1771–1800,
  2002.

\bibitem[Hjelm et~al.(2019)Hjelm, Fedorov, Lavoie-Marchildon, Grewal, Bachman,
  Trischler, and Bengio]{hjelm2018learning}
Hjelm, R.~D., Fedorov, A., Lavoie-Marchildon, S., Grewal, K., Bachman, P.,
  Trischler, A., and Bengio, Y.
\newblock Learning deep representations by mutual information estimation and
  maximization.
\newblock In \emph{International Conference on Learning Representations}, 2019.

\bibitem[Hyv{{\"a}}rinen(2005)]{JMLR:v6:hyvarinen05a}
Hyv{{\"a}}rinen, A.
\newblock Estimation of non-normalized statistical models by score matching.
\newblock \emph{Journal of Machine Learning Research}, 6\penalty0
  (24):\penalty0 695--709, 2005.

\bibitem[Jiang et~al.(2022{\natexlab{a}})Jiang, Li, Wang, Zhang, and
  Yang]{jiang2022multiblocksingleprobe}
Jiang, W., Li, G., Wang, Y., Zhang, L., and Yang, T.
\newblock Multi-block-single-probe variance reduced estimator for coupled
  compositional optimization.
\newblock In \emph{Advances in Neural Information Processing Systems 35},
  2022{\natexlab{a}}.

\bibitem[Jiang et~al.(2022{\natexlab{b}})Jiang, Wang, Wang, Zhang, and
  Yang]{ICML:2022:Jiang}
Jiang, W., Wang, B., Wang, Y., Zhang, L., and Yang, T.
\newblock Optimal algorithms for stochastic multi-level compositional
  optimization.
\newblock In \emph{Proceedings of the 39th International Conference on Machine
  Learning}, pp.\  10195--10216, 2022{\natexlab{b}}.

\bibitem[Jianwen et~al.(2018)Jianwen, Lu, Zhu, and Wu]{add5}
Jianwen, X., Lu, Y., Zhu, S., and Wu, Y.
\newblock Cooperative training of descriptor and generator networks.
\newblock \emph{IEEE transactions on pattern analysis and machine
  intelligence}, 2018.

\bibitem[Karimi et~al.(2016)Karimi, Nutini, and Schmidt]{Karimi2016LinearCO}
Karimi, H., Nutini, J., and Schmidt, M.
\newblock Linear convergence of gradient and proximal-gradient methods under
  the {P}olyak-{{\L}}ojasiewicz condition.
\newblock In \emph{Machine Learning and Knowledge Discovery in Databases}, pp.\
   795--811, 2016.

\bibitem[Kong et~al.(2020)Kong, de~Masson~d'Autume, Yu, Ling, Dai, and
  Yogatama]{Kong2020A}
Kong, L., de~Masson~d'Autume, C., Yu, L., Ling, W., Dai, Z., and Yogatama, D.
\newblock A mutual information maximization perspective of language
  representation learning.
\newblock In \emph{International Conference on Learning Representations}, 2020.

\bibitem[Krizhevsky(2009)]{Krizhevsky2009Cifar10}
Krizhevsky, A.
\newblock Learning multiple layers of features from tiny images.
\newblock \emph{Masters Thesis, Deptartment of Computer Science, University of
  Toronto}, 2009.

\bibitem[LeCun et~al.(1998)LeCun, Bottou, Bengio, and Haffner]{LeCun1998MNIST}
LeCun, Y., Bottou, L., Bengio, Y., and Haffner, P.
\newblock Gradient-based learning applied to document recognition.
\newblock In \emph{Proceedings of the IEEE}, pp.\  2278--2324, 1998.

\bibitem[LeCun et~al.(2006)LeCun, Chopra, Hadsell, Ranzato, and
  Huang]{LeCun2006ATO}
LeCun, Y., Chopra, S., Hadsell, R., Ranzato, A., and Huang, F.~J.
\newblock A tutorial on energy-based learning.
\newblock \emph{Predicting structured data}, 2006.

\bibitem[Liu et~al.(2022{\natexlab{a}})Liu, Rosenfeld, Ravikumar, and
  Risteski]{liu2022analyzing}
Liu, B., Rosenfeld, E., Ravikumar, P.~K., and Risteski, A.
\newblock Analyzing and improving the optimization landscape of
  noise-contrastive estimation.
\newblock In \emph{International Conference on Learning Representations},
  2022{\natexlab{a}}.

\bibitem[Liu et~al.(2022{\natexlab{b}})Liu, Liu, and Ji]{Gradient-Guided2022}
Liu, M., Liu, H., and Ji, S.
\newblock Gradient-guided importance sampling for learning binary energy-based
  models.
\newblock \emph{ArXiv e-prints}, arXiv:2210.05782, 2022{\natexlab{b}}.

\bibitem[Mnih \& Kavukcuoglu(2013)Mnih and Kavukcuoglu]{NIPS2013_db2b4182}
Mnih, A. and Kavukcuoglu, K.
\newblock Learning word embeddings efficiently with noise-contrastive
  estimation.
\newblock In \emph{Advances in Neural Information Processing Systems 26}, pp.\
  2265--2273, 2013.

\bibitem[Nash \& Durkan(2019)Nash and Durkan]{pmlr-v97-durkan19a}
Nash, C. and Durkan, C.
\newblock Autoregressive energy machines.
\newblock In \emph{Proceedings of the 36th International Conference on Machine
  Learning}, pp.\  1735--1744, 2019.

\bibitem[Netzer et~al.(2011)Netzer, Wang, Coates, Bissacco, Wu, and
  Ng]{Netzer2011SVHN}
Netzer, Y., Wang, T., Coates, A., Bissacco, A., Wu, B., and Ng, A.~Y.
\newblock Reading digits in natural images with unsupervised feature learning.
\newblock In \emph{Advances in Neural Information Processing Systems Workshop
  on Deep Learning and Unsupervised Feature Learning}, 2011.

\bibitem[Nguyen et~al.(2017)Nguyen, Liu, Scheinberg, and
  Tak{a}{{c}}]{arxiv.1703.00102}
Nguyen, L.~M., Liu, J., Scheinberg, K., and Tak{a}{{c}}, M.
\newblock {SARAH:} {A} novel method for machine learning problems using
  stochastic recursive gradient.
\newblock In \emph{Proceedings of the 34th International Conference on Machine
  Learning}, pp.\  2613--2621, 2017.

\bibitem[Nijkamp et~al.(2019{\natexlab{a}})Nijkamp, Hill, Han, Zhu, and
  Wu]{Nijkamp2019OnTA}
Nijkamp, E., Hill, M., Han, T., Zhu, S.-C., and Wu, Y.~N.
\newblock On the anatomy of {MCMC}-based maximum likelihood learning of
  energy-based models.
\newblock In \emph{Proceedings of the 33th AAAI Conference on Artificial
  Intelligence}, 2019{\natexlab{a}}.

\bibitem[Nijkamp et~al.(2019{\natexlab{b}})Nijkamp, Hill, Zhu, and
  Wu]{NEURIPS2019_2bc8ae25}
Nijkamp, E., Hill, M., Zhu, S.-C., and Wu, Y.~N.
\newblock Learning non-convergent non-persistent short-run {MCMC} toward
  energy-based model.
\newblock In \emph{Advances in Neural Information Processing Systems 32}, pp.\
  5233--5243, 2019{\natexlab{b}}.

\bibitem[Pang et~al.(2020)Pang, Xu, LI, Song, Ermon, and
  Zhu]{NEURIPS2020_de6b1cf3}
Pang, T., Xu, K., LI, C., Song, Y., Ermon, S., and Zhu, J.
\newblock Efficient learning of generative models via finite-difference score
  matching.
\newblock In \emph{Advances in Neural Information Processing Systems 33}, pp.\
  19175--19188, 2020.

\bibitem[Qi et~al.(2021{\natexlab{a}})Qi, Guo, Xu, Jin, and Yang]{qi2021online}
Qi, Q., Guo, Z., Xu, Y., Jin, R., and Yang, T.
\newblock An online method for a class of distributionally robust optimization
  with non-convex objectives.
\newblock \emph{ArXiv e-prints}, arXiv:2006.10138, 2021{\natexlab{a}}.

\bibitem[Qi et~al.(2021{\natexlab{b}})Qi, Luo, Xu, Ji, and
  Yang]{qi2021stochastic}
Qi, Q., Luo, Y., Xu, Z., Ji, S., and Yang, T.
\newblock Stochastic optimization of areas under precision-recall curves with
  provable convergence.
\newblock In \emph{Advances in Neural Information Processing Systems 34}, pp.\
  1752--1765, 2021{\natexlab{b}}.

\bibitem[Ren et~al.(2019)Ren, Liu, Fertig, Snoek, Poplin, Depristo, Dillon, and
  Lakshminarayanan]{NEURIPS2019_1e795968}
Ren, J., Liu, P.~J., Fertig, E., Snoek, J., Poplin, R., Depristo, M., Dillon,
  J., and Lakshminarayanan, B.
\newblock Likelihood ratios for out-of-distribution detection.
\newblock In \emph{Advances in Neural Information Processing Systems 32}, pp.\
  14680--14691, 2019.

\bibitem[Rhodes et~al.(2020)Rhodes, Xu, and Gutmann]{Rhodes2020}
Rhodes, B., Xu, K., and Gutmann, M.~U.
\newblock Telescoping density-ratio estimation.
\newblock In \emph{Advances in Neural Information Processing Systems 33}, pp.\
  4905--4916, 2020.

\bibitem[Ruiqi et~al.(2020)Ruiqi, Nijkamp, Kingma, Xu, Dai, and Wu]{quiqi2020}
Ruiqi, G., Nijkamp, E., Kingma, D., Xu, Z., Dai, A., and Wu, Y.
\newblock Flow contrastive estimation of energy-based models.
\newblock In \emph{Proceedings of the IEEE/CVF Conference on Computer Vision
  and Pattern Recognition}, pp.\  7518--7528, 2020.

\bibitem[Seitzer(2020)]{Seitzer2020FID}
Seitzer, M.
\newblock {pytorch-fid: FID Score for PyTorch}, 2020.

\bibitem[Song \& Ermon(2019)Song and Ermon]{NEURIPS2019_3001ef25}
Song, Y. and Ermon, S.
\newblock Generative modeling by estimating gradients of the data distribution.
\newblock In \emph{Advances in Neural Information Processing Systems 32}, pp.\
  11895--11907, 2019.

\bibitem[Song et~al.(2019)Song, Garg, Shi, and Ermon]{song2019}
Song, Y., Garg, S., Shi, J., and Ermon, S.
\newblock Sliced score matching: A scalable approach to density and score
  estimation.
\newblock In \emph{Proceedings of the 35th Conference on Uncertainty in
  Artificial Intelligence}, 2019.

\bibitem[Tian et~al.(2020)Tian, Krishnan, and Isola]{Multiview}
Tian, Y., Krishnan, D., and Isola, P.
\newblock \emph{Contrastive Multiview Coding}.
\newblock Springer, 2020.

\bibitem[Tieleman(2008)]{Tieleman2008TrainingRB}
Tieleman, T.
\newblock Training restricted {B}oltzmann machines using approximations to the
  likelihood gradient.
\newblock In \emph{Proceedings of the 25th International Conference on Machine
  Learning}, 2008.

\bibitem[Vaart(1998)]{vaart_1998}
Vaart, A. W. v.~d.
\newblock \emph{Asymptotic Statistics}.
\newblock Cambridge University Press, 1998.

\bibitem[Vincent(2011)]{Vincent2011}
Vincent, P.
\newblock A connection between score matching and denoising autoencoders.
\newblock \emph{Neural Computation}, 23\penalty0 (7):\penalty0 1661--1674,
  2011.

\bibitem[Wang \& Yang(2022)Wang and Yang]{pmlr-v162-wang22ak}
Wang, B. and Yang, T.
\newblock Finite-sum coupled compositional stochastic optimization: Theory and
  applications.
\newblock In \emph{Proceedings of the 39th International Conference on Machine
  Learning}, pp.\  23292--23317, 2022.

\bibitem[Wang et~al.(2016)Wang, Liu, and Fang]{wang2016accelerating}
Wang, M., Liu, J., and Fang, E.
\newblock Accelerating stochastic composition optimization.
\newblock In \emph{Advances in Neural Information Processing Systems 29}, pp.\
  1714--1722, 2016.

\bibitem[Wang et~al.(2017{\natexlab{a}})Wang, Fang, and
  Liu]{wang2017stochastic}
Wang, M., Fang, E.~X., and Liu, H.
\newblock Stochastic compositional gradient descent: algorithms for minimizing
  compositions of expected-value functions.
\newblock \emph{Mathematical Programming}, 161\penalty0 (1-2):\penalty0
  419--449, 2017{\natexlab{a}}.

\bibitem[Wang et~al.(2017{\natexlab{b}})Wang, Liu, and
  Fang]{DBLP:journals/jmlr/WangLF17}
Wang, M., Liu, J., and Fang, E.~X.
\newblock Accelerating stochastic composition optimization.
\newblock \emph{Journal of Machine Learning Research}, 18:\penalty0
  105:1--105:23, 2017{\natexlab{b}}.

\bibitem[Wasserman(2004)]{Wasserman}
Wasserman, L.
\newblock \emph{All of Statistics}.
\newblock Springer, 2004.

\bibitem[Welling \& Teh(2011)Welling and Teh]{Langevin}
Welling, M. and Teh, Y.~W.
\newblock Bayesian learning via stochastic gradient langevin dynamics.
\newblock In \emph{Proceedings of the 28th International Conference on Machine
  Learning}, 2011.

\bibitem[Xie et~al.(2016)Xie, Lu, Zhu, and Wu]{add4}
Xie, J., Lu, Y., Zhu, S.-C., and Wu, Y.~N.
\newblock A theory of generative convnet.
\newblock In \emph{Proceedings of the 33rd International Conference on
  International Conference on Machine Learning}, pp.\  2635–2644, 2016.

\bibitem[Xie et~al.(2018)Xie, Lu, Gao, and Wu]{add3}
Xie, J., Lu, Y., Gao, R., and Wu, Y.~N.
\newblock Cooperative learning of energy-based model and latent variable model
  via mcmc teaching.
\newblock In \emph{Proceedings of the AAAI Conference on Artificial
  Intelligence}, 2018.

\bibitem[Xie et~al.(2021)Xie, Zheng, and Li]{add7}
Xie, J., Zheng, Z., and Li, P.
\newblock Learning energy-based model with variational auto-encoder as
  amortized sampler.
\newblock In \emph{Proceedings of the AAAI Conference on Artificial
  Intelligence}, 2021.

\bibitem[Xie et~al.(2022)Xie, Zhu, Li, and Li]{add6}
Xie, J., Zhu, Y., Li, J., and Li, P.
\newblock A tale of two flows: Cooperative learning of langevin flow and
  normalizing flow toward energy-based model.
\newblock In \emph{International Conference on Learning Representations}, 2022.

\bibitem[Yu et~al.(2015)Yu, Seff, Zhang, Song, Funkhouser, and Xiao]{Yu2015}
Yu, F., Seff, A., Zhang, Y., Song, S., Funkhouser, T., and Xiao, J.
\newblock {LSUN}: Construction of a large-scale image dataset using deep
  learning with humans in the loop.
\newblock \emph{ArXiv e-prints}, arXiv:1506.03365, 2015.

\bibitem[Yu et~al.(2020)Yu, Song, Song, and Ermon]{pmlr-v119-yu20g}
Yu, L., Song, Y., Song, J., and Ermon, S.
\newblock Training deep energy-based models with f-divergence minimization.
\newblock In \emph{Proceedings of the 37th International Conference on Machine
  Learning}, volume 119, pp.\  10957--10967, 2020.

\bibitem[Zagoruyko \& Komodakis(2016)Zagoruyko and Komodakis]{WideRes}
Zagoruyko, S. and Komodakis, N.
\newblock Wide residual networks.
\newblock In \emph{Proceedings of the British Machine Vision Conference}, 2016.

\bibitem[Zhang \& Xiao(2019)Zhang and Xiao]{Zhang2019ASC}
Zhang, J. and Xiao, L.
\newblock A stochastic composite gradient method with incremental variance
  reduction.
\newblock In \emph{Advances in Neural Information Processing Systems 32}, pp.\
  9075--9085, 2019.

\bibitem[Zhang \& Xiao(2021)Zhang and Xiao]{Zhang2021MultiLevelCS}
Zhang, J. and Xiao, L.
\newblock Multilevel composite stochastic optimization via nested variance
  reduction.
\newblock \emph{SIAM Journal on Optimization}, 31\penalty0 (2):\penalty0
  1131--1157, 2021.

\bibitem[Zhao et~al.(2021)Zhao, Xie, and Li]{add2}
Zhao, Y., Xie, J., and Li, P.
\newblock Learning energy-based generative models via coarse-to-fine expanding
  and sampling.
\newblock In \emph{International Conference on Learning Representations}, 2021.

\end{thebibliography}
\bibliographystyle{icml2023}

\newpage
\appendix
\onecolumn

\section{Omitted Experimental Results}\label{a1}
In this section, we present omitted experimental results on the task of density estimation and image generation. 
\subsection{More Results on Density Estimation}
\paragraph{Results for Adam optimizer}
First, we show the performance of different methods with Adam~(or Adam-style) optimizer in the task of Density Estimation on Synthetic Data. The results are shown in Table~\ref{3tab:my-table} and Table~\ref{4tab:my-table}. As can be seen, our method performs better than other algorithms in most cases.
\begin{table*}[ht]
\centering
\vskip -0.2in
\caption{Results on synthetic data in terms of MMD with Adam optimizer~(lower is better).}
\resizebox{0.9\textwidth}{!}{%
\begin{tabular}{@{}cclcccc@{}}
\toprule
\text{Method} &
  \textit{2spirals} &
  \textit{8gaussians} &
  \textit{checkerboard} &
  \textit{circles} &
  \textit{moons} &
  \textit{swissroll} \\ \midrule
\text{NCE}      & 3.230 ± 0.460 & 0.130 ± 0.055 & 1.710 ± 0.190 & 1.045 ± 0.208 & 4.588 ± 0.688 & 1.510 ± 0.498 \\ \midrule
\text{NCE~(NGD)}  & 2.533 ± 0.716 & 0.128 ± 0.020 & 1.807 ± 0.451 & 0.947 ± 0.265 & 4.202 ± 0.474 & 1.576 ± 0.589 \\ \midrule
\text{eNCE~(NGD)} & 2.585 ± 0.841 & 0.146 ± 0.082 & 1.690 ± 0.264 & 0.968 ± 0.342 & 3.852 ± 0.957 & 1.604 ± 0.490 \\ \midrule
\text{MCMC} & \textbf{2.342\ ±\ 0.658} & 0.178\ ±\ 0.105 & 1.710\ ±\ 0.394 & 0.937\ ±\ 0.387 & 3.680\ ±\ 0.916 & 1.438\ ±\ 0.375 \\ \midrule 
\text{Score Matching} &  2.727 ±\ 0.579 & 0.135\ ±\ 0.071 & 3.130\ ±\ 0.497 & 1.706\ ±\ 0.309 & 5.296\ ±\ 0.221 & 2.874\ ±\ 0.688\\ \midrule
\text{Contrastive Divergence} &  2.516\ ±\ 0.984 & 0.173\ ±\ 0.066 & 1.683\ ±\ 0.904 & 0.927\ ±\ 0.432 & 3.904\ ±\ 0.914 & 1.498\ ±\ 0.368\\ \midrule 
\textbf{Ours} &
  2.488 ± 0.356 &
  \textbf{0.117 ± 0.086} &
  \textbf{1.651 ± 0.166} &
  \textbf{0.886 ± 0.129} &
  \textbf{3.644 ± 0.490} &
  \textbf{1.338 ± 0.131} \\ \bottomrule
\end{tabular}%
}
\label{3tab:my-table}
\end{table*}

\begin{table*}[h]
\vskip -0.2in
\centering
\caption{Results on synthetic data in terms of FID with Adam optimizer~(lower is better).}
\resizebox{0.9\textwidth}{!}{%
\begin{tabular}{@{}cclcccc@{}}
\toprule
\text{Method} &
  \textit{2spirals} &
  \textit{8gaussians} &
  \textit{checkerboard} &
  \textit{circles} &
  \textit{moons} &
  \textit{swissroll} \\ \midrule
\text{NCE}      & 0.053 ± 0.030 & 0.096 ± 0.042 & 0.156 ± 0.121 & 0.068 ± 0.028 & 0.046 ± 0.031 & 0.159 ± 0.069 \\ \midrule
\text{NCE~(NGD)}  & 0.067 ± 0.315 & 0.104 ± 0.038 & 0.167 ± 0.091 & 0.061 ± 0.030 & 0.029 ± 0.017 & 0.171 ± 0.065 \\ \midrule
\text{eNCE~(NGD)} & 0.072 ± 0.053 & 0.127 ± 0.085 & 0.102 ± 0.081 & 0.066 ± 0.046 & 0.038 ± 0.032 & 0.205 ± 0.035 \\ \midrule
\text{MCMC} & 0.047\ ±\ 0.025 & 0.107\ ±\ 0.073 & 0.097\ ±\ 0.084 & 0.049\ ±\ 0.028 & 0.052\ ±\ 0.021 & 0.161\ ±\ 0.088 \\ \midrule 
\text{Score Matching} &  0.070\ ±\ 0.024 & 0.087\ ±\ 0.050 & 0.231\ ±\ 0.110 & 0.086\ ±\ 0.034 & 0.037\ ±\ 0.018 & 0.190\ ±\ 0.148\\ \midrule
\text{Contrastive Divergence} &  0.064\ ±\ 0.025 & 0.092\ ±\ 0.034 & 0.093\ ±\ 0.034 & 0.076\ ±\ 0.038 & 0.062\ ±\ 0.022 & 0.148\ ±\ 0.029\\ \midrule 
\textbf{Ours} &
  \textbf{0.040 ± 0.026} &
  \textbf{0.067 ± 0.030} &
  \textbf{0.086 ± 0.101} &
  \textbf{0.040 ± 0.035} &
  \textbf{0.028 ± 0.037} &
  \textbf{0.144 ± 0.078}  \\ \bottomrule
\end{tabular}%
}
\vskip -0.1in
\label{4tab:my-table}
\end{table*}

\paragraph{Results with different $\gamma$ and $\beta$}
Then, we investigate the behavior of our algorithm  with different values of $\gamma$ and $\beta$. In the previous experiment, we simply set $\gamma=0.1$ and $\beta=0.9$. Here, we first fix  $\gamma=0.1$ and enumerate $\beta$ from the set $\{0.8, 0.9, 0.99\}$. Then, we fix $\beta=0.9$ and enumerate $\gamma$ from the set $\{0.01, 0.1, 0.2\}$. The results are reported in Table~\ref{6tab:my-table} and Table~\ref{7tab:my-table}, which indicates that our method is not very sensitive to the choice of $\beta$ and $\gamma$ within a certain range.
\begin{table*}[ht]
\vskip -0.1in
\centering
\caption{Results on synthetic data in terms of MMD ~(lower is better).}
\resizebox{0.9\textwidth}{!}{%
\begin{tabular}{@{}cclcccc@{}}
\toprule
\text{value} &
  \textit{2spirals} &
  \textit{8gaussians} &
  \textit{checkerboard} &
  \textit{circles} &
  \textit{moons} &
  \textit{swissroll} \\ \midrule
\text{$\beta =0.8$}      & 3.355\ ±\ 0.781&0.168\ ±\ 0.098	&1.889\ ±\ 0.506	&1.355\ ±\ 0.164	&4.762\ ±\ 0.192	&2.784\ ±\ 0.618 \\ \midrule
\text{$\beta =0.9$}  & 3.040\ ±\ 0.199	&0.132\ ±\ 0.098	&1.645\ ±\ 0.251&	1.075\ ±\ 0.169	&4.673\ ±\ 0.519&	2.566\ ±\ 0.274 \\ \midrule
\text{$\beta =0.99$} & 3.038\ ±\ 0.395&	0.150\ ±\ 0.078&	1.603\ ±\ 0.258&	1.120\ ±\ 0.211&	4.816\ ±\ 0.292&	2.528\ ±\ 0.371
  \\ \bottomrule
\end{tabular}%
}
\label{6tab:my-table}
\end{table*}

\begin{table*}[!htbp]
\centering
\vskip -0.2in
\caption{Results on synthetic data in terms of MMD~(lower is better).}
\resizebox{0.9\textwidth}{!}{%
\begin{tabular}{@{}cclcccc@{}}
\toprule
\text{Method} &
  \textit{2spirals} &
  \textit{8gaussians} &
  \textit{checkerboard} &
  \textit{circles} &
  \textit{moons} &
  \textit{swissroll} \\ \midrule
\text{$\gamma=0.01$}      & 	2.982\ ±\ 0.304&	0.141\ ±\ 0.106	&1.730\ ±\ 0.716	&1.345\ ±\ 0.198	&4.650\ ±\ 0.302&	2.595\ ±\ 0.493 \\ \midrule
\text{$\gamma=0.1$}  & 3.040
\ ±\
 0.199	&0.132\ ±\ 0.098	&1.645\ ±\ 0.251&	1.075\ ±\ 0.169&	4.673\ ±\ 0.519	&2.566\ ±\ 0.274\\ \midrule
\text{$\gamma=0.2$} & 	3.168\ ±\ 0.360&	0.145\ ±\ 0.052	&1.718\ ±\ 0.418&	1.152\ ±\ 0.109&	4.772\ ±\ 0.651	&2.648\ ±\ 0.601
 \\ \bottomrule
\end{tabular}%
}
\label{7tab:my-table}
\end{table*}

\subsection{More Results on Image Generation}
\paragraph{Training EBM on MNIST}
To verify our analysis for the noise distribution, we compare the image generation results with different noises: 1) the empirical distribution $\D_n = \{\x_1,\cdots,\x_n \}$; 2) Gaussian distribution with its mean and variance fitted on the training data; 3) mixture of empirical distribution $\D_n$ and a fitted Gaussian noise. The results are shown in Figure~\ref{fig:3}. As can be seen, the first two cases perform poorly and only the third case generates images similar to the original MNIST dataset, which is consistent with our suggestions.
\begin{figure*}[ht]
\vskip -0.1in
\begin{center}
\centerline{\subfigure[empirical distribution]{\includegraphics[width=0.35\textwidth]{./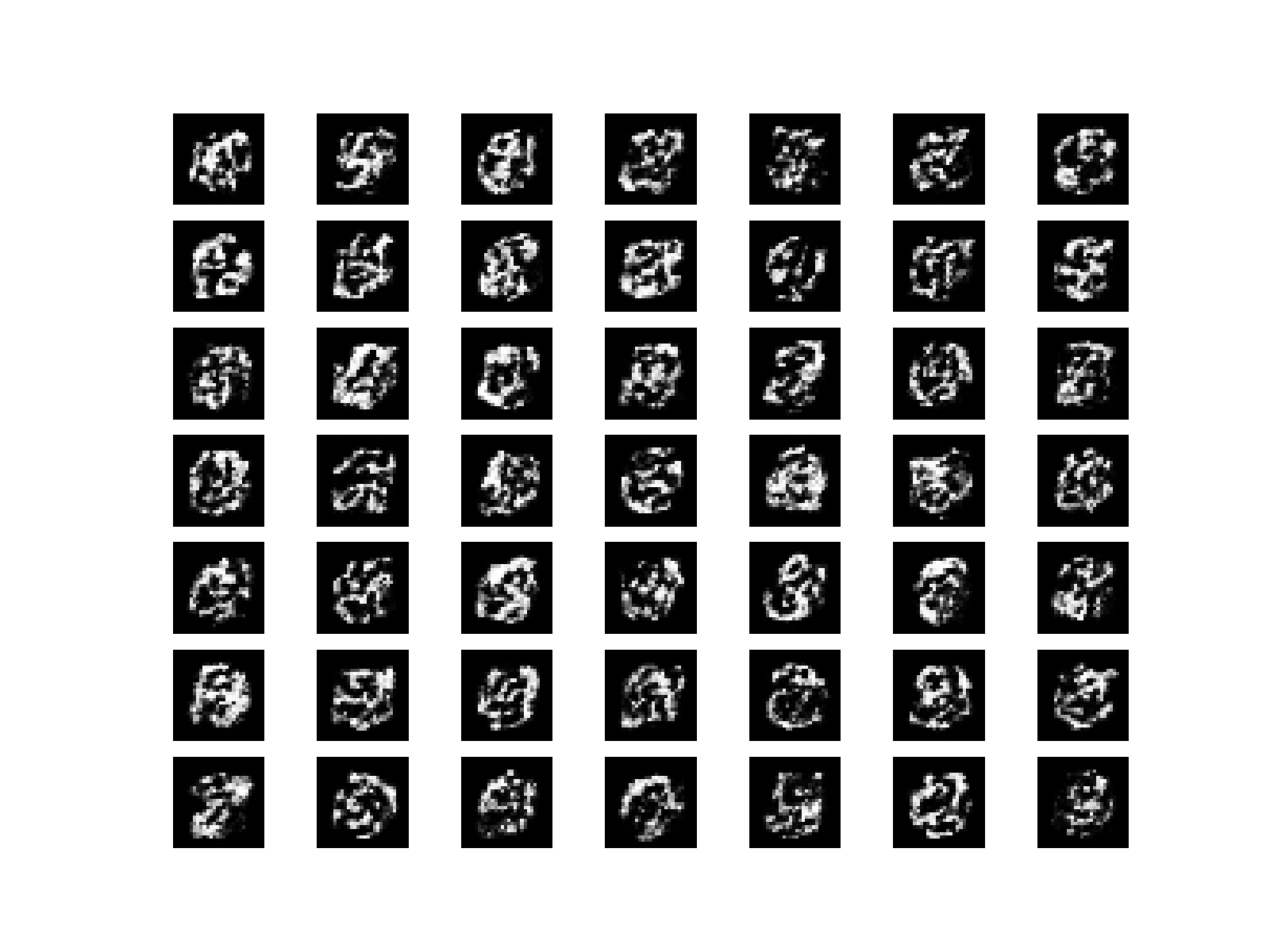}}
\hspace{-0.3in}
\subfigure[fitted Gaussian distribution]
{\includegraphics[width=0.35\textwidth]{./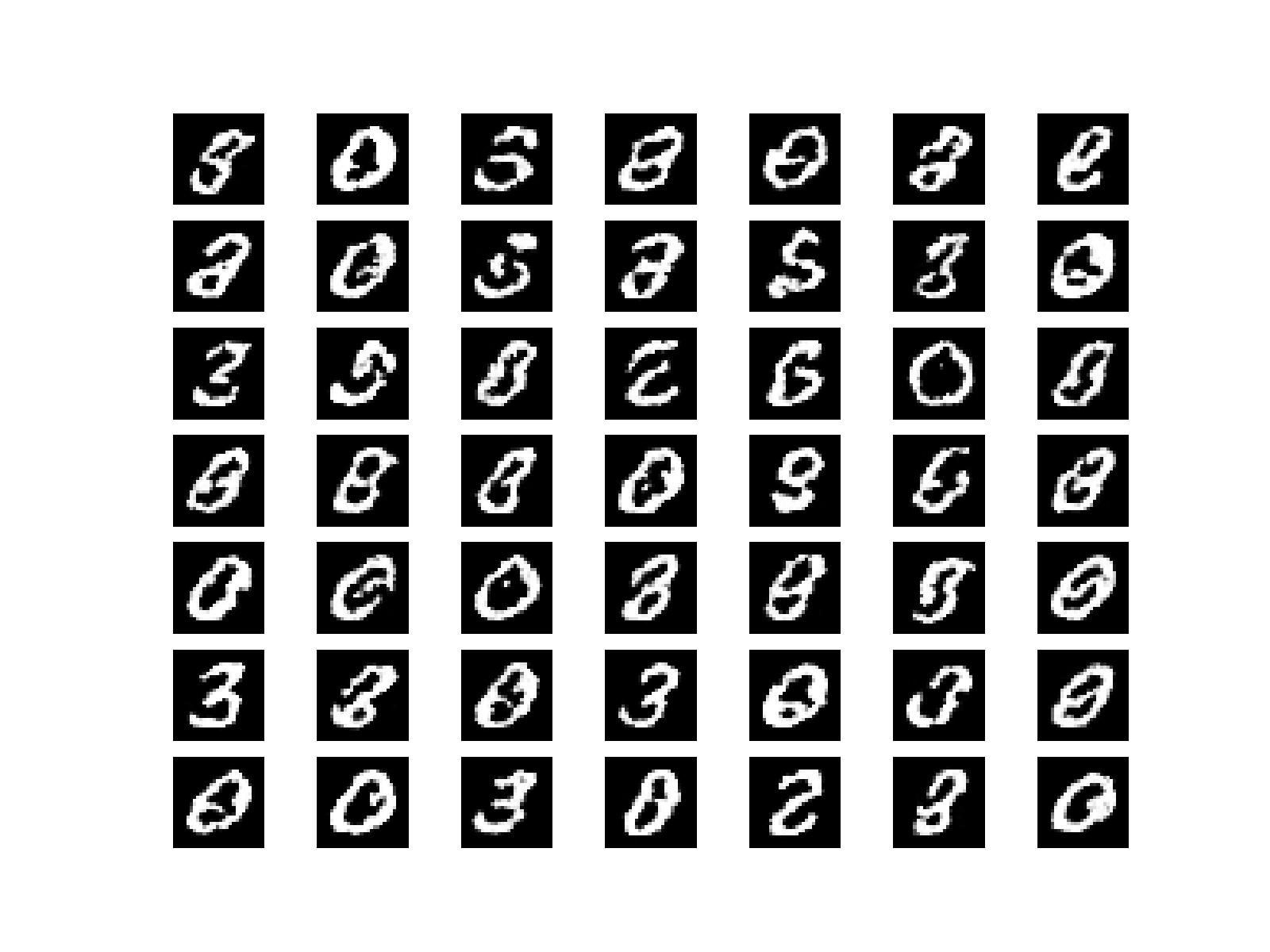}}
\hspace{-0.3in}
\subfigure[empirical distribution + noise]{\includegraphics[width=0.35\textwidth]{./Fig/M.pdf}}}
\vskip -0.1in
\caption{Generated samples with different noise distributions.}
\label{fig:3}
\end{center}
\vskip -0.3in
\end{figure*}
\paragraph{Training EBM on CIFAR-10}
We further test our method on the more complex CIFAR-10 dataset. We use the codebase of the JEM model \cite{Grathwohl2020Your}, which performs  generation and classification simultaneously. We replace the standard EBM with Langevin dynamics with our proposed method, and follow exactly the same parameter setting as specified in the codebase \cite{jem_code}. We test our method for both classification and image generation tasks, and compare with the original JEM method. For this sets of experiments on more complex natural images, we find that it is more important to design an appropriate noise distribution. Our {\em empirical distribution + noise} method does not work well in this case, partially due to the fact that the empirical data are too sparsely located at the complex data manifold. Consequently, the estimated density can induce too large variance, leading to slow convergence. We mitigate the limitation by introducing a few MCMC steps (around 10) by directly starting from a pool of updating negative samples, and adopt the kernel density estimation to approximate the density values (as they do not have closed forms after MCMC steps). We find this is essential for learning to generate high-quality real images. Note our adopted method still has limitations, for example, it still requires MCMC steps that we want to avoid, and the estimated densities are not accurate enough. We thus leave designing a good noise estimation for more advanced image generation as interesting future work. Following the instruction in the codebase, we  achieve a classification accuracy of 90.72\% with a loss of 0.34, compared to an accuracy of 89.38\% and a loss of 0.39 from the JEM. Regarding the generated image quality, we measure it with the Inception Score (IS) and Fréchet Inception Distance~(FID) score \cite{Seitzer2020FID}, based on 10k generated images. Our method obtains an improved FID score from 54.76 to 51.77, while maintaining comparable IS scores, {\it i.e.}, 4.79 (ours) versus 4.83 (JEM). Note that these numbers are a little worse than what were reported in the JEM paper, mostly due to different experimental settings\footnote{As confessed by the JEM author, their reported results need heavily manual tuning, which is not easily reproduced (see issue \#7 in the codebase). The results are also far from the current state-of-the-arts. However, our goal is only to demonstrate the effectiveness of our  method compared to the standard EBM training under a  simple setting. Thus, our results are reasonable and the comparison is fair. }. Some generated samples from our method are plotted in Figure~\ref{fig:cifar}.
\begin{figure}[!htbp]
\vskip 0.1in
  	\centering
\includegraphics[width=0.35\linewidth]{./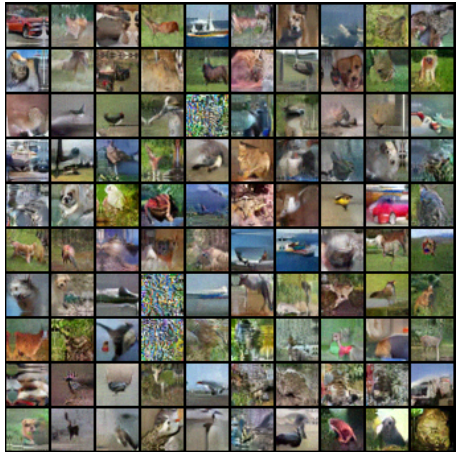}
 	\vskip -0.1in
\caption{Generated samples on the CIFAR-10 dataset. }
	\label{fig:cifar}
	\vskip -0.1in
\end{figure}
\section{Analysis}
In this section, we present the proof of all theorems.
\subsection{Proof of Theorem~\ref{thm:1}}
Here, we prove the first property of Theorem~\ref{thm:1}. The other properties ---  asymptotically normal and  asymptotically optimal can be obtained according to the property of MLE~(for example, see Chapter 9 in ~\citet{Wasserman}).

%We now prove the first property. Note that we have: $\int p_{0}(\x;\theta)  d\x = \frac{p_0(\x;\theta)}{p(\x;\theta)}$.
%\begin{align*}
%\nabla \LL(\theta) &=  - \int \frac{ p_\text{data}(x) \nabla p_{0}(\x;\theta)}{p_{0}(\x;\theta)} d\x  + \frac{\int \nabla p_{0}(\x;\theta)  d\x}{\int p_{0}(\x;\theta)  d\x} \\
%&= - \int \frac{ p_\text{data}(x) \nabla p_{0}(\x;\theta)}{p_{0}(\x;\theta)} d\x  + \int \frac{ p(\x;\theta) \nabla p_{0}(\x;\theta)  }{p_0(\x;\theta)} d\x \\
%&=  \int \left[ p(\x;\theta)-p_\text{data}(x)\right] \frac{  \nabla p_{0}(\x;\theta)}{p_{0}(\x;\theta)} d\x.
%\end{align*}
%To ensure $\nabla \LL(\theta) = 0$, we should have $p(\x;\theta) = p_\text{data}(\x)$ for any $\x$.

We note that the minimizing of $\LL(\theta) = -\frac{1}{n}\sum_{i=1}^n \log p(\x_i;\theta)$ is equivalent to minimizing
\begin{align*}
M_{n}(\theta) = -\frac{1}{n}\sum_{i=1}^n \log \frac{p(\x_i;\theta)}{p(\x_i;\theta^{*})} = \frac{1}{n}\sum_{i=1}^n \log \frac{p(\x_i;\theta^{*})}{p(\x_i;\theta)}.
\end{align*}

By the law of large numbers, as the sample size increases, $M_{n}(\theta)$ converges to
\begin{align*}
M(\theta) = \E_{\x \sim p_\text{data}}\left[ \log \frac{p(\x;\theta^{*})}{p(\x;\theta)} \right] = \int p(\x;\theta^{*})  \log \frac{p(\x;\theta^{*})}{p(\x;\theta)}   d\x  =D(\theta^{*},\theta),
\end{align*}
where $D(\theta^{*},\theta)$ means the Kullback-Leibler distance between $p(\x;\theta^{*})$ and $p(\x;\theta)$. Hence, we have $M_n({\theta}) \stackrel{P}{\rightarrow} D(\theta^{*},\theta)$. 

Then, we prove the first property --- consistent is satisfied under the following conditions.
\begin{align*}
1) \sup_{\theta} \left|M_n(\theta)-M(\theta)\right| \stackrel{P}{\rightarrow} 0; \quad \quad 2) \sup _{\theta:\left|\theta-\theta^{*}\right| \geq \epsilon} M(\theta^{*})<M\left(\theta\right), \text{ for every } \epsilon>0.
\end{align*}

Since $\widehat{\theta}$ minimizes $M_n(\theta)$, we have $M_n(\widehat{\theta}) \leq M_n(\theta^{*})$ and 
\begin{align*}
M(\widehat{\theta}) -  M(\theta^{*}) &= M_n(\theta^{*})  -  M(\theta^{*}) + M(\widehat{\theta}) - M_n(\theta^{*}) \\
&\leq M_n(\theta^{*})  -  M(\theta^{*}) + M(\widehat{\theta}) - M_n(\widehat{\theta}) \\
&\leq M_n(\theta^{*})  -  M(\theta^{*}) + \sup_{\theta}\left| M(\theta) - M_n(\theta)  \right| \\
&\stackrel{P}{\rightarrow} 0.
\end{align*}
It follows that, for any $\delta > 0$,
\begin{align*}
\P\left(M(\theta^{*}) < M(\widehat{\theta}) - \delta \right) \rightarrow 0.
\end{align*}
Select any $\epsilon > 0$. By condition 2), there exists $\delta > 0$ such that $\left|\theta-\theta^{*} \right|\geq \epsilon$ implies that $M(\theta^{*}) < M(\widehat{\theta}) -\delta$. Hence,
\begin{align*}
\P\left(\left| \widehat{\theta} - \theta^{*} \right| > \epsilon \right) \leq \P\left(M(\theta^{*}) < M(\widehat{\theta}) - \delta \right) \rightarrow 0.
\end{align*}

\subsection{Proof of Theorem~2}

First, we prove the following supporting lemmas. We simplify the notation $g(\theta; \widetilde{\z})$ as $\widetilde{g}(\theta_{t})$ and $h(\theta; \z)$ as $\widetilde{h}(\theta_{t})$.

\begin{lemma}\label{lem:1} We show that the estimation errors of $\u_t$ and $\v_t$ would be reduced over time. 
\begin{align*}
    \mathbb{E}\left[ \left\| \u_{t+1} - g(\theta_{t+1}) \right\|^2 \right] &\leq (1- \gamma_{t+1}) \left \| \u_t - g(\theta_t) \right \| ^2  + \frac{C^2\eta_t^2}{r_{t+1}}  \left\| \v_t \right \|^2 + \gamma_{t+1}^2 \sigma_g^2; \\
    \mathbb{E}\left[ \left\| \v_{t+1} - \nabla \LL(\theta_{t+1}) \right\|^2 \right] &\leq  (1-\beta_{t+1}) \mathbb{E}\left[\left\| \v_t- \nabla \LL(\theta_{t})\right\|^2 \right] + \frac{2L_0^2+ 18 C^4L^2}{\beta_{t+1}}\eta_t^2\left\| \v_t \right \|^2    \\ & + 18\beta_{t+1}C^2L^2 \left\| \u_{t} - g(\theta_{t}) \right \|^2 + 2 \beta_{t+1}^2 \zeta_h^2 + 6C^2L^2\beta_{t+1}^2\sigma_g^2 + 2 \beta_{t+1}^2 C^2\zeta_g^2.
\end{align*}
\end{lemma}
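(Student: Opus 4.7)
The plan is a standard error-recursion analysis for momentum-style stochastic estimators. Both $\u_t$ and $\v_t$ are convex combinations of their past values and a fresh stochastic sample, so each error at step $t+1$ should obey a contract-plus-drift-plus-noise recursion. The three ingredients I will use repeatedly are (i) a decomposition that isolates the conditionally zero-mean noise so its cross terms with the rest vanish after taking expectation; (ii) Young's inequality $\|a+b\|^2 \le (1+\alpha)\|a\|^2 + (1+1/\alpha)\|b\|^2$ tuned with $\alpha = \gamma_{t+1}/(1-\gamma_{t+1})$ (respectively $\beta_{t+1}/(1-\beta_{t+1})$) so that the surviving coefficient in front of the old error is exactly $(1-\gamma_{t+1})$ (respectively $(1-\beta_{t+1})$); and (iii) the Lipschitz and smoothness properties from Assumption~\ref{asm:1} together with $\theta_{t+1}-\theta_t = -\eta_t\v_t$ to turn parameter drift into the factor $\eta_t^2\|\v_t\|^2$.

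For the $\u$-error, I would first write $\u_{t+1} - g(\theta_{t+1}) = (1-\gamma_{t+1})(\u_t - g(\theta_{t+1})) + \gamma_{t+1}(g(\theta_{t+1};\widetilde{\z}_{t+1}) - g(\theta_{t+1}))$ and take conditional expectation. The cross term vanishes because $g(\theta_{t+1};\widetilde{\z}_{t+1})$ is an unbiased estimator of $g(\theta_{t+1})$, so only square contributions survive, and Assumption~\ref{asm:2} gives the $\gamma_{t+1}^2\sigma_g^2$ term. I then split $\|\u_t - g(\theta_{t+1})\|^2$ by the Young inequality above, tuning $\alpha$ so that $(1-\gamma_{t+1})^2(1+\alpha) = (1-\gamma_{t+1})$; this leaves $(1-\gamma_{t+1})^2/\gamma_{t+1}\le 1/\gamma_{t+1}$ in front of $\|g(\theta_t)-g(\theta_{t+1})\|^2$, which by Lipschitz continuity of $g$ (constant $C$) and the weight update equals $C^2\eta_t^2\|\v_t\|^2/\gamma_{t+1}$, matching the first claim (the $r_{t+1}$ in the statement appears to be $\gamma_{t+1}$).

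The $\v$-error bound follows the same template but carries an extra compositional bias because the denominator $\u_{t+1}$ in the estimator is itself an approximation of $g(\theta_{t+1})$. I would denote the stochastic gradient by $G_{t+1} = -\nabla h(\theta_{t+1};\z_{t+1}) + \nabla g(\theta_{t+1};\widetilde{\z}_{t+1})/\u_{t+1}$ and split $G_{t+1} - \nabla\LL(\theta_{t+1}) = \mathcal{E}_\xi + \mathcal{E}_u$, where $\mathcal{E}_\xi$ is the conditionally zero-mean sampling noise and $\mathcal{E}_u = \nabla g(\theta_{t+1})(1/\u_{t+1} - 1/g(\theta_{t+1}))$ is the deterministic bias. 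Applying Young as before with $\alpha = \beta_{t+1}/(1-\beta_{t+1})$ yields the $(1-\beta_{t+1})\|\v_t - \nabla\LL(\theta_t)\|^2$ contraction; an $\eta_t^2\|\v_t\|^2/\beta_{t+1}$ contribution from smoothness of $\LL$, whose constant mixes $L_0$ from $h$ with $L$ and $C$ from $f\circ g$ to produce the coefficient $2L_0^2+18C^4L^2$; and a $2\beta_{t+1}\|\mathcal{E}_u\|^2$ term. The variance $\beta_{t+1}^2\mathbb{E}\|\mathcal{E}_\xi\|^2$ splits into $2\beta_{t+1}^2\zeta_h^2 + 2\beta_{t+1}^2 C^2\zeta_g^2$, the $C^2$ arising from the bound $1/\u_{t+1}\le C$ guaranteed by $p_0/q\ge c>0$. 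Finally, using smoothness of $f=\log$ on the safe region I would bound $\|\mathcal{E}_u\|^2 \le C^2L^2\|\u_{t+1}-g(\theta_{t+1})\|^2$ and substitute the first part of the lemma, which produces the $18\beta_{t+1}C^2L^2\|\u_t-g(\theta_t)\|^2$ and $6C^2L^2\beta_{t+1}^2\sigma_g^2$ terms while the residual $\eta_t^2\|\v_t\|^2$ part gets absorbed into the coefficient $18C^4L^2/\beta_{t+1}$.

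The hard part will be the bookkeeping around the two-layer bias: the $\v$-recursion depends on $\u_{t+1}$, which forces me to plug the first bound back into the analysis of the second and inflate several constants (this is where the round numbers $18$ and $6$ arise, from splittings like $\|a+b+c\|^2\le 3\|a\|^2+3\|b\|^2+3\|c\|^2$ together with double applications of Young's inequality). A second subtle point is that boundedness of $1/\u_{t+1}$ along the whole trajectory must be maintained; this is guaranteed by the condition $p_0(\widetilde{\z};\theta)/q(\widetilde{\z})\ge c>0$ invoked in the remark on Assumption~\ref{asm:1}, without which the $C^2$ in front of $\zeta_g^2$ and the estimate of $\|\mathcal{E}_u\|^2$ would both blow up. Once these constants are carefully organized the two inequalities collapse into the stated form.
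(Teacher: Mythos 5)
Your proposal matches the paper's proof essentially step for step: the same zero-mean decomposition plus Young's inequality tuned to leave a $(1-\gamma_{t+1})$ (resp.\ $(1-\beta_{t+1})$) contraction for the $\u$-recursion, and for the $\v$-recursion the same four-way split into contraction, drift via $L_0$-smoothness of $\LL$, the compositional bias $\nabla g(\theta_{t+1})\bigl(\nabla f(\u_{t+1})-\nabla f(g(\theta_{t+1}))\bigr)$ (identical to your $1/\u_{t+1}-1/g(\theta_{t+1})$ form since $f=\log$), and the sampling noise, followed by substituting the $\u$-bound to produce the constants $18$ and $6$. You also correctly diagnose that $r_{t+1}$ in the statement is $\gamma_{t+1}$.
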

\begin{proof}
Consider the update $\u_{t+1} = (1-\gamma_{t+1}) \u_t + \gamma_{t+1}\widetilde{g}(\theta_{t+1})$, $\theta_{t+1} = \theta_t - \eta_t \v_t$, and set $p = \frac{\gamma_{t+1}}{1-\gamma_{t+1}}$.
\begin{align*}
&\mathbb{E}\left[ \left\| \u_{t+1} - g(\theta_{t+1}) \right\|^2 \right]  \\
=&\mathbb{E}\left[ \left\|(1-\gamma_{t+1}) (\u_t- g(\theta_{t+1})) + \gamma_{t+1}(\widetilde{g}(\theta_{t+1}) - g(\theta_{t+1})) \right\|^2 \right] \\ 
= & \left\|(1-\gamma_{t+1}) (\u_t- g(\theta_{t+1})) \right \|^2 + \mathbb{E}\left[ \left \| \gamma_{t+1}(\widetilde{g}(\theta_{t+1}) - g(\theta_{t+1})) \right\|^2 \right.  \\
& \left. \quad \quad + \gamma_{t+1}(1-r_{t+1}) (\u_t- g(\theta_{t+1}))(\widetilde{g}(\theta_{t+1}) - g(\theta_{t+1})) \right] \\ 
= & \left\|(1-\gamma_{t+1}) (\u_t- g(\theta_{t+1})) \right \|^2 + \gamma_{t+1}^2 \sigma_g^2 \\ 
\leq & (1-\gamma_{t+1})^2\left\| \u_{t} - g(\theta_{t+1}) \right\|^2 + \gamma_{t+1}^2\sigma_g^2 \\ 
\leq & (1- \gamma_{t+1})^2(1+p) \left \|   \u_{t} - g(\theta_{t}) \right \|^2 + (1-\gamma_{t+1})^2(1+\frac{1}{p})\left \| g(\theta_t) - g(\theta_{t+1}) \right \|^2 +  \gamma_{t+1}^2\sigma_g^2   \\ 
= &  (1- \gamma_{t+1}) \left \| \u_t - g(\theta_t) \right \| ^2  + \frac{C^2\eta_t^2}{r_{t+1}}  \left\| \v_t \right \|^2 + \gamma_{t+1}^2 \sigma_g^2,
\end{align*}
where the last inequality is due to the fact that $(a+b)^2 \leq (1+p)a^2 + (1+\frac{1}{p})b^2$.

According to Assumption~\ref{asm:1}, we know that $\LL(\cdot)$ is also $C_0$-Lipchitz continuous and $L_0$-smooth, where $C_0 = \left(C+1\right)C$ and $L_0 = \left( C^2+C+1\right)L$.
By setting $p = \frac{\beta_{t+1}}{1-\beta_{t+1}}$, we have:

\begin{align*}
&\mathbb{E}\left[ \left\| \v_{t+1} - \nabla \LL(\theta_{t+1}) \right\|^2 \right]\\ 
&=\mathbb{E}\left[ \left\|(1-\beta_{t+1}) (\v_t- \nabla \LL(\theta_{t+1})) + \beta_{t+1}[\nabla \widetilde{h}(\theta_{t+1}) + \nabla f(\u_{t+1})\nabla \widetilde{g}(\theta_{t+1}) -\nabla h(\theta_{t+1})  - \nabla f(g(\theta_{t+1}))\nabla g(\theta_{t+1})] \right\|^2 \right] \\ 
& =\mathbb{E}\left[\left\| (1-\beta_{t+1})  (\v_t- \nabla \LL(\theta_{t})) + (1-\beta_{t+1}) (\nabla \LL(\theta_t) - \nabla \LL(\theta_{t+1}))   + \beta_{t+1}\nabla g(\theta_{t+1})(\nabla f(\u_{t+1}) -\nabla f(g(\theta_{t+1}) ))  \right.\right. \\ & \left. \left. \quad + \beta_{t+1}[\nabla \widetilde{h}(\theta_{t+1}) +\nabla f(\u_{t+1})\nabla \widetilde{g}(\theta_{t+1}) -\nabla h(\theta_{t+1})  - \nabla f(\u_{t+1})\nabla g(\theta_{t+1})] \right\|^2 \right] \\
& \leq \mathbb{E}\left[ \left\|(1-\beta_{t+1})  (\v_t- \nabla \LL(\theta_{t})) + (1-\beta_{t+1}) (\nabla \LL(\theta_t) - \nabla \LL(\theta_{t+1}))  + \beta_{t+1}\nabla g(\theta_{t+1})(\nabla f(\u_{t+1}) -\nabla f(g(\theta_{t+1}) )) \right\|^2  \right] \\ &  \quad + \E \left[2\beta_{t+1}^2 \left\| \nabla \widetilde{h}(\theta_{t+1}) - \nabla h(\theta_{t+1}) \right \|^2+ 2\beta_{t+1}^2 \left \| \nabla f(\u_{t+1})(\nabla \widetilde{g}(\theta_{t+1}) - \nabla g(\theta_{t+1}) )\right \|^2  \right] \\
& \leq  \mathbb{E}\left[ (1-\beta_{t+1})^2 (1+ p ) \left\| \v_t- \nabla \LL(\theta_{t})\right\|^2 +2 (1-\beta_{t+1})^2(1+\frac{1}{p})\left \| \nabla \LL(\theta_t) - \nabla \LL(\theta_{t+1}) \right \|^2 \right] \\ & \quad + 2\beta_{t+1}^2 (1+\frac{1}{p})C^2 \left\|\nabla f(\u_{t+1})- \nabla f(g(\theta_{t+1})) \right \|^2 + 2 \beta_{t+1}^2 \zeta_h^2 + 2 \beta_{t+1}^2 C^2 \zeta_g^2 \\
& \leq  (1-\beta_{t+1}) \mathbb{E}\left[\left\| \v_t- \nabla \LL(\theta_{t})\right\|^2 \right] + \frac{2L_0^2}{\beta_{t+1}}\eta_t^2\left\| \v_t \right \|^2 + 2\beta_{t+1}C^2L^2 \left\| \u_{t+1} - g(\theta_{t+1}) \right \|^2 + 2 \beta_{t+1}^2 \zeta_h^2 + 2 \beta_{t+1}^2 C^2 \zeta_g^2 \\ 
& \ \leq  (1-\beta_{t+1}) \mathbb{E}\left[\left\| \v_t- \nabla \LL(\theta_{t})\right\|^2 \right] + \frac{2L_0^2+ 18 C^4L^2}{\beta_{t+1}}\eta_t^2\left\| \v_t \right \|^2 + 18\beta_{t+1}C^2L^2 \left\| \u_{t} - g(\theta_{t}) \right \|^2 \\ & \quad + 2 \beta_{t+1}^2 \zeta_h^2 + 6C^2L^2\beta_{t+1}^2\sigma_g^2 + 2 \beta_{t+1}^2 C^2\zeta_g^2
\end{align*}
\end{proof}
To finish the proof, we also need to introduce the following lemma.
\begin{lemma} \label{lem:x}\cite{guo2021stochastic} Suppose function $\LL$ is ${L_0}$-smooth and consider the update $\theta_{t+1}:=\theta_{t}-\eta_t \v_{t}$. With $\eta_t L_0 \leq \frac{1}{2}$, we have: 
\begin{equation*}
\begin{split}
 \LL(\theta_{t+1}) \leq \LL(\theta_t) - \frac{\eta_t}{2} \|\nabla \LL(\theta_t)\|^2 + \frac{\eta_t}{2} \|\v_{t} - \nabla \LL(\theta_t)\|^2 - \frac{\eta_t}{4} \Norm{\v_t}^2
\end{split}
\end{equation*}
\end{lemma}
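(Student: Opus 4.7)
The plan is to prove this descent lemma by combining the standard smoothness quadratic upper bound on $\LL$ with a careful algebraic decomposition of the inner product term into an error term (between $\v_t$ and $\nabla \LL(\theta_t)$) and the desired squared-norm terms. This style of descent lemma is the workhorse for analyzing biased stochastic methods like NASA where the update direction $\v_t$ is not an unbiased estimator of $\nabla \LL(\theta_t)$, so the proof needs to retain $\|\v_t - \nabla \LL(\theta_t)\|^2$ as a residual that will be controlled later via Lemma~\ref{lem:1}.

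First, I would invoke $L_0$-smoothness of $\LL$ together with the update $\theta_{t+1} = \theta_t - \eta_t \v_t$ to obtain
\begin{equation*}
\LL(\theta_{t+1}) \leq \LL(\theta_t) - \eta_t \langle \nabla \LL(\theta_t), \v_t \rangle + \frac{L_0 \eta_t^2}{2} \|\v_t\|^2.
\end{equation*}
Next, I would apply the polarization identity $-\langle a,b\rangle = \tfrac{1}{2}\|a-b\|^2 - \tfrac{1}{2}\|a\|^2 - \tfrac{1}{2}\|b\|^2$ with $a = \nabla \LL(\theta_t)$ and $b = \v_t$, which cleanly produces the three squared-norm terms appearing in the target inequality: the error term $\|\v_t - \nabla \LL(\theta_t)\|^2$, the stationarity measure $\|\nabla \LL(\theta_t)\|^2$, and a $-\tfrac{\eta_t}{2}\|\v_t\|^2$ term that absorbs part of the smoothness quadratic.

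Finally, I would collect the $\|\v_t\|^2$ coefficients: combining the $-\tfrac{\eta_t}{2}\|\v_t\|^2$ from the polarization step with the $+\tfrac{L_0 \eta_t^2}{2}\|\v_t\|^2$ from smoothness yields $-\tfrac{\eta_t}{2}(1 - L_0 \eta_t)\|\v_t\|^2$, and the stepsize condition $\eta_t L_0 \leq 1/2$ then gives $1 - L_0\eta_t \geq 1/2$, producing the required coefficient $-\tfrac{\eta_t}{4}\|\v_t\|^2$. Putting these pieces together yields the claimed bound.

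The main ``obstacle'' is really only strategic rather than technical: one must recognize that the natural decomposition is around $\v_t - \nabla \LL(\theta_t)$ (not around, say, $\v_t - \E[\v_t]$), because the downstream analysis in Theorem~\ref{thm:2} needs to feed the error-term bound from Lemma~\ref{lem:1} into this descent inequality to close the recursion. Keeping the $-\tfrac{\eta_t}{4}\|\v_t\|^2$ term with a strictly negative coefficient is essential, since Lemma~\ref{lem:1} introduces $O(\eta_t^2 \|\v_t\|^2 / \beta_{t+1})$ terms into the error recursions, and those will need to be dominated by this negative term under the chosen scaling $\beta_t,\gamma_t,\eta_t = O(\epsilon^2)$.
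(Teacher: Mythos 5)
Your proof is correct: the smoothness upper bound plus the polarization identity $-\langle a,b\rangle = \tfrac{1}{2}\|a-b\|^2 - \tfrac{1}{2}\|a\|^2 - \tfrac{1}{2}\|b\|^2$ yields the coefficient $-\tfrac{\eta_t}{2}(1-L_0\eta_t) \le -\tfrac{\eta_t}{4}$ on $\|\v_t\|^2$ under $\eta_t L_0 \le \tfrac12$, exactly as claimed. The paper itself gives no proof of this lemma (it is imported by citation from \citet{guo2021stochastic}), and your argument is precisely the standard derivation of that cited result, so there is nothing further to reconcile.
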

According to above lemmas, we have:
\begin{align*}
    \sum_{t=1}^T \gamma_{t+1} \mathbb{E}\left[ \left\| \u_{t} - g(\theta_{t}) \right\|^2 \right] &\leq  \left \| \u_1 - g(\theta_1) \right \| ^2  + \sum_{t=1}^T\frac{C^2\eta_t^2}{r_{t+1}}  \left\| \v_t \right \|^2 + \sum_{t=1}^T\gamma_{t+1}^2 \sigma_g^2 \\
    \sum_{t=1}^T \beta_{t+1} \mathbb{E}\left[ \left\| \v_{t} - \nabla \LL(\theta_{t}) \right\|^2 \right] &\leq  \mathbb{E}\left[\left\| \v_1- \nabla \LL(\theta_{1})\right\|^2 \right] + \sum_{t=1}^T \frac{2L_0^2+ 18 C^4L^2}{\beta_{t+1}}\eta_t^2\left\| \v_t \right \|^2    \\ & \quad  + 18C^2L^2\sum_{t=1}^T \beta_{t+1} \left\| \u_{t} - g(\theta_{t}) \right \|^2 + \sum_{t=1}^T  \beta_{t+1}^2 \left( 2 \zeta_h^2 + 6C^2L^2\sigma_g^2 + 2 C^2\zeta_g^2 \right) \\
    \sum_{t=1}^{T} \frac{\eta_t}{2} \|\nabla \LL(\theta_t)\|^2  & \leq \LL(\theta_1) + \sum_{t=1}^{T}\frac{\eta_t}{2} \|\v_{t} - \nabla \LL(\theta_t)\|^2 - \sum_{t=1}^{T} \frac{\eta_t}{4} \Norm{\v_t}^2
\end{align*}
Summing up, we have:
\begin{align*}
    &\sum_{t=1}^T \gamma_{t+1}  \left\| \u_{t} - g(\theta_{t}) \right\|^2  +\beta_{t+1}  \left\| \v_{t} - \nabla \LL(\theta_{t}) \right\|^2+ \frac{\eta_t}{2} \|\nabla \LL(\theta_t)\|^2 \\
    \leq & L_1  + \sum_{t=1}^T \left(\frac{C^2 \eta_t} {r_{t+1}} + \frac{(2L_0^2+ 18 C^4L^2) \eta_t}{\beta_{t+1}} -\frac{1}{4}\right) \eta_t \left\| \v_t \right \|^2 + L_1 \sum_{t=1}^T \left(\gamma_{t+1}^2 + \beta_{t+1}^2 \right)\\
    & \quad  + 18C^2L^2\sum_{t=1}^T \beta_{t+1} \left\| \u_{t} - g(\theta_{t}) \right \|^2 + \sum_{t=1}^{T}\frac{\eta_t}{2} \|\v_{t} - \nabla \LL(\theta_t)\|^2, 
\end{align*}
where $L_1 = \max \left\{\left \| \u_1 - g(\theta_1) \right \| ^2 + \left\| \v_1- \nabla \LL(\theta_{1})\right\|^2 + \LL(\theta_1), \sigma^2_g,  2 \zeta_h^2 + 6C^2L^2\sigma_g^2 + 2 C^2\zeta_g^2  \right\}$

By setting $\beta_{t+1} = L_2 \eta_t$ and $\gamma_{t+1} = L_3 \beta_t$, where $L_2 = (16L_0^2+144C^4L^2 +1)$ and $L_3=36C^2(L^2+1)$, we have:
\begin{align*}
    \sum_{t=1}^T \frac{L_2L_3}{2}\eta_{t}  \left\| \u_{t} - g(\theta_{t}) \right\|^2  + \frac{L_2}{2} \eta_{t} \left\| \v_{t} - \nabla \LL(\theta_{t}) \right\|^2+ \frac{\eta_t}{2} \|\nabla \LL(\theta_t)\|^2 
    \leq  L_1 + L_1(L_2^2+L_2^2L_3^2) \sum_{t=1}^T \eta_{t+1}^2 
\end{align*}
According to the analysis of previous works, such as NASA~\cite{Ghadimi2020AST}, since $\sum_{t=1}^{T} \eta_{t+1} = \infty$ a.s., and $\sum_{t=1}^{T} \eta_{t+1}^2 < \infty$, we would have $\u_t \rightarrow g(\theta_t)$, $\v_t \rightarrow \nabla \LL(\theta_t)$, $\|\nabla \LL(\theta_t)\|^2 \rightarrow  0$, otherwise the left part of above equation tends to infinity, while the right part is not.

\subsection{Proof of Theorem~\ref{thm:2}}
With supporting lemmas in previous theorem, by setting $\beta_t = \gamma_t =\beta, \eta_t = \eta$, we have:
\begin{align*}
\mathbb{E}\left[\beta \sum_{t=1}^T\left\| \v_{t} - \nabla \LL(\theta_t) \right\|^2 \right] \leq & \mathbb{E}\left[  \left \| \v_1 - \nabla \LL(\theta_1) \right \|^2 \right] + \frac{2L_0^2 + 18C^4L^2}{\beta} \sum_{t=1}^T \eta^2 \left\| \v_t \right \|^2 \\ & + 18\beta C^2L^2\sum_{t=1}^T \| \u_t - g(\theta_t)\|^2  + (2 \zeta_h^2+ 6C^2L^2\sigma_g^2 + 2  C^2 \zeta_g^2) \beta^2T,
\end{align*}
where $\mathbb{E}\left[  \left \| \v_1 - \nabla \LL(\theta_1) \right \|^2 \right] \leq 3\zeta_h^2+3C^2\zeta_g^2 + 3C^2L^2\sigma_g^2$.

Similarly, for the term $\| \u_t - g(\theta_t)\|^2$, we have:
\begin{align*}
\sum_{t=1}^T \| \u_t - g(\theta_t)\|^2 & \leq \frac{1}{\beta} \left \| \u_1 - g(\theta_1) \right \|^2 + \frac{C^2\eta^2}{\beta^2} \sum_{t=1}^T \left\| \v_t \right \|^2+\beta \sigma_g^2 T \leq \frac{\sigma_g}{\beta} + \frac{C^2\eta^2}{\beta^2}\sum_{t=1}^T \left\| \v_t \right \|^2 + \beta \sigma_g^2 T.
\end{align*}

So, we have
\begin{align*}
&\mathbb{E}\left[\sum_{t=1}^T\left\| \v_{t} - \nabla \LL(\theta_t) \right\|^2 \right] \\
& \leq \frac{3\zeta_h^2+3C^2\zeta_g^2 +21C^2L^2\sigma_g^2}{\beta} + \frac{2L_0^2+ 36C^4L^2}{\beta^2} \eta^2 \sum_{t=1}^T \left\| \v_t \right \|^2  + (2\zeta_h^2  + 2C^2\zeta_g^2 + 24C^2L^2\sigma_g^2)\beta T
\\& = \frac{L_2}{\beta} + \frac{L_4}{\beta^2}\eta^2 \sum_{t=1}^T \left\| \v_t \right \|^2 + L_3\beta T,
\end{align*}
where $L_2$, $L_3$, $L_4$ is the constant.

Define the $\beta = \min\{\frac{\epsilon^2}{3L_3},1\},\eta = \min \{ \frac{\epsilon^2}{3\sqrt{2L_4}L_3}, \frac{1}{2L_0}, \frac{\beta}{\sqrt{2 L_4}} \}, T =\max \{\frac{L_2}{\beta \epsilon^2}, \frac{2\LL(\theta_1)}{\eta \epsilon^2} \}$. We can have:

\begin{align*}
\mathbb{E}\left[\frac{1}{T} \sum_{t=1}^T \| \nabla \LL(\theta_t) \|^2\right] & \leq \frac{2\LL(\theta_1)}{\eta T} + \frac{1}{T}\mathbb{E}\left[\sum_{t=1}^T\left\| \v_{t} - \nabla \LL(\theta_t) \right\|^2 \right] - \frac{1}{2T} \sum_{t=1}^T \left\| \v_t \right \|^2 \\
& \leq \frac{2\LL(\theta_1)}{\eta T} + \frac{L_2}{\beta T} + \left(\frac{L_4\eta^2}{\beta^2T} - \frac{1}{2T}\right) \sum_{t=1}^T \left\| \v_t \right \|^2 + L_3\beta \\ & \leq \epsilon^2.
\end{align*}
To hide the constant, we finally have $T=\max\left\{\mathcal{O}\left(\frac{1}{\epsilon^{2}}\right), \mathcal{O}\left(\frac{\zeta_h^2+\zeta_g^2 +\sigma_g^2}{\epsilon^{4}}\right)\right\}$. We can easily extend the results to the case of adaptive learning rates in the style of Adam, following the analysis of ~\citet{guo2021stochastic}.

%\subsection{Proof of Lemma~\ref{lem:2}}

\subsection{Proof of Theorem~\ref{thm:3}}
If $\LL(\theta)$ is $\mu$-PL, which is $2\mu(\LL(\theta) - \LL^*) \leq \| \nabla \LL(\theta)\|^2$, we can further improve the rate.

Define that $\gamma_{t+1} = \beta_{t+1}, \Gamma_t = \LL(\theta_t) - \LL^{*} + A \|\v_t - \nabla \LL(\theta_t) \|^2 + B \|\u_t - g(\theta_t) \|^2$, and then we have:

\begin{align*}
 &\mathbb{E}\left[ \Gamma_{t+1} \right] \\
 \leq & (1-\mu\eta_t) \mathbb{E}\left[ \LL(\theta_{t}) - \LL^{*} \right]+ \frac{\eta_t}{2} \| \v_t-\nabla \LL(\theta_t) \|^2 - \frac{\eta_t}{4} \| \v_t \|^2 + A(1-\beta_{t+1})\|\v_t - \nabla \LL(\theta_t) \|^2 \\ 
 & \quad + \frac{2L_0^2+18C^4L^2}{\beta_{t+1}}A \eta_t^2 \|\v_t \|^2 + 18\beta_{t+1}C^2L^2A \|\u_t - g(\theta_t) \|^2  + A(2\zeta_h^2 + 6C^2L^2\sigma_g^2 + 2C^2\zeta_g^2)\beta_{t+1}^2 \\ 
 & \quad + B(1-\beta_{t+1}) \| \u_t - g(\theta_t)\|^2 + \frac{BC^2\eta_t^2}{\beta_{t+1}}\| \v_{t} \|^2 + B \beta_{t+1}^2\sigma_g^2
\end{align*}

Setting $B = 36C^2L^2A, \beta_{t+1} = \max \{2\mu , \frac{1}{A}, 8A(2L_0^2 + 36C^4L^2) \}\eta_t$, we have:

\begin{align*}
 \mathbb{E}\left[ \Gamma_{t+1} \right] & \leq (1-\mu\eta_t) \mathbb{E}\left[ \Gamma_t \right] + A\beta_{t+1}^2 (2\zeta_h^2 + 42C^2L^2\sigma_g^2 + 2C^2\zeta_g^2)
\end{align*}

Define $L' = 2\zeta_h^2 + 42C^2L^2\sigma_g^2 + 2C^2\zeta_g^2, L'' = \max\{8(2L_0^2 + 54C^4L^2),2\}$, we have:
\begin{align*}
    \mathbb{E}\left[ \Gamma_{t+1} \right]  \leq (1-\mu\eta_t) \mathbb{E}\left[ \Gamma_t \right] + A\beta_{t+1}^2L'.
\end{align*}

If $\mu \leq 1$, set $A=1, \beta_{t+1} = L'' \eta_t , 1 -\mu \eta_t = \frac{\eta_t^2}{\eta_{t-1}^2}$, which is $\frac{1}{\eta_t} = \frac{\mu}{2} + \sqrt{\frac{\mu^2}{4} + \frac{1}{\eta_{t-1}^2}}$.

\begin{align*}
 \mathbb{E}\left[ \Gamma_{T+1} \right] & \leq (1-\mu\eta_T) \mathbb{E}\left[ \Gamma_T \right] + L' (L'')^2\eta_T^2
 \\ & \leq \frac{\eta_T}{\eta_{T-1}}\mathbb{E}\left[ \Gamma_{T} \right] + L' (L'')^2\eta_T^2
 \\ &\leq \frac{\eta_T}{\eta_{T-2}}\mathbb{E}\left[ \Gamma_{T} \right] + L' (L'')^2\eta_T^2*2
  \\ &\leq \frac{\eta_T}{\eta_{0}}\mathbb{E}\left[ \Gamma_{T} \right] + L' (L'')^2\eta_T^2 *T
\end{align*}
We have $\frac{1}{\eta_t} \geq \frac{\mu}{2} + \frac{1}{\eta_{t-1}}$, so we have $\frac{1}{\eta_T} \geq \frac{\mu T}{2} + \frac{1}{\eta_{0}}$, where $\eta_0 > 0$ Finally, we have $\eta_T \leq \frac{2}{\mu T}$. 

So $ \mathbb{E}\left[ \Gamma_{T+1} \right] \leq  \frac{4}{\eta_0^2 \mu^2 T^2}\mathbb{E}\left[ \Gamma_{1} \right] + L'(L'')^2\frac{4}{\mu^2T} \leq \epsilon$. The complexity is $T = \max \left\{ \O( \frac{1}{\mu \eta_0 \sqrt{\epsilon}}),\O(\frac{\zeta_h^2+\zeta_g^2+\sigma_g^2}{\mu^2 \epsilon}) \right\}$.

If $\mu \geq 1$, set $A = \frac{1}{\mu},\beta_{t+1} = L''\mu \eta_t$, and the complexity is $T = \max \left\{ \O( \frac{1}{\mu \eta_0 \sqrt{\epsilon}}),\O(\frac{\zeta_h^2+\zeta_g^2+\sigma_g^2}{\mu \epsilon}) \right\}$.

Note that existing methods usually employ two-loop stage-wise designed algorithms to analyze for PL condition~\cite{pmlr-v162-wang22ak, jiang2022multiblocksingleprobe}.

\subsection{Proof of Lemma~\ref{lem:3}}

We analyze the variance $\sigma_g^2$ of our estimator $g(\theta)$. By setting $\mu = g(\theta) = \int  p_0(\widetilde \z;\theta) d \widetilde\z = \E_{\widetilde\z\sim q(\widetilde\z)}\left[\frac{p_0(\widetilde \z;\theta)}{q(\widetilde\z)}\right]$, the variance of our estimator is 
\begin{align*}
\E_{\widetilde\z\sim q(\widetilde\z)} \Bigg\| \frac{p_0(\widetilde \z;\theta)}{q(\widetilde\z)} - \mu \Bigg\|^2 
=& \int \left(\frac{p_0(\widetilde \z;\theta)}{q(\widetilde\z)}-\mu\right)^{2} q(\widetilde\z) d\widetilde\z%\\
%=& \int \left[ \left(\frac{p_0(\widetilde \x;\theta)}{q(\widetilde\x)}\right)^{2}-2\mu \frac{p_0(\widetilde \x;\theta)}{q(\widetilde\x)}+\mu^2 \right] q(\widetilde\x) d\widetilde\x\\
%=& \int  \left(\frac{p_0(\widetilde \x;\theta)}{q(\widetilde\x)}\right)^{2} q(\widetilde\x) d\widetilde\x -2\mu \int p_0(\widetilde \x;\theta) d\widetilde\x+\mu^2 \\
%=& \int \left(\frac{p_0(\widetilde \x;\theta)}{q(\widetilde\x)}\right)^{2} q(\widetilde\x) d\widetilde\x -\mu^{2}\\
= \int \frac{( p_0(\widetilde \z;\theta) -\mu q(\widetilde\z))^{2}}{q(\widetilde\z)} d \widetilde\z
\end{align*}

To get the minimum variance, we should choose $q(\widetilde\z) = \frac{ p_0(\widetilde \z;\theta)}{\mu} = \frac{ p_0(\widetilde \z;\theta)}{\int  p_0(\widetilde \z;\theta) d \widetilde\z} = p_\theta (\widetilde \z)$ and the variance would be zero. 

\subsection{Proof of Proposition~\ref{prop:2}}
First, we write the MLE objective function as:
\begin{align*}
\LL(\theta) &= \E_{\x\sim p_\text{data}}\left[-\theta x + \frac{1}{2}x^2\right] + \log \int e^{\theta x - \frac{1}{2}x^2}d\x\\
& = \E_{\x\sim p_\text{data}}\left[-\theta x + \frac{1}{2}x^2\right] + \log \sqrt{2\pi} +\frac{\theta^2}{2}.
\end{align*}
So, we have:
\begin{align*}%\label{grad3}
L(\theta) -L(\theta^{*}) &= \E_{\x\sim p_\text{data}}[(\theta^{*}-\theta) \x ] +\frac{\theta^2 - {\theta^{*}}^2}{2}= (\theta^{*}-\theta) \theta^{*} +\frac{\theta^2 - {(\theta^{*})}^2}{2}=  (\theta-\theta^{*})^2.
\end{align*}

\subsection{Proof of Proposition~\ref{prop:3}}
We prove that the objective is $1$-strongly convex, which is a stronger condition than $1$-PL condition. 

Assume that $\mu = \int x p_{\theta}(x) d\x $ and we have:
\begin{align*}%\label{grad2}
&{\nabla}^2 L(\theta) \\
&=\frac{\left(\int x^2 e^{\theta x - \frac{1}{2}x^2}d\x \right) \left(\int  e^{\theta x - \frac{1}{2}x^2}d\x\right) - \left(\int x e^{\theta x - \frac{1}{2}x^2}d\x\right)^2}{\left[\int  e^{\theta x - \frac{1}{2}x^2}d\x\right]^2} \\
&=\frac{\int x^2 e^{\theta x - \frac{1}{2}x^2}d\x  }{\int  e^{\theta x - \frac{1}{2}x^2}d\x} -  \left[\frac{\int x e^{\theta x - \frac{1}{2}x^2}d\x}{\int  e^{\theta x - \frac{1}{2}x^2}d\x}\right]^2 \\
&=\int x^2 p_{\theta}(x)d\x   -  \left[\int x p_{\theta}(x) d\x \right]^2 \\
&=\int x^2 p_{\theta}(x)d\x   -  2 \mu \int x p_{\theta}(x) d\x  + \mu^2 \\
&=\int x^2 p_{\theta}(x)  -  2 \mu  x p_{\theta}(x)   + \mu^2 p_{\theta}(x) d\x \\
&=\int (x-\mu)^2 p_{\theta}(x)  d\x \\
&= \E_{p(\theta)} \left[ (x-\mu)^2 \right] \\
&= 1
\end{align*}
The last equation is due to the fact that the variance is fixed as 1 for the model.
So, the objective is $1$-strongly convex, and according to our analysis for PL condition, our method enjoys a complexity of $\mathcal{O}(\epsilon^{-1})$ due to Theorem~\ref{thm:3}.

%\section{Connection with Standard EBM Training}
%According to \eqref{v}, the stochastic gradient in our method can be written as:
%\begin{align*}
 %   \nabla_{\theta}\mathcal{L}_n(\theta) &= \nabla {h}(\theta_t;\omega_t) + \nabla f\left(\u_t\right) \nabla {g}(\theta_t;\xi_t) \\
%    &= \nabla {h}(\theta_t;\omega_t) + \frac{\nabla_{\theta} {g}(\theta_t;\xi_t)}{\u_t} \\
%    &=\nabla {h}(\theta_t;\omega_t) + \frac{\nabla_{\theta} \frac{1}{n}\sum_{\x_i\sim q}\frac{p_{0}(\x_i;\theta)}{q(\x_i)}}{\u_t} \\
%    &= - \frac{1}{n}\sum_{\x_i\sim p}  \nabla_{\theta}f(\x_i;\theta) + \frac{ \frac{1}{n}\sum_{\x_i\sim q}\frac{e^{f(\x_i;\theta)}}{q(\x_i)}\nabla_{\theta} f(\x_i; \theta)}{\u_t} \\
%    &= - \frac{1}{n}\sum_{\x_i\sim p}  \nabla_{\theta}f(\x_i;\theta) + \frac{1}{n}\sum_{\x_i\sim q}\frac{e^{f(\x_i;\theta)}}{q(\x_i)\u_t}\nabla_{\theta} f(\x_i; \theta)
%\end{align*}
%Let $w_t \triangleq \frac{e^{f(\x_i;\theta)}}{q(\x_i)\u_t}$. Then the loss function of our method is equivalent to the following form:
%\begin{align*}
%    \mathcal{L}(\theta) = \mathbb{E}_{\x \sim q}\left[w_tf(\x; \theta)\right] - \mathbb{E}_{\x \sim p}\left[f(\x; \theta)\right]~.
%\end{align*}
%Since the standard EBM optimize $\mathcal{L}^\prime \triangleq \mathbb{E}_{\x \sim q}\left[f(\x; \theta)\right] - \mathbb{E}_{\x \sim p}\left[f(\x; \theta)\right]$, which can be biased if the sample from $q$ is approximate. Our method maintains unbiased gradients by introducing the adaptively updated weights $w_t$'s.

\end{document}